\documentclass{article}

\usepackage{inputenc}
\usepackage{amsmath, dsfont}
\usepackage{times}
\usepackage[margin = 1in]{geometry}
\usepackage{color}
\usepackage{comment}
\usepackage{enumitem}
\usepackage{verbatim}
\usepackage{algorithmic}

\usepackage[colorlinks,urlcolor=blue,citecolor=blue,linkcolor=blue]{hyperref}

\usepackage{natbib}
\usepackage{amssymb, amsmath, amsthm}
\usepackage{amsfonts,mathtools,mathrsfs,mathtools,color}
\usepackage{graphicx}
\usepackage{caption}
\usepackage{subcaption}
\usepackage{bbm} 
\usepackage{algorithm} 
\usepackage{algorithmic} 
\usepackage{comment}
\usepackage{xcolor}
\usepackage{url}
\usepackage{mdframed}
\usepackage{soul}
\usepackage{blindtext}
\usepackage{titlesec}
\usepackage{multirow}
\usepackage{wrapfig}
\usepackage{booktabs}

\newcommand{\oureqref}[1]{(\ref{#1})}

\newcounter{simpletable}
\newcommand{\simpletablelabel}[1]{%
    \refstepcounter{simpletable}%
    \label{#1}%
}
\newcommand{\tabref}[1]{Table~\ref{#1}}

\newtheorem{theorem}{Theorem}

\newtheorem*{theorem*}{Theorem}
\newtheorem{lemma}{Lemma}
\newtheorem{corollary}{Corollary}

\newtheorem{assumption}{Assumption}

\newcommand{\reals}{\mathbb{R}} 

\newcommand{\E}{\mathbb{E}}

\newcommand{\loss}{\mathfrak{L}} 

\newcommand{\g}{\mathcal{G}} 
\newcommand{\q}{\mathcal{Q}} 

\newcommand{\bx}{\mathbf{x}}

\newcommand{\bv}{\mathbf{v}}

\newcommand{\bw}{\mathbf{w}}

\newcommand{\F}{\mathcal{F}}

\newcommand{\tildegt}[2]{\tilde{g}_{t,#2}^{#1}}

\newcommand{\gt}[2]{g_{t,#2}^{#1}}

\newcommand{\Gtvec}[1]{ \g_{t,#1}^{\boldsymbol{\lambda}_t} }
\newcommand{\tildeqt}[2]{\tilde{q}_{t,#2}^{#1}}

\newcommand{\qt}[2]{q_{t,#2}^{#1}}

\newcommand{\Qt}[1]{ \q_{t,#1}^{\lambda_t} }
\newcommand{\Q}[1]{ \q_{t,#1}^{\lambda} }

\title{Training with ML-labeled Data via Stochastic First- and Zeroth-Order Optimization}
\author{Maria-Eleni Sfyraki\footnote{
            University of California San Diego, 
            \texttt{msfyraki@ucsd.edu}}
\and Jun-Kun Wang\footnote{
            University of California San Diego, 
            \texttt{jkw005@ucsd.edu}}
            }
\date{}

\begin{document}
\maketitle

% REQUIRED
\begin{abstract}
Training a machine learning (ML) model is an optimization process that typically requires a set of examples with ground-truth labels beforehand. Obtaining these ground truths usually requires laborious human effort and can be costly.
The scalability of human labeling becomes a concern when there are a large number of data points that need to be labeled.
To address this practical issue, we consider an optimization setting where a subset of samples has ground-truth labels, while the remaining data points were labeled by an ML predictor, which could potentially assign incorrect labels to those points.
We propose both first-order and zeroth-order stochastic optimization methods for training on datasets consisting of a mix of true-labeled and ML-labeled samples.
We further conduct comprehensive empirical studies from training a linear regression model to fine-tuning a large language model (LLM) to showcase the effectiveness of the proposed algorithms for training with ML-labeled data.
\end{abstract}

\section{Introduction}
\label{sec:intro}

Stochastic optimization is the de facto approach for training modern ML models \citep{robbins1951stochastic,zhang2004solving,bottou2008,
rakhlin2011making,roux2012stochastic,shamir2013stochastic,ghadimi2013stochastic,needell2014stochastic,jain2018accelerating,nguyen2018sgd,gower2019sgd,cutkosky2019momentum,li2021page}.
The common machinery of first-order stochastic methods involves using an easy-to-compute estimate of the gradient (a.k.a.~a stochastic gradient) to update the underlying ML model. This results in a much lower iteration cost than deterministic methods such as gradient descent, which require gradient computations for all data points---a requirement that can become impractical at scale. The idea of using a mini-batch of samples to construct a gradient estimate can be naturally extended to zeroth-order optimization \citep{ghadimi2013stochastic,duchi2015optimal,nesterov2017random,balasubramanian2022zeroth,zhang2026zeroth}, where only function-value evaluations are used to construct an estimate. Recent work \citep{malladi2023fine} has observed that the memory footprint of a zeroth-order method is a few times smaller than that of its first-order counterpart in practice, which offers opportunities for fine-tuning an LLM using a single modern GPU.

In this work, we investigate stochastic optimization methods for datasets that consist of human-labeled and machine-labeled data.
Specifically, we assume access to a set of samples with ground-truth labels, denoted as $\{(\bx_i, y_i)\}_{i \geq 1}$, where $i$ is the sample index, and $\bx_i \in \reals^d$ and $y_i \in \reals$ are the feature vector and the true label for point $i$, respectively. Moreover, ML labels are also available for these data points with true labels, denoted as $\{f(\bx_i)\}_{i\geq1}$. 
In addition, we have a set of data without ground-truth labels but with ML-generated labels, denoted as $\{(\tilde{\mathbf{x}}_j, f(\tilde{\mathbf{x}}_j))\}_{j\geq 1}$, where $j$ is the sample index for ML-labeled data and that 
$f(\tilde{\mathbf{x}}_j)$ represents the ML-label for feature vector $\tilde{\mathbf{x}}_j \in \reals^d$.
We will assume that the feature vectors $\{\mathbf{x}_i\}_{i\geq 1}$ and $\{\tilde{\mathbf{x}}_j\}_{j\geq1}$ are drawn i.i.d.~from the same unknown distribution
and that the ML-labeling rule $f(\cdot)$ is independent from these samples.

This scenario is relevant given that modern large-scale datasets typically demand substantial human labeling and collaborative crowdsourcing \citep{ deng2009imagenet, lin2014microsoft, rajpurkar2016squad,mayo2023howhard}. Hence, the use of an ML predictor in labeling can help significantly reduce the associated costs and labor. While recent trends in active learning focus on designing methods to draw human attention to points where ML predictors lack confidence \citep{beygelzimer2009importance}, this work considers a different and complementary  direction. Specifically, we would like to answer the following question: \emph{Can we leverage the ML-labels to accelerate the optimization process?} 

In ML, a common objective function takes the following form:
\begin{equation} \label{obj:expected}
\min_{\bw \in \reals^d} \loss(\bw), \quad \text{ where } \quad \loss(\bw):= \E_{(\bx, y)}[ \ell(\bw; \bx, y)].
\end{equation}
Here, $\ell(\cdot; \bx, y)$ is the loss function that evaluates the predictive performance of a model $\bw$ on a sample $(\bx, y)$.
In particular, the objective function encompasses the finite-sum problem, i.e., $\min_{\bw \in \reals^d} \frac{1}{n} \sum_{i=1}^n  \ell(\bw; \bx_i, y_i)$, which fits into \oureqref{obj:expected} by interpreting the expectation as being taken with respect to a uniform draw from the training data.

We aim to leverage ML-labeled data to more efficiently solving stochastic optimization problem $\oureqref{obj:expected}$, compared to the setting without access to ML-labeled data.
Our results can be viewed as a counterpart of prediction-powered inference (PPI) in statistics \citep{angelopoulos2023prediction,angelopoulos2023ppi++,sfyraki2026revisiting}.
In PPI, the primary objective is to construct confidence intervals for parameter estimation by leveraging ML-labeled data. More specifically, the statistical inference problem of PPI shares a similar setting in which a small gold-standard dataset consisting of feature and outcome pairs $\{(\bx_i, y_i)\}_{i= 1}^n$ is available, along with a large collection of unlabeled samples without true outcomes $\{\tilde{\bx}_j\}_{j=1}^N$, i.e., $N \gg n$. Additionally, an ML model provides its predicted-labels for all the data points, denoted as $\{f(\bx_i)\}_{i=1}^n$ and $\{f(\tilde{\bx}_j)\}_{j=1}^N$.
On the other hand, their goal is to achieve statistical efficiency in estimating a parameter of interest, such as the population mean of the true outcomes $\E[y_i]$, while incorporating information from ML-labeled data with potentially biased predictions. To this end, \citet{angelopoulos2023prediction,angelopoulos2023ppi++} propose unbiased (or bias-corrected) estimators that combine gold-standard and ML-labeled data points. They show that the resulting confidence intervals are asymptotically valid and, in certain regimes where the predictor is sufficiently accurate and the number of ML-labeled samples is large, can be strictly shorter than those constructed using only data with true labels.

Our work is inspired by both the emergence of PPI in statistics \citep{angelopoulos2023prediction,angelopoulos2023ppi++,sfyraki2026revisiting} and a recent work by \citet{shoham2025prediction}, 
who extended the idea of leveraging ML-labeled data for statistical inference to the semi-supervised model training by proposing a gradient estimator that incorporates ML-labeled data points. Specifically, given $n$ samples with true labels and $N$ ML-labeled samples, one can construct a \emph{prediction-powered} gradient estimate:
$$\textstyle g_{\bw}^{\mathrm{PP}} := \frac{1}{N} \sum_{j=1}^N
\nabla \ell\big(\bw; \tilde{\bx}_j, f(\tilde{\bx}_j) \big)
+
\frac{1}{n} \sum_{i=1}^n 
\Big( \nabla \ell(\bw; \bx_i, y_i) - \nabla \ell(\bw; \bx_i, f(\bx_i)) \Big).$$
Under the assumption that all the samples $\{\tilde{\bx}_j\}_{j=1}^N$ and $\{\bx_i\}_{i=1}^n$ are drawn i.i.d.~from the same distribution, together with the condition that the ML-labeling rule $f(\cdot)$ is independent of the observed samples $\{\tilde{\bx}_j\}_{j=1}^N$ and $\{\bx_i\}_{i=1}^n$, the gradient estimator $g_{\bw}^{\mathrm{PP}} $ is unbiased for $\nabla \loss(\bw)$.
Motivated by PPI++ \citep{angelopoulos2023ppi++}, \citet{shoham2025prediction} further consider a tuning parameter that controls the contribution of the ML-labeled components in the prediction-powered gradient estimate.

They further combine AdaGrad \citep{duchi2011adaptive,levy2017online} with the prediction-powered gradient estimate and an online learning algorithm \citep{hazan2022,orabona2019modern} that dynamically tunes $\lambda$ during optimization. They evaluate their method against a couple of baselines, including one using the best fixed $\lambda$ chosen in hindsight, and demonstrate its effectiveness in terms of final mean absolute error and final mean squared error on test sets for some regression and classification tasks.
However, their empirical evaluation focuses mainly on the predictive performance of the trained model rather than potential optimization speedups. Nevertheless, a plot of training loss versus epoch for CIFAR-10 in their supplementary (Figure~S12 in \citet{shoham2025prediction}) suggests that incorporating ML-labeled data might accelerate training. Intrigued by their plot, we investigate further to answer the aforementioned question, with the aim of strengthening the foundation of optimization with ML-labeled data.

To give the reader a flavor of our contributions, we summarize them here.
We first study standard stochastic gradient descent (SGD), a classical, general-purpose first-order stochastic optimization algorithm, and propose two theoretically grounded schemes that allow SGD leverage ML-labels for variance reduction.
Let $T$ be the number of iterations. In terms of the common stationarity metric in general smooth nonconvex optimization, i.e., $\frac{1}{T}\sum_{t=1}^T \E\left[ \| \nabla \loss(\bw_t) \|^2 \right]$, we provide convergence rate results that characterize the potential improvement in SGD achieved by incorporating ML labels, compared to using only true labels.
The proposed schemes aim to dynamically update the contribution of ML labels when constructing a prediction-powered gradient estimate at each step and to advance the perspective of \citet{shoham2025prediction}. Specifically, unlike prior work, we do not assume that the loss value $\ell(\bw; \bx, z)$ or the size of the stochastic gradient $\nabla_{\bw}\ell(\bw; \bx, z)$ is bounded, where $z$ here is either the true label $y$ or the ML label $f(\bx)$. At the same time, each of our proposed schemes still admits a lightweight closed-form update obtained by minimizing a quadratic function.
Furthermore, whereas prior works \citep{angelopoulos2023ppi++,shoham2025prediction} restrict the weights on the ML-labeled components in the estimates to the interval $[0,1]$, we lift this constraint so that the weights determining the contribution of these components at each step can take any real value, including those outside this interval. This allows the underlying optimization methods to fully exploit the ML-labeled data to reduce variance and further accelerate the optimization process. Of the two schemes that we propose, one updates online a single tuning parameter shared across all dimensions, as in prior work, while the other uses different weights to combine the components of the gradient estimate in different dimensions.

We then extend the machinery that we develop for first-order optimization to zeroth-order stochastic optimization, where we build upon a common stochastic zeroth-order oracle based on Gaussian perturbations, which was also considered in prior seminal works, e.g., \citet{ghadimi2013stochastic,balasubramanian2022zeroth}.
Interestingly, we find a similar variance reduction mechanism in the iteration complexity analysis of the stochastic gradient-free method with the two proposed online tuning schemes adapted to zeroth-order optimization. To our knowledge, this is the first result on stochastic gradient-free optimization with ML-labeled data in the literature.
Finally, to corroborate our theoretical results, we conduct experiments to showcase the effectiveness of the proposed method in accelerating optimization with ML-labeled data. These experiments range from fitting a linear regression model to training a image classifier and fine-tuning an LLM.

\section{First-Order Stochastic Optimization with ML-labeled Data} \label{sec:first_order_stochastic_optimization}

\noindent
\textbf{Notations.} We begin by introducing some notations first.
Throughout this paper, we use boldface $\boldsymbol{\lambda} \in \reals^d$ to denote a vector of tuning parameters, while $\lambda \in \reals$ denotes a scalar tuning parameter.
We use $\|\cdot\|$ to denote the Euclidean norm throughout.
For any vector $\bv$, $\bv[i]$ denotes its $i_{\mathrm{th}}$-element
Furthermore, $\odot$ represents the Hadamard product; specifically, for any pair of vectors $\bw, \bv \in \reals^d$, $\bw \odot \bv \in \reals^d$ is their element-wise product.
We further note that, given a scalar tuning parameter $\lambda \in \reals$, a scalar-vector multiplication $\lambda \bv$ can be rewritten as $\lambda \bv = \lambda \mathbf{1} \odot \bv$, where $\mathbf{1} \in \reals^d$ is the all-ones vector.
For conciseness, we will use this fact to present the proposed online tuning schemes of this work in a single algorithmic description in the following, i.e., Algorithm~\ref{alg:sgd} and Algorithm~\ref{alg:sgfm}. The distinction between scalar and vector tuning parameters should be clear from the context and will be highlighted when necessary. 
We will let $\mathcal{F}_t$ denote the filtration representing the information available up to and including step $t$.
For any collection of random variables or random elements, $\sigma(\cdot)$ denotes their generated sigma-algebra.
Also, we use $\Delta_1:=\loss(\bw_1)-\inf_{\bw\in\mathbb{R}^d}\loss(\bw)$ to represent the initial optimality gap, where $\bw_1$ is an initial point.
%Finally, for a set $B$, $|B|$ denotes its cardinality.

\begin{algorithm}[t]
\caption{SGD with ML-labeled data}
\label{alg:sgd}
% \small
\begin{algorithmic}[1]
\REQUIRE Initial point $\bw_1 \in \reals^d$ and step sizes $\{\eta_t\}_{t=1}^T$.
\FOR{$t=1,\ldots,T$}
    \STATE Sample a minibatch of points with true labels $B_t$ and compute
    $\gt{y}{\bw_t}:=\frac{1}{ |B_t| } \sum_{(\bx, y) \in B_t} \nabla \ell \left( \bw_t ; \bx, y \right)$ and 
    $\gt{f}{\bw_t}:=\frac{1}{ |B_t| } \sum_{(\bx, y) \in B_t} \nabla \ell \left( \bw_t ; \bx, f(\bx) \right)$.
   \STATE  Sample an ML-labeled minibatch $\tilde{B}_t $ and compute 
   $\tildegt{f}{\bw_t} := \frac{1}{ |\tilde{B}_t| } \sum_{\tilde{\bx} \in  \tilde{B}_t } \nabla \ell \left( \bw_t ; \tilde{\bx}, f\left( \tilde{\bx} \right) \right)$.
   \STATE Update $\boldsymbol{\lambda}_t$ using either the scalar tuning scheme \oureqref{scalar-tuning-first-order} or the vector tuning scheme \oureqref{vector-tuning-first-order}.
   \STATE Update $\bw_{t+1}= \bw_t - \eta_t \, \Gtvec{\bw_t} $,
   where $ \Gtvec{\bw_t} := \gt{y}{\bw_t} + \boldsymbol{\lambda}_t \odot \left( \tildegt{f}{\bw_t} - \gt{f}{\bw_t} \right).$
\ENDFOR
\end{algorithmic}
\end{algorithm}

\subsection{Our Methods} \label{sub:our_methods}

Algorithm~\ref{alg:sgd} describes the proposed SGD with ML-labeled data.
For further notation brevity, in the following, we will also denote
\begin{equation*}
\textstyle
    \mathbf{g}_t := \gt{y}{\bw_t}, %\text{ and }
    \quad \, \quad
    \mathbf{d}_t := \tildegt{f}{\bw_t} - \gt{f}{\bw_t}.
\end{equation*}
Hence the prediction-powered gradient estimate at $t$ in Algorithm~\ref{alg:sgd} can be rewritten as $\Gtvec{\bw_t} = \mathbf{g}_t + \lambda  \mathbf{d}_t $.

The critical step for effectively leveraging the ML-labeled data lies in how to update the parameter $\boldsymbol{\lambda}_t$ at each time $t$, i.e., Step 4 in Algorithm~\ref{alg:sgd}, which determines how much contribution from the stochastic gradients based on ML-labels should be weighted. We propose two online update schemes: one uses a single scalar tuning parameter $\lambda$, and the other weights the contribution differently across dimensions.

For the scalar case, we let $\boldsymbol{\lambda}_t \gets \lambda_t \odot \mathbf{1}$ in Algorithm~\ref{alg:sgd}, where $\mathbf{1}$ is the all-ones vector; consequently one can write
$\mathbf{g}_t + \boldsymbol{\lambda}_t \odot \mathbf{d}_t = \mathbf{g}_t  + \lambda_t \mathbf{d}_t$ for this case. This suggests that only a single scalar parameter $\lambda_t$ is required to be maintained in each step. 
We propose the following update scheme: 
\begin{equation}
\textstyle
\textbf{(Scalar tuning)} \qquad  
        \lambda_t 
		=	- \frac{\sum_{s=1}^{t-1} \langle \mathbf{g}_s, \mathbf{d}_s \rangle }{\sum_{s=1}^{t-1} \left\| \mathbf{d}_s \right\|^2},  \quad \text{if } \, \sum_{s=1}^{t-1} \left\| \mathbf{d}_s \right\|^2 > 0,    \label{scalar-tuning-first-order}
\end{equation}
where we let $\lambda_t = 0$ when $\sum_{s=1}^{t-1} \left\| \mathbf{d}_s \right\|^2 = 0$, such as the initial step. We remark that as long as the \textbf{d}ifference of the stochastic gradient components $\mathbf{d}_t$ at any time point is non-zero, the sum is evidently nonzero as well, which is likely to hold as early as $t=2$ in practice.

The other scheme uses the full vector tuning parameter $\boldsymbol{\lambda}_t$. 
Specifically, for each $i \in [d]$, we update
\begin{equation}
\textstyle
\textbf{(Vector tuning)} \, \quad
    \boldsymbol{\lambda}_t [i] =
        - \frac{\sum_{s=1}^{t-1}\mathbf{g}_s[i] \mathbf{d}_s[i]}{\sum_{s=1}^{t-1} \mathbf{d}_s[i]^2} ,   \quad \text{if } \, \sum_{s=1}^{t-1} \mathbf{d}_s[i]^2 > 0,
     \label{vector-tuning-first-order}
\end{equation}
and we let $\boldsymbol{\lambda}_t [i]  = 0$, otherwise.

\begin{wrapfigure}[18]{r}{0.34\textwidth}
    \vspace{-\baselineskip}
    \begin{minipage}{\linewidth}
    \begin{mdframed}
    \small
        \centering
        \textbf{Scalar Tuning as a Game}\par
        \medskip 
            \vspace{-0.7\baselineskip}
        \small
        \label{alg:online-scalar-tuning}
        \begin{algorithmic}[1]
            \FOR{$t=1,\ldots,T$}
                \STATE Play $\lambda_t \in \mathbb{R}$.
                \STATE Observe the loss function 
                $
                    \phi_t(\lambda)
                    := \|\mathbf{g}_t
                    + \lambda\mathbf{d}_t\|^2
                $
                \STATE Incur loss $\phi_t(\lambda_t)$.
            \ENDFOR
        \end{algorithmic}
        \end{mdframed}
            \begin{mdframed}
    \small
        \centering
        \textbf{Vector Tuning as a Game}\par
        \medskip 
            \vspace{-0.7\baselineskip}
        \small
        \label{alg:online-scalar-tuning}
        \begin{algorithmic}[1]
            \FOR{$t=1,\ldots,T$}
                \STATE Play $\boldsymbol{\lambda}_t \in \mathbb{R}^d$.
                \STATE Observe the loss function  \qquad \qquad
                    $\phi_t(\boldsymbol{\lambda}_t)
                    := \|\mathbf{g}_t
                    + \boldsymbol{\lambda}_t \odot \mathbf{d}_t\|^2$
                \STATE Incur loss $\phi_t(\boldsymbol{\lambda}_t )$.
            \ENDFOR
        \end{algorithmic}
        \end{mdframed}
    \end{minipage}
    % \vspace{-0.5\baselineskip}
        \caption{\small Game-theoretic interpretation of the online tuning schemes
        } \label{fig:online_tuning_schemes}
\end{wrapfigure}
\vspace{3mm}
\noindent 
\textbf{Algorithm Design Principle.} 
We now explain the mechanism underlying the proposed online tuning schemes.

Recall the standard SGD update: $\bw_{t+1} = \bw_t - \eta_t \mathcal{G}_t$,
where $\mathcal{G}_t$ is a stochastic gradient.
The machinery of standard SGD is usually built on the premise that $\mathcal{G}_t$ is a conditionally unbiased estimator, i.e.,
$\E[\mathcal{G}_t \mid \mathcal F_{t-1}] = \nabla \loss(\mathbf{w}_t)$.
In our problem, $\mathcal{G}_t = \Gtvec{\bw_t}$.
If the weight $\boldsymbol{\lambda}_t$ in $\Gtvec{\bw_t}$ is not
$\mathcal{F}_{t-1}$-measurable, then in general,
$\E[\Gtvec{\bw_t} \mid \mathcal F_{t-1}] \neq \nabla \loss(\mathbf{w}_t)$.
In other words, the prediction-powered gradient estimate $\Gtvec{\bw_t}$ is not necessarily unbiased unless
$\boldsymbol{\lambda}_t$ is $\mathcal{F}_{t-1}$-measurable.
While convergence results for SGD with biased gradient estimates are established in the literature, such methods typically converge only to a neighborhood, with a nonvanishing bias term in the bound of the iteration complexity, see e.g., \citet{demidovich2023guide}. Hence, we do not pursue biased SGD in this work and instead require $\boldsymbol{\lambda}_t$ to be $\mathcal{F}_{t-1}$-measurable. To satisfy this requirement, the update of $\boldsymbol{\lambda}_t$ cannot use stochastic gradient information from step $t$; only information available up to the previous step can be used.
Therefore, the problem is inherently an \emph{online learning} problem, where the learner chooses an action using only past information before observing the current round's loss function.

Figure~\ref{fig:online_tuning_schemes} presents the game-theoretic
interpretation of the online tuning schemes.
The loss functions in these games arise naturally from an intermediate
step in the standard convergence analysis of SGD with unbiased gradient
estimates, which yields
\[
\textstyle
\sum_{t=1}^T
\left(\eta_t - \frac{\beta}{2}\eta_t^2\right)
\E\left[\|\nabla \loss(\bw_t)\|^2\right]
\leq
\Delta_1
+ \frac{\beta}{2}\sum_{t=1}^T \eta_t^2 \,
\E\left[
    \mathbb{E}_{t-1}\left[\|\Gtvec{\bw_t}\|^2\right]
    - \|\nabla \loss(\bw_t)\|^2
\right],
\]
where we replicate the detailed derivation in Appendix~\ref{app:B}.
Hence, it is clear that one should update the weight $\boldsymbol{\lambda}_t$ to make this bound small, where $\boldsymbol{\lambda}_t$ enters the bound through $\mathbb{E}_{t-1}\left[ \|\Gtvec{\bw_t}\|^2 \right]$.
This motivates using the squared norm of the gradient estimator as the loss function in each of our online learning problems, i.e., $\phi_t(\cdot)$ in Figure~\ref{fig:online_tuning_schemes}.

The goal of online learning is to
minimize regret \citep{hazan2022,orabona2019modern}, i.e., minimizing
$
\mathrm{Regret}_T^{\phi}(\lambda^*)
:= \sum_{t=1}^T \phi_t(\lambda_t)
- \sum_{t=1}^T \phi_t(\lambda^*),
$
where $\lambda^*$ is a comparator in the same decision space as the
online learner, namely $\reals$ in the scalar tuning.
Their counterparts in the vector tuning game are defined analogously.
In both games, the quadratic loss functions motivate the use of the Follow-the-Leader (FTL) strategy, which is known to enjoy a logarithmic regret; see e.g., Section 2.2 in \citet{ShalevShwartz2011Online} and Section 3.3 in \citet{wang2024no}.
FTL selects a minimizer of the cumulative loss observed so far.
In the scalar tuning game, this gives
$
\lambda_t \in \arg\min_{\lambda \in \reals}
\sum_{s=1}^{t-1} \phi_s(\lambda),
$
while in the vector tuning game, the update is
$
\boldsymbol{\lambda}_t \in
\arg\min_{\boldsymbol{\lambda} \in \reals^d}
\sum_{s=1}^{t-1} \phi_s(\boldsymbol{\lambda}).
$
Here, we have overloaded the notation $\phi_t(\cdot)$ to denote the function in scalar tuning or vector tuning, as determined by context; see Figure~\ref{fig:online_tuning_schemes}.
Routine calculations show that these lead to the closed-form updates \oureqref{scalar-tuning-first-order} and \oureqref{vector-tuning-first-order}, respectively, where we provide the details in Appendix~\ref{app:B}.

\subsection{Theoretical analysis}

We first recall a couple of notions in optimization. 
\vspace{2mm} \\
\noindent
\textbf{Gradient Lipschitzness and Smoothness.}
A differentiable function $\ell(\cdot)$ has $\beta$-Lipschitz gradients if 
$\| \nabla \ell(\bw) - \nabla \ell(\bv) \| \leq \beta \| \bw  - \bv \|$.
Furthermore, a differentiable function $\ell(\cdot)$ is $\beta$-smooth if for any pair of points $\bw$, $\bv \in \reals^d$, it holds that
$\ell(\bw) \leq \ell(\bv) + \langle \nabla \ell(\bv), \bw - \bv \rangle + \frac{\beta}{2} \| \bw - \bv \|^2.$

We remark that $\beta$-smoothness implies $\beta$-Lipschitz gradients (see e.g., \citet{zhou2018fenchel}). Having introduced these notions, we are ready to state
the assumptions that are used later to analyze the convergence behavior of the proposed methods in this work.
\begin{assumption}(Bounded Infimum) \label{A1}
The infimum is bounded below, i.e., $\inf_{\bw \in \reals^d} \, \loss(\bw) > -\infty$.
\end{assumption}
\begin{assumption}(Smoothness) \label{A2}
The loss functions $\ell(\bw;\bx,z)$ satisfy the $\beta$-smoothness,
where $z$ is either the ground truth $y$ or the ML-label $f(\bx)$. Consequently, $\loss(\bw)$ is $\beta$-smooth. 
\end{assumption}
\begin{assumption}(Finite variance) \label{A3}
The stochastic gradients $\nabla_{\bw}\ell(\bw;\bx,y)$ have bounded variance, i.e., for some finite $\sigma \geq 0$, $\E\left[\|\nabla_{\bw}\ell(\bw;\bx,y)-\nabla\loss(\bw)\|^2\right]\leq\sigma^2$.
\end{assumption}

We note that lower-boundedness, smoothness, and finite variance are common conditions in the analysis of nonconvex optimization, e.g., \citep{ghadimi2013stochastic,sfyraki2025}.
Here, we impose smoothness on the individual loss function for the simplicity of the analysis and presentation in zeroth-order optimization later on. We further note that the same set of assumptions was also considered in an elegant work of \citet{cutkosky2019momentum} for stochastic optimization.

In the following, we let $\lambda_t^* \in \arg\min_{\lambda \in \reals} \phi_t(\lambda)$ denote the minimizer of the instantaneous loss function at $t$ in the scalar tuning game (Figure~\ref{fig:online_tuning_schemes}). The minimizer admits the closed-form expression
$\lambda_{t}^* = - \frac{\langle \mathbf{g}_t , \mathbf{d}_t \rangle}{\left\| \mathbf{d}_t \right\|^2}$ when $\left\| \mathbf{d}_t \right\|^2>0$, and we set $\lambda_{t}^* =0$, otherwise.  We also denote
$\Phi_{1:T}(\lambda):= \sum_{t=1}^T \left( \phi_t(\lambda) - \phi_t(0)\right)$, the cumulative loss, and $\lambda^* \in \arg\min_{\lambda \in \reals} \E \left[ \Phi_{1:T}(\lambda) \right]$.

\begin{mdframed}[backgroundcolor=black!10,rightline=false,leftline=false,topline=false,bottomline=false]
\vspace{-3mm}
\begin{theorem} \label{th: first-order-scalar}
Denote $C_0:= \left( \E \left[ \max_{ 1\leq t \leq T} \left| \lambda_t^* \right|^4 \right]\right)^{1/4}$, $C_1:= \left( \E \left[ \max_{ 1\leq t \leq T} \| \mathbf{d}_t\|^4 \right] \right)^{1/4}$.
Then, under Assumptions~\ref{A1},\ref{A2},~\ref{A3}, Algorithm~\ref{alg:sgd} 
with the scalar tuning scheme and $\eta_t = \frac{1}{\beta \sqrt{T}}$ satisfies
\begin{align} \label{eq: theorem_bound}
     \frac{1}{T}\sum_{t=1}^T \E\left[ \| \nabla  \loss(\bw_t) \|^2 \right] 
        \leq
         \frac{2 \beta \Delta_1}{\sqrt{T}} 
        + \frac{8 C_0^2 C_1^2 \left( 1 + \log T \right)}{T^{3/2}} 
        + \frac{\E[ \Phi_{1:T}(\lambda^*) ]}{T^{3/2}} +  \frac{\sigma^2}{\sqrt{T}} ,
\end{align}
where 
$    \E \left[ \Phi_{1:T}(\lambda^*) \right]
    =
    -
    \frac{\left( \sum_{t=1}^T \E \left[ \left\langle \mathbf{g}_t , \mathbf{d}_t   \right\rangle \right] \right)^2 }{\sum_{t=1}^T \E \left[ \left\| \mathbf{d}_t \right\|^2 \right]}, \mathrm{ if } \, \sum_{t=1}^T \E \left[ \left\| \mathbf{d}_t \right\|^2 \right]>0,
$ and $\E \left[ \Phi_{1:T}(\lambda^*) \right]=0$ otherwise. In particular, it holds that
$  - T \sigma^2 \leq \E \left[ \Phi_{1:T}(\lambda^*) \right] \leq 0.$
\end{theorem}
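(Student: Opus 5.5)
We start from the standard descent inequality for smooth objectives, derived in Appendix~\ref{app:B}. Because $\lambda_t$ in \oureqref{scalar-tuning-first-order} uses only $\mathbf{g}_s,\mathbf{d}_s$ with $s\le t-1$, it is $\mathcal{F}_{t-1}$-measurable. Under the i.i.d.\ assumptions and the independence of $f$, we have $\E_{t-1}[\mathbf{d}_t]=0$, so $\Gtvec{\bw_t}$ is conditionally unbiased. With $\eta_t=\frac{1}{\beta\sqrt T}$ the coefficient $\eta_t-\frac{\beta}{2}\eta_t^2\ge \frac{\eta_t}{2}$. Dividing through by $T\eta_t/2$ then gives
\[
\frac1T\sum_{t=1}^T \E\|\nabla\loss(\bw_t)\|^2 \le \frac{2\beta\Delta_1}{\sqrt T} + \frac{1}{T^{3/2}}\sum_{t=1}^T \E\big[\phi_t(\lambda_t)-\|\nabla\loss(\bw_t)\|^2\big].
\]
Here we used $\E_{t-1}\|\Gtvec{\bw_t}\|^2=\E_{t-1}[\phi_t(\lambda_t)]$ and the tower property. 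We split $\phi_t(\lambda_t)=\big(\phi_t(\lambda_t)-\phi_t(0)\big)+\phi_t(0)$. Since $\phi_t(0)=\|\mathbf{g}_t\|^2$, Assumption~\ref{A3} gives $\sum_t\E[\phi_t(0)-\|\nabla\loss(\bw_t)\|^2]\le T\sigma^2$ after averaging over the minibatch, which is at most $\sigma^2$ per sample. This produces the term $\sigma^2/\sqrt T$. It remains to bound $\E\big[\sum_t(\phi_t(\lambda_t)-\phi_t(0))\big]$ by $\E[\Phi_{1:T}(\lambda^*)]+8C_0^2C_1^2(1+\log T)$.

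For this we write $\sum_t(\phi_t(\lambda_t)-\phi_t(0))=\mathrm{Regret}_T^\phi(\lambda^*)+\Phi_{1:T}(\lambda^*)$ pathwise, for the fixed deterministic comparator $\lambda^*$. The key step is a logarithmic FTL regret bound for the one-dimensional quadratics $\phi_t(\lambda)=\|\mathbf{d}_t\|^2(\lambda-\lambda_t^*)^2+\text{const}$. We use the be-the-leader lemma, $\mathrm{Regret}_T(\lambda^*)\le\sum_t(\phi_t(\lambda_t)-\phi_t(\lambda_{t+1}))$. Let $S_t=\sum_{s\le t}\|\mathbf{d}_s\|^2$. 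Since $\lambda_{t+1}$ is a weighted average of $\lambda_s^*$, we have $\lambda_t-\lambda_{t+1}=\frac{\|\mathbf{d}_t\|^2}{S_t}(\lambda_t-\lambda_t^*)$. Writing $\phi_t(\lambda_t)-\phi_t(\lambda_{t+1})=\|\mathbf{d}_t\|^2(\lambda_t-\lambda_{t+1})(\lambda_t+\lambda_{t+1}-2\lambda_t^*)$ and using $|\lambda_t|,|\lambda_{t+1}|,|\lambda_t^*|\le M:=\max_s|\lambda_s^*|$, each term is at most $8M^2\frac{\|\mathbf{d}_t\|^4}{S_t}$. This requires $|\lambda_t-\lambda_t^*|\le 2M$ and $|\lambda_t+\lambda_{t+1}-2\lambda_t^*|\le 4M$.

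The main obstacle is turning $\sum_t\|\mathbf{d}_t\|^4/S_t$ into $D^2(1+\log T)$, where $D:=\max_t\|\mathbf{d}_t\|^2$. The danger is that early $S_t$ can be tiny. We first try the bound $\|\mathbf{d}_t\|^4/S_t\le D\cdot\|\mathbf{d}_t\|^2/S_t$, and then apply the standard estimate $\sum_t a_t/S_t\le 1+\log(S_T/a_{t_0})$, where $t_0$ is the first index with $a_{t_0}>0$. This yields a log of a ratio rather than $\log T$. The convenient fix is to bound $\sum_t a_t^2/S_t$ with $a_t\le D$ directly: the function $x\mapsto x^2/(c+x)$ is increasing in $x$, so each term is bounded by replacing the current $a_t$ with its worst case. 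A rearrangement argument then shows the sum is maximized when all $a_t=D$, giving $D\sum_t 1/t\le D(1+\log T)$. Either way, the pathwise regret is at most $8M^2D(1+\log T)$, where we interpret $8M^2D^2$ up to the definition of $C_1$. Taking expectations and applying Cauchy--Schwarz, $\E[M^2D^2]\le(\E M^4)^{1/2}(\E D^4)^{1/2}$ in terms of $\|\mathbf{d}_t\|$, which gives $8C_0^2C_1^2(1+\log T)$. We will double check the powers of $\|\mathbf{d}_t\|$ against $C_1$.

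Finally, we compute $\E[\Phi_{1:T}(\lambda)]=\lambda^2\sum_t\E\|\mathbf{d}_t\|^2+2\lambda\sum_t\E\langle\mathbf{g}_t,\mathbf{d}_t\rangle$. This is a deterministic quadratic in $\lambda$, and minimizing it gives the stated closed form, or $0$ in the degenerate case. Its value is $\le\E[\Phi_{1:T}(0)]=0$. For the lower bound, we use Cauchy--Schwarz on the numerator together with $\E_{t-1}\langle\nabla\loss(\bw_t),\mathbf{d}_t\rangle=0$. This replaces $\mathbf{g}_t$ by $\mathbf{g}_t-\nabla\loss(\bw_t)$ and gives
\[
\Big(\sum_t\E\langle\mathbf{g}_t,\mathbf{d}_t\rangle\Big)^2\le\sum_t\E\|\mathbf{g}_t-\nabla\loss(\bw_t)\|^2\sum_t\E\|\mathbf{d}_t\|^2\le T\sigma^2\sum_t\E\|\mathbf{d}_t\|^2,
\]
so $\E[\Phi_{1:T}(\lambda^*)]\ge -T\sigma^2$.
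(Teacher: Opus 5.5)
Your overall structure matches the paper's proof, and those steps are correct: the descent inequality under conditional unbiasedness, the pathwise split $\sum_t(\phi_t(\lambda_t)-\phi_t(0))=\mathrm{Regret}_T^{\phi}(\lambda^*)+\Phi_{1:T}(\lambda^*)$, be-the-leader, the per-round bound $8M^2\|\mathbf{d}_t\|^4/S_t$, Cauchy--Schwarz in expectation, and the minimization of the deterministic quadratic $\E[\Phi_{1:T}(\lambda)]$ with its Cauchy--Schwarz lower bound.

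The gap is exactly the step you flagged as the main obstacle: $\sum_t a_t^2/S_t\le D(1+\log T)$ for $a_t=\|\mathbf{d}_t\|^2\in[0,D]$. This is the content of the paper's Lemma~\ref{lem: seq}, and your sketch does not prove it.
\begin{itemize}
\item Replacing $a_t$ by $D$ in the $t$-th term gives $\sum_t D^2/(S_{t-1}+D)$ with the \emph{actual} partial sums $S_{t-1}$. This can be of order $TD$, for example when $a_t\equiv\epsilon$ is small.
\item The ``rearrangement'' cannot rest on monotonicity. Raising $a_k$ increases the $k$-th term but decreases every later term through $S_t$. With $D=1$ and $F(a):=\sum_t a_t^2/S_t$, one has $F(\epsilon,1,1)=3/2-\epsilon/4+O(\epsilon^2)<F(0,1,1)=3/2$.
\end{itemize}

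Your claim that the maximum is at $a\equiv D$ is nevertheless true, and it can be rescued by convexity. $F$ is convex in each coordinate separately. In $a_k$, the $k$-th term has second derivative $2S_{k-1}^2/S_k^3\ge 0$, and each later term has the form $a_t^2/(C+a_k)$ with $C\ge0$. So moving coordinates one at a time to the better endpoint never decreases $F$. Hence $F(a)\le\max_{\{0,D\}^T}F$. Zero coordinates contribute nothing and leave later partial sums unchanged, so a vertex with $k$ nonzero entries gives $D\sum_{j\le k}1/j\le D(1+\log T)$.

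The paper takes a different route with a concave potential: $\Gamma(v)=v$ for $v\le Q$ and $\Gamma(v)=Q(1+\log(v/Q))$ for $v\ge Q$. It shows $q^2/(v+q)\le\Gamma(v+q)-\Gamma(v)$ for $0\le q\le Q$, then telescopes to $\Gamma(S_T)\le Q(1+\log T)$. The linear piece handles the early rounds where $S_t<Q$, which is exactly what your first attempt via $\sum_t a_t/S_t$ could not do.

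Finally, fix the exponents. With $D=\max_t\|\mathbf{d}_t\|^2$, the pathwise regret is $8M^2D(1+\log T)$, not $D^2$. The Cauchy--Schwarz step should read $\E[M^2D]\le(\E[M^4])^{1/2}(\E[D^2])^{1/2}=C_0^2C_1^2$, which gives the stated constant.
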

\end{mdframed}

We remark that, compared to the standard SGD without using ML-labeled data \citep{ghadimi2013stochastic}, there are two additional middle terms in the upper bound in \oureqref{eq: theorem_bound}. The first middle term arises from the regret incurred by the online FTL tuning procedure and is of order $O\left(\frac{\log T}{T^{3/2}}\right)$, and hence is negligible relative to the $O\left(\frac{1}{\sqrt{T}}\right)$ terms when $T$ is sufficiently large. 
Under the reasonable condition that $\E\left[\|\mathbf{d}_t\|^2\right]$ is nonzero for at least one $t \in [T]$,
the last two terms in \oureqref{eq: theorem_bound} can be combined 
as 
$ \frac{\E[ \Phi_{1:T}(\lambda^*) ]}{T^{3/2}} +  \frac{\sigma^2}{\sqrt{T}}
= \frac{1}{\sqrt{T}} \left( \sigma^2 - \frac{\left( \frac{1}{T} \sum_{t=1}^T \E \left[\left\langle \mathbf{g}_t , \mathbf{d}_t   \right\rangle\right] \right)^2}{ \frac{1}{T} \sum_{t=1}^T \E \left[ \left\| \mathbf{d}_t \right\|^2 \right]} \right)$,
which is no greater than the standard $\frac{\sigma^2}{\sqrt{T}}$ term and demonstrates the benefit of exploiting ML-labeled data.

To further understand the potential improvement in the convergence rate from using ML-labeled data, we present the following corollary,
where we consider constant batch sizes such that $\left| B_t \right| = n$ and $\left| \tilde{B}_t \right| = N$ for all $t \in [T]$.
Below, we define $\loss^f(\bw) := \E \left[ \ell \left( \bw ; \bx, f(\bx) \right) \right]$.
\begin{mdframed}[backgroundcolor=black!10,rightline=false,leftline=false,topline=false,bottomline=false]
\begin{corollary} \label{corollary}
Suppose the variance of a gradient computed from a single ML-labeled sample is uniformly bounded by $\sigma_f^2$, i.e.,
$\E \left[ \left\| \nabla \ell \left( \bw ; \bx, f(\bx) \right) - \nabla \loss^f(\bw) \right\|^2 \right] \leq \sigma_f^2$, while $\sigma_y^2$ represents its counterpart using true labels.
Additionally, let $\bar{\Xi}_T:=  \frac{1}{T}\sum_{t=1}^T \E\left[ \| \nabla \loss(\bw_t) \|^2 \right]$ 
and $\Sigma_{t}^{y,f} := \operatorname{Cov} \left( \nabla \ell \left( \bw_t ; \bx, y \right), \nabla \ell \left( \bw_t ; \bx, f(\bx) \right) \big| \mathcal{F}_{t-1}\right)$ denote the conditional covariance between stochastic gradients computed using true and ML labels at step $t$.
Consider the case that $\exists t \in [T]: \E[ \| \mathbf{d}_t \|^2] > 0$.
Under the assumptions of Theorem~\ref{th: first-order-scalar},
Algorithm~\ref{alg:sgd} satisfies
\begin{align*}
     \bar{\Xi}_T   &\leq
         \frac{2 \beta \Delta_1}{\sqrt{T}} 
        + \frac{8 C_0^2 C_1^2 \left( 1 + \log T \right)}{T^{3/2}}
        + \frac{1}{\sqrt{T}} \frac{\sigma_y^2}{n}
        - \frac{1}{\sqrt{T}} \frac{N}{n (n+N)} \frac{ \left( \frac{1}{T}\sum_{t=1}^T \E \left[ \operatorname{tr} \left( \Sigma_{t}^{y,f}  \right) \right] \right)^2 }{\sigma_f^2}.
\end{align*}
\end{corollary}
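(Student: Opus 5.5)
We start from Theorem~\ref{th: first-order-scalar}. Under the case hypothesis $\exists t:\E[\|\mathbf{d}_t\|^2]>0$, its last two terms combine into
\[
\frac{1}{\sqrt{T}}\Big(\sigma^2 - \frac{(\frac1T\sum_t \E\langle \mathbf{g}_t,\mathbf{d}_t\rangle)^2}{\frac1T\sum_t \E\|\mathbf{d}_t\|^2}\Big).
\]
Here $\sigma^2$ plays the role of the variance bound for the minibatch estimator $\mathbf{g}_t$. With $|B_t|=n$, we will apply the theorem with $\sigma^2$ replaced by $\sigma_y^2/n$. This is legitimate because in the proof of the theorem $\sigma^2$ only enters through the conditional variance of $\mathbf{g}_t$, and i.i.d.\ averaging of $n$ samples reduces that variance to $\sigma_y^2/n$. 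It then remains to lower bound the subtracted fraction. To do so, we upper bound the denominator and lower bound (the absolute value of) the numerator.

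For the denominator, we condition on $\mathcal{F}_{t-1}$. Since $\tilde B_t$ and $B_t$ are independent and identically distributed in features, $\E_{t-1}[\mathbf{d}_t]=\nabla\loss^f(\bw_t)-\nabla\loss^f(\bw_t)=0$. Hence
\[
\E_{t-1}\|\mathbf{d}_t\|^2=\E_{t-1}\|\tildegt{f}{\bw_t}-\nabla\loss^f(\bw_t)\|^2+\E_{t-1}\|\gt{f}{\bw_t}-\nabla\loss^f(\bw_t)\|^2\le \sigma_f^2\Big(\frac1N+\frac1n\Big)=\sigma_f^2\frac{n+N}{nN}.
\]
This uses independence of the two batches, and the i.i.d.\ structure within each batch, to decompose variances. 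Averaging over $t$ gives $\frac1T\sum_t\E\|\mathbf{d}_t\|^2\le \sigma_f^2\frac{n+N}{nN}$.

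For the numerator, we again condition on $\mathcal{F}_{t-1}$. Because $\tildegt{f}{\bw_t}$ is independent of $\mathbf{g}_t$ given $\mathcal{F}_{t-1}$, and $\E_{t-1}\mathbf{d}_t=0$, we get
\[
\E_{t-1}\langle \mathbf{g}_t,\mathbf{d}_t\rangle = -\E_{t-1}\langle \mathbf{g}_t-\nabla\loss(\bw_t),\, \gt{f}{\bw_t}-\nabla\loss^f(\bw_t)\rangle = -\frac1n\operatorname{tr}(\Sigma_t^{y,f}).
\]
The last equality holds because the cross terms between distinct samples in $B_t$ vanish by independence, leaving $n$ diagonal terms scaled by $1/n^2$. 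Taking expectations, averaging over $t$, and squaring yields
\[
\Big(\frac1T\sum_t\E\langle \mathbf{g}_t,\mathbf{d}_t\rangle\Big)^2=\frac{1}{n^2}\Big(\frac1T\sum_t\E\operatorname{tr}(\Sigma_t^{y,f})\Big)^2 .
\]

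Dividing the numerator identity by the denominator bound gives a fraction of at least
\[
\frac{1}{n^2}\cdot\frac{nN}{n+N}\cdot\frac{(\cdots)^2}{\sigma_f^2}=\frac{N}{n(n+N)}\frac{(\cdots)^2}{\sigma_f^2}.
\]
Substituting this into the combined bound gives the corollary. The main point of care is the denominator step: we must justify that an upper bound on the denominator yields a valid upper bound on the whole rate. This holds because the numerator is nonnegative and the fraction is subtracted. We must also make sure that the replacement of $\sigma^2$ by $\sigma_y^2/n$ matches how Theorem~\ref{th: first-order-scalar} actually uses Assumption~\ref{A3}. If the theorem's proof does not already bound $\E_{t-1}\|\mathbf{g}_t\|^2-\|\nabla\loss(\bw_t)\|^2$ by the minibatch variance, we would redo that single line with $\sigma_y^2/n$.
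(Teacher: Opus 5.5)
Your proposal is correct and follows essentially the same route as the paper: both bound $\E\|\mathbf{d}_t\|^2$ by $(\frac1n+\frac1N)\sigma_f^2$ using conditional independence of the two batches and their common mean $\nabla\loss^f(\bw_t)$, compute $\E\langle\mathbf{g}_t,\mathbf{d}_t\rangle=-\frac1n\E[\operatorname{tr}(\Sigma_t^{y,f})]$, substitute into the closed form of $\E[\Phi_{1:T}(\lambda^*)]$, and set $\sigma^2=\sigma_y^2/n$ in the theorem's bound. Your caveat about this replacement is resolved exactly as you anticipate, because the paper's proof uses Assumption~\ref{A3} only through $\E\|\mathbf{g}_t-\nabla\loss(\bw_t)\|^2\le\sigma^2$ for the minibatch estimator.
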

\end{mdframed}

Corollary~\ref{corollary} suggests that the variance reduction is larger when the deviations of the true-label and ML-label gradients from their respective population gradients are more strongly correlated in expectation, relative to the standard deviation of a stochastic gradient computed from an ML-labeled sample.
Intuitively, the variance reduction and hence the speedup are more significant when stochastic gradients computed from ML-labeled data are more ``predictive" of those computed from data with true labels. We further note that the unlabeled batch size $N$ affects the variance-reduction term through the factor $\frac{N}{n(n+N)}$ from the corollary, which suggests that while the overall improvement continues to increase with the number of ML-labeled samples $N$, the additional improvement obtained by increasing $N$ diminishes as $N$ grows, with $n$ fixed.

We defer the analysis of SGD with vector tuning to Theorem~\ref{th: first-order-vector} in Appendix~\ref{app:B}. As in the scalar tuning case, the bound contains a term that reflects the variance reduction enabled by incorporating ML-labeled data. We refer the reader to Appendix~\ref{app:B} for details.

\section{Zeroth-Order Stochastic Optimization with ML-labeled Data} \label{sec:zeroth_order_stochastic_optimization}

\subsection{Our Methods}

The machinery we developed in the previous section can be naturally extended to zeroth-order stochastic optimization. We propose Algorithm~\ref{alg:sgfm} with the two online tuning schemes for leveraging ML-labeled data effectively. We emphasize that only one random Gaussian perturbation needs to be sampled at each step and that it is shared across the different components used to construct the zeroth-order prediction-powered gradient estimate.
For further notation brevity, we will denote
 $\mathbf{q}_t := \qt{y}{\bw_t}$ and  
 $\mathbf{d}^0_t := \tildeqt{f}{\bw_t} - \qt{f}{\bw_t}$
below.

Our corresponding tuning schemes in zeroth-order optimization are:
\begin{equation}
\textstyle
\textbf{(Scalar tuning)} \qquad  
        \lambda_t 
		=	- \frac{\sum_{s=1}^{t-1} \langle \mathbf{q}_s, \mathbf{d}^0_s \rangle }{\sum_{s=1}^{t-1} \left\| \mathbf{d}^0_s \right\|^2},  \quad \text{if } \, \sum_{s=1}^{t-1} \left\| \mathbf{d}^0_s \right\|^2 > 0,    \label{scalar-tuning-first-order-zoo}
\end{equation}
where we let $\lambda_t = 0$ when $\sum_{s=1}^{t-1} \left\| \mathbf{d}^0_s \right\|^2 = 0$ such as the initial step.
The other scheme uses the full vector tuning parameter $\boldsymbol{\lambda}_t$;
specifically, for each $i \in [d]$, we update
\begin{equation}
\textstyle
\textbf{(Vector tuning)} \, \quad
    \boldsymbol{\lambda}_t [i] =
        - \frac{\sum_{s=1}^{t-1}\mathbf{q}_s[i] \mathbf{d}^0_s[i]}{\sum_{s=1}^{t-1} (\mathbf{d}^0_s[i])^2} ,   \quad \text{if } \, \sum_{s=1}^{t-1} (\mathbf{d}^0_s[i])^2 > 0,
     \label{vector-tuning-first-order-zoo}
\end{equation}
and we let $\boldsymbol{\lambda}_t [i]  = 0$, otherwise.
These updates are derived using the same principle of online tuning as a game as illustrated in the previous section, except that one observes a sequence of loss functions based on zeroth-order gradient estimates here. We refer the reader to Appendix~\ref{app:C} for details.

\subsection{Theoretical analysis}

\begin{algorithm}[t]
\caption{Stochastic Gradient-Free Method with ML-labeled data}
\label{alg:sgfm}
% \small
\begin{algorithmic}[1]
\REQUIRE Initial point $\bw_1 \in \reals^d$, smoothing parameter $\mu$, and step sizes $\{\eta_t\}_{t=1}^T$.
\FOR{$t=1,\ldots,T$}
    \STATE Sample $\zeta_t \sim \mathcal{N}(0,I_d)$.
    \STATE Compute
     $\qt{y}{\bw_t} := \frac{1}{ |B_t| } \sum_{(\bx, y) \in B_t}
    \frac{\ell\left(\bw_t + \mu \zeta_t; \bx,y \right) - \ell(\bw_t; \bx, y )  }{ \mu } \zeta_t$ by sampling a mini-batch $B_t$ of points with true labels and also compute
     $\qt{f}{\bw_t} := \frac{1}{ |B_t| } \sum_{(\bx, y)  \in B_t } \frac{\ell\left(\bw_t + \mu \zeta_t; \bx, f(\bx) \right) - \ell(\bw_t; \bx, f(\bx) )  }{ \mu } \zeta_t$. 
    \STATE Compute $\tildeqt{f}{\bw_t} := \frac{1}{ |\tilde{B}_t| } \sum_{\tilde{\bx} \in  \tilde{B}_t } \frac{\ell\left(\bw_t + \mu \zeta_t; \tilde{\bx}, f(\tilde{\bx}) \right) - \ell(\bw_t; \tilde{\bx}, f(\tilde{\bx}) )  }{ \mu } \zeta_t$ from
    a batch $\tilde{B}_t$ of ML-labeled points.
   \STATE Obtain $\boldsymbol{\lambda}_t$ using either the scalar tuning scheme \oureqref{scalar-tuning-first-order-zoo} or the vector tuning scheme \oureqref{vector-tuning-first-order-zoo}.
   \STATE Update $\bw_{t+1}= \bw_t - \eta_t \, \q_{t,\bw_t}^{\boldsymbol{\lambda}_t} $,
   where $ \q_{t,\bw_t}^{\boldsymbol{\lambda}_t} := \qt{y}{\bw_t} + \boldsymbol{\lambda}_t \odot \left( \tildeqt{f}{\bw_t} - \qt{f}{\bw_t} \right).$
\ENDFOR
\end{algorithmic}
\end{algorithm}

In the following, we denote $\lambda_t^* \in \arg\min_{\lambda \in \reals} \psi_t(\lambda)$, where $\psi_t(\lambda):=  \left \| \Q{\bw_t}  \right \|^2
= \left \|  \mathbf{q}_t +\lambda \mathbf{d}^0_t  \right \|^2 $ is the instantaneous loss function at $t$ in the corresponding scalar tuning game. The minimizer admits a closed-from expression
$\lambda_{t}^* = - \frac{\langle \mathbf{q}_t , \mathbf{d}_t^0 \rangle}{\left\| \mathbf{d}_t^0 \right\|^2}$ when $\left\| \mathbf{d}_t^0 \right\|^2>0$, and we set $\lambda_{t}^* =0$, otherwise. We also denote
$\Psi_{1:T}(\lambda):= \sum_{t=1}^T \left( \psi_t(\lambda) - \psi_t(0)\right)$, the cumulative loss, and $\lambda^* \in \arg\min_{\lambda \in \reals} \E \left[ \Psi_{1:T}(\lambda) \right]$.

\begin{mdframed}[backgroundcolor=black!10,rightline=false,leftline=false,topline=false,bottomline=false]
\vspace{-3mm}
\begin{theorem} \label{th: zeroth-order-scalar}
Denote
$\; D_0:= \left( \E \left[ \max_{1 \leq t \leq T} \left| \lambda_{t}^* \right|^4 \right] \right)^{1/4} \;$, $\quad D_1:= \left( \E \left[ \max_{1 \leq t \leq T} \left\| \mathbf{d}_t^0 \right\|^4 \right]  \right)^{1/4} \;$,
$\;$ and $\quad \bar{\Xi}_T:= \\ \frac{1}{T} \sum_{t=1}^T \E\left[ \left \| \nabla  \loss(\bw_{t}) \right \|^2 \right]$. 
Then, under Assumptions~\ref{A1},~\ref{A2},~\ref{A3}, 
Algorithm~\ref{alg:sgfm} with the scalar tuning scheme, $\eta = \min \left\{ \frac{1}{4 \beta (d+2)}, \frac{1}{4 \beta \sqrt{(d+2) T}}   \right\}$, and $\mu=\frac{1}{\beta (d+3) \sqrt{T}}$ satisfies
    \begin{align}
        \textstyle \bar{\Xi}_T
        &\textstyle \leq 
        16 \beta \Delta_1 \left( \frac{d+2}{T} + \frac{\sqrt{d+2}}{\sqrt{T}} \right) + \frac{16 d}{(d+3)^2} \left( \frac{d+2}{T^{2}} + \frac{\sqrt{d+2}}{T^{3/2}} \right) + \frac{d+3}{T} \notag \\
        &\textstyle \quad + \frac{4 D_0^2 D_1^2 \left( 1 + \log T \right)}{\sqrt{d+2} \; T^{3/2}} + \frac{\sqrt{d+2}}{\sqrt{T}} \left( \sigma^2 +\frac{ d (d+4) }{4 (d+3)^2 T} + \frac{\E\left[  \Psi_{1:T} \left( \lambda^* \right) \right]}{2(d+2)T}  \right),
        \label{eq: zo_theorem_bound}
    \end{align}
where
$\E \left[ \Psi_{1:T}(\lambda^*) \right] =
- \frac{\left( \sum_{t=1}^T \E \left[ \left\langle \mathbf{q}_t , \mathbf{d}_t^0   \right\rangle \right] \right)^2 }{\sum_{t=1}^T \E \left[ \left\| \mathbf{d}_t^0 \right\|^2 \right]}$
if \, $\sum_{t=1}^T \E\left[\left\|\mathbf{d}_t^0\right\|^2\right]>0$,
and $\E \left[ \Psi_{1:T}(\lambda^*) \right] =0$ otherwise. In particular, it holds that
%\begin{align*}
$\textstyle 
-2 (d+2) T \sigma^2
- \frac{d (d+2) (d+4)}{2 (d+3)^2} 
\leq \E[ \Psi_{1:T}(\lambda^*) ] \leq 0.$
%\end{align*}
\end{theorem}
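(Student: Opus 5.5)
The plan is to follow the Gaussian-smoothing analysis of \citet{ghadimi2013stochastic,balasubramanian2022zeroth}, with the FTL regret argument of Theorem~\ref{th: first-order-scalar} inserted where the second moment of the estimator appears. Let $\loss_\mu(\bw):=\E_{\zeta}[\loss(\bw+\mu\zeta)]$ be the Gaussian smoothing of $\loss$. Under Assumption~\ref{A2} we have three standard facts.
\begin{itemize}
\item[(i)] $\loss_\mu$ is $\beta$-smooth.
\item[(ii)] $|\loss_\mu(\bw)-\loss(\bw)|\le \frac{\beta\mu^2 d}{2}$.
\item[(iii)] $\|\nabla\loss(\bw)\|^2\le 2\|\nabla\loss_\mu(\bw)\|^2+\frac{\beta^2\mu^2(d+3)^3}{2}$.
\end{itemize}
Since $\lambda_t$ from \oureqref{scalar-tuning-first-order-zoo} is $\F_{t-1}$-measurable, and the ML-labeled batch and the true-labeled batch share the feature distribution, the difference $\mathbf{d}^0_t$ has zero conditional mean. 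Hence $\E_{t-1}[\Qt{\bw_t}]=\E_{t-1}[\mathbf{q}_t]=\nabla\loss_\mu(\bw_t)$, and $\Qt{\bw_t}$ is conditionally unbiased for the smoothed gradient.

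First, I apply the descent lemma to $\loss_\mu$, take conditional expectations, sum over $t$, and use (ii) to bound $\loss_\mu(\bw_1)-\inf\loss_\mu\le\Delta_1+\beta\mu^2 d$. With a constant step $\eta$ this gives
\begin{align*}
\eta\sum_{t=1}^T\E\|\nabla\loss_\mu(\bw_t)\|^2\le \Delta_1+\beta\mu^2 d+\frac{\beta\eta^2}{2}\sum_{t=1}^T\E[\psi_t(\lambda_t)].
\end{align*}
Second, I split the last sum as
\begin{align*}
\sum_t\psi_t(\lambda_t)=\sum_t\psi_t(0)+\bigl[\textstyle\sum_t\psi_t(\lambda_t)-\sum_t\psi_t(\lambda^*)\bigr]+\Psi_{1:T}(\lambda^*),
\end{align*}
using that $\lambda^*$ is deterministic. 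The regret bracket is handled exactly as in the proof of Theorem~\ref{th: first-order-scalar}. The FTL regret on the quadratics $\psi_t$ is bounded by $\sum_t \frac{\|\mathbf{d}^0_t\|^4(\lambda_t-\lambda_t^*)^2}{\sum_{s\le t}\|\mathbf{d}^0_s\|^2}$, which together with a log-sum bound is at most a constant times $\max_t|\lambda_t^*|^2\max_t\|\mathbf{d}^0_t\|^2(1+\log T)$; Cauchy--Schwarz then gives $8D_0^2D_1^2(1+\log T)$. The closed form of $\E[\Psi_{1:T}(\lambda^*)]$ follows by minimizing the quadratic $\lambda^2\sum\E\|\mathbf{d}^0_t\|^2+2\lambda\sum\E\langle\mathbf{q}_t,\mathbf{d}^0_t\rangle$. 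The upper bound $0$ is attained at $\lambda=0$.

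Third, and this is the main obstacle, I need the zeroth-order second moment $\E_{t-1}\psi_t(0)=\E_{t-1}\|\mathbf{q}_t\|^2$. The standard Gaussian estimate, applied per sample using smoothness of each $\ell$, gives $\E\|\mathbf{q}_t\|^2\le 2(d+2)\E\|\nabla \ell\text{-average}\|^2+\frac{\beta^2\mu^2(d+6)^3}{2}$. Splitting the minibatch gradient into $\nabla\loss(\bw_t)$ plus noise bounds this by $2(d+2)(\|\nabla\loss(\bw_t)\|^2+\sigma^2)+O(\beta^2\mu^2d^3)$. The $\|\nabla\loss\|^2$ term must then be converted back through (iii), or absorbed on the left-hand side by $\eta\le\frac{1}{4\beta(d+2)}$, which is what fixes this step-size threshold. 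I would then track constants so that the $\mu$-terms, with $\mu=\frac{1}{\beta(d+3)\sqrt T}$, produce $\frac{d+3}{T}$, $\frac{d(d+4)}{4(d+3)^2T}$ and the $\frac{16d}{(d+3)^2}(\cdot)$ term. The lower bound on $\E\Psi_{1:T}(\lambda^*)$ also uses this moment bound: Cauchy--Schwarz gives $\E\Psi(\lambda^*)\ge-\sum_t\E\|\mathbf{q}_t-\E_{t-1}\mathbf{q}_t\|^2$, that is, the variance of $\mathbf{q}_t$ rather than its second moment. I bound this by $2(d+2)\sigma^2$ per step plus the smoothing residual, giving $-2(d+2)T\sigma^2-\frac{d(d+2)(d+4)}{2(d+3)^2}$.

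Finally, I divide by $\eta T$ after absorbing the gradient term, use (iii) to pass from $\|\nabla\loss_\mu\|^2$ to $\|\nabla\loss\|^2$, and substitute $\eta=\min\{\cdot\}$ using $\frac1\eta\le 4\beta(d+2)+4\beta\sqrt{(d+2)T}$. The $\Delta_1$ term then becomes $16\beta\Delta_1(\frac{d+2}{T}+\frac{\sqrt{d+2}}{\sqrt T})$, and the variance, regret and $\Psi$ terms carry the factor $\beta\eta\le\frac{1}{4\sqrt{(d+2)T}}$. I expect the constant bookkeeping in the third step, matching each $d$-dependent coefficient in \oureqref{eq: zo_theorem_bound}, to be the most delicate part.
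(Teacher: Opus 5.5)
Your architecture matches the paper's: descent on $\loss_\mu$, the split of $\sum_t\psi_t(\lambda_t)$ into $\sum_t\psi_t(0)$, the FTL regret and $\Psi_{1:T}(\lambda^*)$, the bound on $\E\|\mathbf q_t\|^2$ through $\E\|\mathbf g_t\|^2$, and absorption via $\eta\le\frac{1}{4\beta(d+2)}$. However, your derivation of the lower bound on $\E[\Psi_{1:T}(\lambda^*)]$ contains a false step.

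You center at $\E_{t-1}\mathbf q_t=\nabla\loss_\mu(\bw_t)$ and claim $\E\|\mathbf q_t-\E_{t-1}\mathbf q_t\|^2\le 2(d+2)\sigma^2$ plus a smoothing residual. This variance also contains the randomness of the direction $\zeta_t$. Even with $\sigma=0$ and $\mu\to 0$, one has $\mathbf q_t\approx\langle\nabla\loss(\bw_t),\zeta_t\rangle\zeta_t$ and $\E\|\langle\mathbf v,\zeta\rangle\zeta-\mathbf v\|^2=(d+1)\|\mathbf v\|^2$. So your bound actually carries an extra $\approx(d+1)\sum_t\E\|\nabla\loss(\bw_t)\|^2$.

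The missing idea is to center at $\overline{\mathbf q}_t:=\E[\mathbf q_t\mid\F_{t-1},\zeta_t]$, i.e., to condition on the direction as well. This centering is admissible in Cauchy--Schwarz because $\E[\mathbf d^0_t\mid\F_{t-1},\zeta_t]=0$. That is exactly where it matters that one $\zeta_t$ is shared by $\mathbf q_t$, $\qt{f}{\bw_t}$ and $\tildeqt{f}{\bw_t}$, and is independent of the batches. Write $\mathbf q_t=\langle\mathbf g_t,\zeta_t\rangle\zeta_t+\mu^{-1}R_t\zeta_t$ with Taylor remainder $|R_t|\le\frac{\beta\mu^2}{2}\|\zeta_t\|^2$. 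Then $\mathbf q_t-\overline{\mathbf q}_t=\langle\mathbf g_t-\nabla\loss(\bw_t),\zeta_t\rangle\zeta_t+\mu^{-1}(R_t-\E[R_t\mid\F_{t-1},\zeta_t])\zeta_t$. Using $\E[\langle\mathbf v,\zeta\rangle^2\|\zeta\|^2]=(d+2)\|\mathbf v\|^2$ and $\E\|\zeta\|^6=d(d+2)(d+4)$, this gives $\E\|\mathbf q_t-\overline{\mathbf q}_t\|^2\le 2(d+2)\sigma^2+\frac{\beta^2\mu^2}{2}d(d+2)(d+4)$. The same direct computation is needed for your second-moment bound. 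The $(d+6)^3$ residual you quote is valid but too loose to give the stated constants $\frac{d(d+4)}{4(d+3)^2T}$ and $\frac{d(d+2)(d+4)}{2(d+3)^2}$.

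This lower bound is not just the ``in particular'' clause; the main inequality relies on it. In your last step you multiply the variance, regret and $\Psi$ terms by an upper bound on $\beta\eta$. But $\E[\Psi_{1:T}(\lambda^*)]\le 0$, so for that term the replacement goes the wrong way whenever $T<d+2$, where $\eta=\frac{1}{4\beta(d+2)}$ and $\beta\eta<\frac{1}{4\sqrt{(d+2)T}}$. The paper avoids this as follows. Set $M:=\max\{d+2,\sqrt{(d+2)T}\}$, so that $\eta=\frac{1}{4\beta M}$ exactly. Collect the last terms as $\frac{d+2}{M}(\sigma^2+\frac{\E[\Psi_{1:T}(\lambda^*)]}{2(d+2)T}+\frac{\mu^2\beta^2d(d+4)}{4})$. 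Use the correctly centered lower bound to show this bracket is nonnegative, and only then replace $\frac{d+2}{M}$ by $\frac{\sqrt{d+2}}{\sqrt T}$. The remaining terms are handled with $\frac{M}{T}\le\frac{d+2}{T}+\frac{\sqrt{d+2}}{\sqrt T}$ and $\frac1M\le\frac{1}{\sqrt{(d+2)T}}$, and with that the rest of your bookkeeping goes through.
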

\end{mdframed}

One of the technical results used to prove Theorem~\ref{th: zeroth-order-scalar} is Lemma~\ref{lem:exp_q_and_diff_q_and_bar_q} in Appendix~\ref{app:C}, which relates the expected squared norm of the zeroth-order gradient estimator $\mathbf{q}_t$
to that of the corresponding first-order stochastic gradient $\mathbf{g}_t$. 
This connection allows us to express the zeroth-order convergence bound in terms of the variance $\sigma^2$ of stochastic gradients and an additional finite-difference remainder term. This analysis strategy is inspired from that adopted in the seminal work of \citet{ghadimi2013stochastic} on standard stochastic gradient-free methods without ML-labeled data; see the proof of Theorem~3.2 therein. Compared to their result, we obtain two additional terms in the upper bound in \oureqref{eq: zo_theorem_bound}. The first is the leading term in the second row in \oureqref{eq: zo_theorem_bound}, which corresponds to the regret incurred by the online FTL tuning procedure. As in the first-order result, this term is $O\left(\frac{\log T}{T^{3/2}}\right)$, and therefore negligible for sufficiently large $T$. The second additional contribution is $\frac{\E\left[  \Psi_{1:T} \left( \lambda^* \right) \right]}{2(d+2)T}$. Under the condition that $\E\left[\|\mathbf{d}_t^0\|^2\right]$ is nonzero for at least one $t \in [T]$, the terms in the final parenthesis of \oureqref{eq: zo_theorem_bound} can be written as $ \sigma^2 +\frac{ d (d+4) }{4 (d+3)^2 T} + \frac{\E\left[  \Psi_{1:T} \left( \lambda^* \right) \right]}{2(d+2)T} =  \sigma^2 +\frac{ d (d+4) }{4 (d+3)^2 T} - \frac{1}{2(d+2)}\frac{\left( \frac{1}{T} \sum_{t=1}^T \E \left[ \left\langle \mathbf{q}_t , \mathbf{d}_t^0   \right\rangle \right] \right)^2 }{ \frac{1}{T}\sum_{t=1}^T \E \left[ \left\| \mathbf{d}_t^0 \right\|^2 \right]}$. Since the final term is nonnegative and is subtracted, this quantity is no greater than $\sigma^2+\frac{d(d+4)}{4(d+3)^2T}$, which is the corresponding contribution from stochastic-gradient variance and the finite-difference remainder in the standard zeroth-order analysis. Hence, our results suggest that a similar benefit also arises in zeroth-order optimization.

We defer the analysis of Algorithm~\ref{alg:sgfm} with vector tuning to Appendix~\ref{app:C}. 

\section{Experiments} 
In this section, we evaluate the proposed first- and zeroth-order methods through a series of experiments on image- and text-classification tasks. We present a concise summary of the experimental results here, with further details provided in  Appendix~\ref{app:D}, where we also include additional experiments with linear models on regression and multiclass classification tasks
in Appendix~\ref{app:D3} and Appendix~\ref{app:D4} respectively.

\subsection{Image Classification}

\paragraph{Dataset and setup.}
We evaluate our first-order methods on CIFAR-10 \citep{krizhevsky2009learning} by training a linear classifier on top of a frozen ImageNet-pretrained ResNet-50 \citep{he2016deep,deng2009imagenet}. Specifically, we remove the final classification layer of ResNet-50 and use its $2{,}048$-dimensional penultimate-layer representations as fixed input features for the linear classifier. We partition the original CIFAR-10 training set into $5{,}000$ examples for training the ML-labeler $f(\cdot)$, $4{,}500$ labeled examples for training the classifier, $5{,}000$ validation examples, and $35{,}500$ examples that will be served as unlabeled samples and will be labeled by the ML-labeler, with the official $10{,}000$-example test split used for evaluation. 
The ML-labeler is a separate linear classifier trained on the disjoint training set, obtaining a test accuracy of $86.47\%$. The ML-labeler is then kept fixed and used to generate the ML labels for both the labeled and unlabeled examples. 

We evaluate four training methods for $100$ epochs: standard SGD using only true-labeled examples, PP-SSL with Online Tuning \citep{shoham2025prediction}, and Algorithm~\ref{alg:sgd} with scalar and vector tuning via FTL. For brevity, we refer to the latter two methods as Scalar FTL and Vector FTL when presenting the experimental results below.
We consider several $(n,N)$ configurations, with $n$ and $N$ denoting the labeled and unlabeled batch sizes, respectively. We use a constant learning rate throughout training, which we tune independently for each method and batch-size configuration.

\paragraph{Results.}
Figure~\ref{fig:cifar10_fo_acc_main} presents test-accuracy over training epochs for three batch-size configurations. Results for other batch-size configurations 
are available in Appendix~\ref{app:D.2.1}. 
We find that across all configurations that we try, Scalar FTL provides the most consistent improvement, achieving the highest mean test accuracy in all configurations. PP-SSL also generally improves over the labeled-only SGD baseline, although by a smaller margin, compared to our method (see also Table~\ref{tab:cifar10_batch_results} in Appendix~\ref{app:D.2.1}). We observe that the behavior of Vector FTL is less consistent across batch-size configurations: it performs comparably to Scalar FTL for some choices of $(n,N)$, but exhibits a noticeable degradation for others (Table~\ref{tab:cifar10_batch_results}). The tuning trajectories help explain these differences (Figure~\ref{fig:cifar10_fo_lambda} in Appendix~\ref{app:D.2.1}). The Scalar FTL tuning parameter $\lambda$ decreases smoothly throughout training, while the PP-SSL tuning parameter rapidly approaches zero, thereby substantially reducing the contribution of the ML-labeled correction after the early stages of training.
We also find that although the mean of the Vector FTL parameters, i.e., $\mathrm{mean}(\boldsymbol{\lambda}_t)$, follows a similarly smooth decreasing trend as in scalar tuning, its coordinatewise parameters exhibit substantial dispersion, with some coordinates receiving large positive weights and others negative weights throughout training. 
Additional results for the zeroth-order methods for CIFAR-10 are provided in Appendix~\ref{app:D.2.2}.

\begin{figure}[h!]
    \centering
    \vspace{-1mm}
    \includegraphics[width=1.0\linewidth]{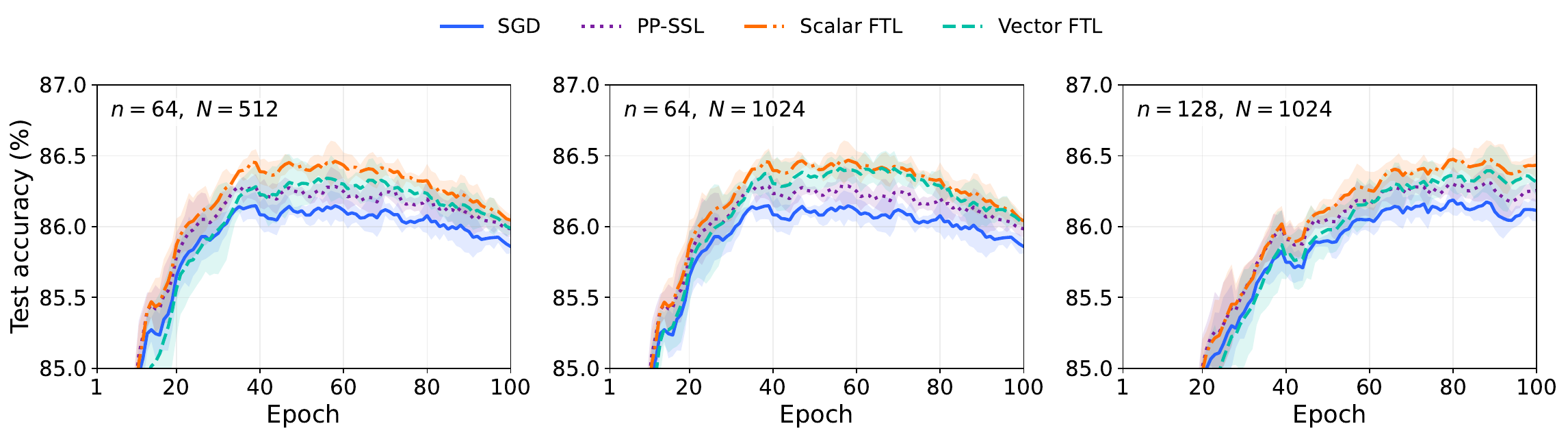}
    \vspace{-7mm}
    \caption{
    Test accuracy (\%) over training epochs for the first-order methods on CIFAR-10 with $n=64, N=512$ (left), $n=64, N=1024$ (middle), and $n=128, N=1024$ (right). Curves show the mean across five random seeds, with shaded regions indicating $\pm 1$ standard deviation. For visualization, test-accuracy curves are smoothed using a five-epoch moving average.
    }
    \label{fig:cifar10_fo_acc_main}
    \vspace{-1mm}
\end{figure}

\subsection{Text Classification}

\paragraph{Dataset and setup.}
We evaluate our zeroth-order methods on text classification using RoBERTa-large (355M parameters) \citep{liu2019robertarobustlyoptimizedbert}. Following \citet{malladi2023fine, zhang2026zeroth}, we consider few-shot fine-tuning with $K=32$ labeled examples per class on SST-2 \citep{socher2013recursive}, a binary sentiment classification task, and SST-5 \citep{socher2013recursive}, a five-class fine-grained sentiment classification task. This gives $64$ labeled training examples for SST-2 and $160$ for SST-5, with equally sized validation sets. For each task, we fine-tune a separate RoBERTa-large ML-labeler $f(\cdot)$ on $128$ labeled examples per class, obtaining test accuracies of $92.78\%$ on SST-2 and $58.30\%$ on SST-5. The ML-labeler are then kept fixed and used to generate ML-labels. The remaining training examples are treated as unlabeled data. 
We fine-tune the underlying model for $10{,}000$ optimization steps and compare three methods: standard stochastic gradient-free method without using ML-labeled data and our proposed zeroth-order methods (Algorithm~\ref{alg:sgfm}) with scalar and vector tuning, which we denote as ``ZO Scalar FTL" and ``ZO Vector FTL" respectively in the following.
We tune a constant learning rate for each method, and we use a labeled batch size of $n=32$ and an unlabeled batch size of $N=64$ for our methods.

\paragraph{Results.}
Figure~\ref{fig:sst_zo_acc_main} illustrates test-accuracy over training steps for Algorithm~\ref{alg:sgfm} with scalar and vector tuning on SST-2 and SST-5. On both SST-2 and SST-5, the proposed methods consistently improve over the ZO baseline after the early stages of training, as the tuning parameters become more stable after aggregating information over more optimization steps. In particular, we find that Algorithm~\ref{alg:sgfm} with scalar tuning performs similarly to its vector-tuning counterpart throughout most of training. The corresponding trajectories of the scalar tuning parameter and the mean of the vector tuning parameters are smooth and closely aligned. At the same time, the coordinatewise vector tuning parameters $\boldsymbol{\lambda}_t$ span a wide range, indicating that different coordinates continue to place different weights on the ML-labeled correction even when the mean of the vector tuning parameters evolves similarly to the scalar one (Figure~\ref{fig:sst_zo_lam} in Appendix~\ref{app:D.1.2}). Additional results on fine-tuning with the first-order methods are provided in Appendix~\ref{app:D.1.1}.

\begin{figure}[h]
    \centering
    \vspace{-2mm}
    \includegraphics[width=0.9\linewidth]{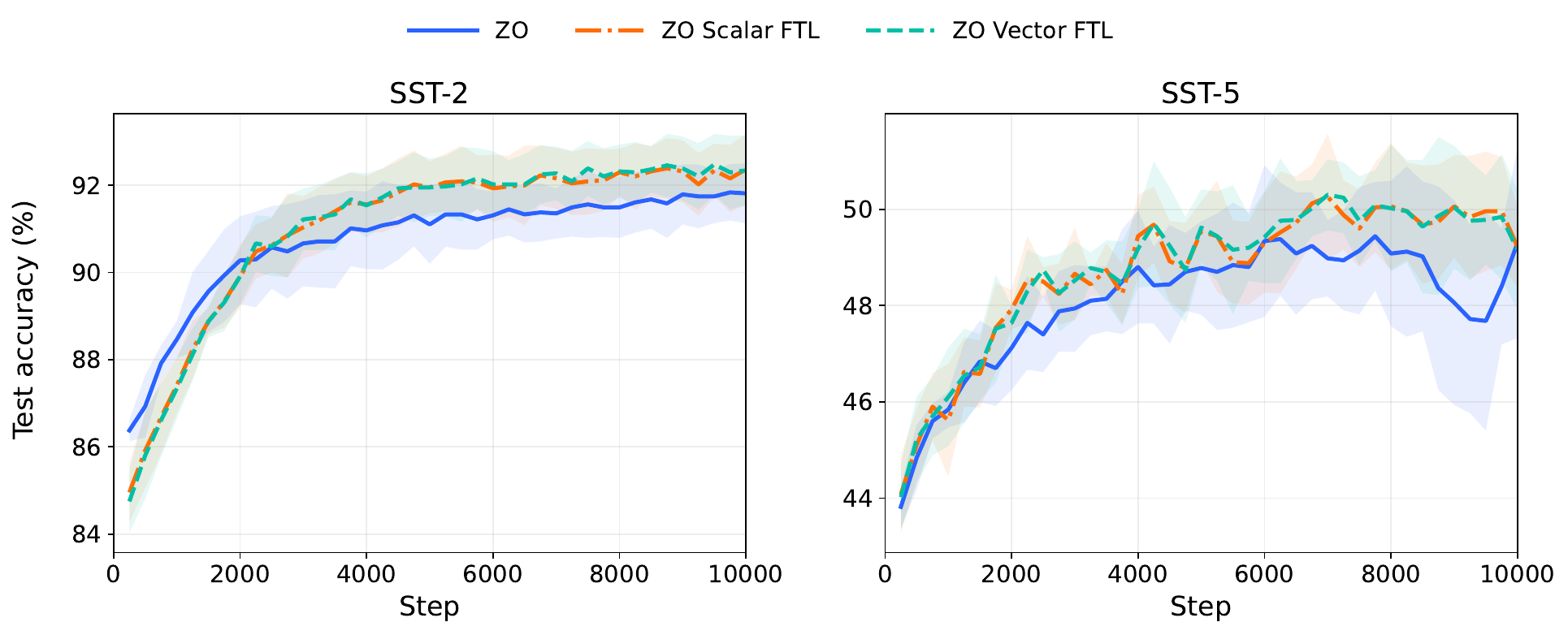}
    \vspace{-4mm}
    \caption{Test accuracy (\%) over training steps for the zeroth-order methods on SST-2 (left) and SST-5 (right). Test accuracy is evaluated every $250$ optimization steps. Solid curves denote the mean across five random seeds, and shaded regions indicate $\pm 1$ standard deviation.}
    \label{fig:sst_zo_acc_main}
\end{figure}

\section{Discussion and Future Directions}

In this work, we propose stochastic first- and zeroth-order optimization methods that use true-labeled and ML-labeled data to accelerate optimization. We consider both online scalar and vector tuning schemes that control the contribution of the ML-labeled gradient correction while preserving the convergence rate of standard stochastic optimization and enabling variance reduction.
Our results reveal a trade-off between the proposed tuning schemes. Although both may achieve similar convergence rates, vector tuning offers greater coordinatewise flexibility but exhibits greater sensitivity to batch sizes in our experiments, whereas scalar tuning tends to be more stable across configurations. A potential direction for future work is therefore to identify classes of objective functions and data conditions under which vector tuning provably outperforms scalar tuning, as well as settings in which its additional computational overhead outweighs the gains. Another related direction is to investigate intermediate levels of tuning granularity, such as layer-wise or tensor-wise tuning. Such schemes could better account for heterogeneity in gradient estimates across different blocks of model parameters while retaining some of the stability benefits of sharing a tuning parameter across multiple coordinates, which we plan to explore in future work.
Finally, our work assumes that ML labels are produced by a fixed ML model and accommodates settings in which only its labels, rather than the ML-labeler itself, are accessible. An interesting direction is to consider settings in which the ML-labeler is accessible and can also be updated on the fly.

\section*{Acknowledgements}
The authors appreciate the support from NSF CCF-2403392, as well as the Google Gemma Academic Program and Google Cloud Credits.

\bibliographystyle{plainnat}
% \bibliography{refs}

\newpage

\appendix

\section{More Related Works} \label{app:A}

\paragraph{Semi-Supervised Learning.}
Our setting is closely related to Semi-Supervised Learning (SSL), which extends classical supervised learning by leveraging abundant unlabeled data alongside a limited amount of labeled data, thus offering a practical approach to improving learning performance while reducing the reliance on costly data annotation \citep{zhu2005semisupervised, chapelle2006semisupervised, vanEngelen2020survey}. Modern SSL methods include self-training approaches \citep{xie2020self, wei2021crest, chen2022debiased}, which iteratively use model predictions on unlabeled data to construct additional supervision and retrain the model, and more broadly pseudo-labeling techniques \citep{lee2013pseudo, oh2022daso, wang2022debiased, zhang2025revisiting}, which use model-generated predictions as supervisory targets for unlabeled examples. Some popular approaches combine label guessing or pseudo-labeling with consistency regularization and data augmentation, including MixMatch \citep{berthelot2019mixmatch} and ReMixMatch \citep{berthelot2020remixmatch}. A more recent line of work on pseudo-labeling methods controls the contribution of unlabeled examples based on model confidence. In particular, FixMatch \citep{sohn2020fixmatch} uses a fixed confidence threshold, while subsequent methods such as FlexMatch \citep{zhang2021flexmatch} and FreeMatch \citep{wang2023freematch} adapt the threshold during training, and SoftMatch \citep{chen2023softmatch} replaces hard thresholding with confidence-based soft weighting. Other works employ alternative selection criteria: Dash \citep{xu2021DashSL} dynamically adjusts a loss-based selection threshold, instead of a confidence threshold, while InstanT \citep{li2023instant} introduces instance-dependent thresholds for pseudo-label selection. In contrast to the SSL approaches, our work utilizes ML-labels not as direct supervisory targets in the training objective, but as auxiliary information for constructing a stochastic gradient estimator whose contribution is tuned online to reduce variance.

\paragraph{Knowledge Distillation.}
Our work is also connected to Knowledge Distillation (KD), which focuses on  transferring knowledge from a teacher model to a student model, typically by matching the teacher’s output logits \citep{bucila2006modelcompression, ba2014dodeepnets} or softened predictive distribution \citep{ hinton2015distillingknowledgeneuralnetwork}. Unlike SSL, whose goal is to utilize unlabeled data to improve learning, KD is primarily concerned with teacher-to-student knowledge transfer and does not inherently require unlabeled data. Subsequent work has extended KD beyond output-logit matching by transferring intermediate feature representations \citep{tian2020contrastive, deng2021comprehensive, huang2023efficient}, and relational or structural information \citep{tian2020contrastive, derijk2022structural}. Other works explore alternative distillation objectives based on contrastive, correlation-based, Wasserstein, and ranking losses \citep{huang2022stronger, lv2024wasserstein, bassam2025pld}.
Beyond the design of distillation objectives and transferred representations, recent work has examined why and when distillation is effective, including optimization difficulties that can prevent students from matching the teacher’s predictive distribution \citep{stanton2021does}, bias-variance tradeoffs arising in self-distillation under label noise \citep{das2023understanding}, the transfer of teacher-specific properties to the student \citep{ojha2023what}, and the potential benefits of allowing students to deviate from their teachers \citep{nagarajan2023on}. More recently, KD has also been studied in modern generative-model settings, where distillation can induce a precision-recall tradeoff in the student's learned distribution \citep{cha2025why}. Most closely related to our work, \citet{safaryan2023knowledge} study KD through the lens of stochastic variance reduction, showing that it can partially reduce gradient noise in linear and deep linear models. Our setting differs from KD in that we do not optimize a teacher-matching objective, but rather the ML-labels are used to construct an unbiased gradient estimator for the original population objective.

\paragraph{Optimization with Surrogate Data.}
Another closely related line of research considers learning with surrogate data, where auxiliary data sources or proxy objectives are used to improve optimization or learning on a target task. \citet{Woodworth2023TwoLosses} consider optimization with an expensive target loss and a cheaper proxy loss, and develop an algorithm that exploits curvature information from the proxy loss to reduce the number of stochastic gradient evaluations of the target objective. \citet{jain2024scaling} study learning from mixtures of real and surrogate data and characterize how surrogate samples should be weighted to improve performance on the target distribution. Related work on auxiliary learning similarly investigates how additional tasks or datasets can be constructed, selected, or weighted to improve learning on a primary task
\citep{shi2020auxiliary, albalak2023improving, liu2019self}.

\newpage

\section{Proofs of the Theoretical Results in Section~\ref{sec:first_order_stochastic_optimization}} \label{app:B}

Before proving the theorems, we need a series of lemmas.

We denote $\delta_t:= \Gtvec{\bw_t} - \nabla \loss(\bw)$ in the following and use $\E_{t-1}[ \cdot ] := \E[ \cdot \mid \mathcal{F}_{t-1} ]$ for brevity.

\begin{mdframed}[backgroundcolor=black!10,rightline=false,leftline=false,topline=false,bottomline=false]
\vspace{-2mm}
\begin{lemma}(Adapted from (2.9) in \citet{ghadimi2013stochastic})
Consider $\bw_{t+1} = \bw_t - \eta_t \, \Gtvec{\bw_t}$.
Under Assumption~\ref{A1} and Assumption~\ref{A2}, it holds that  
\begin{equation} \label{eq:2.9}
\sum_{t=1}^T \left( \eta_t  - \frac{\beta}{2} \eta_t^2 \right)  \| \nabla  \loss(\bw_t) \|^2 \leq \Delta_1
- \sum_{t=1}^T \left( \eta_t - \beta \eta_t^2  \right) \langle \nabla \loss(\bw_t), \delta_t \rangle 
+ \frac{\beta}{2} \sum_{t=1}^T \eta_t^2 \| \delta_t \|^2.
\end{equation}
\end{lemma}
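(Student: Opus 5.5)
The plan is to apply the $\beta$-smoothness of $\loss$ (Assumption~\ref{A2}) to consecutive iterates, sum over $t$, and then substitute the decomposition $\Gtvec{\bw_t} = \nabla\loss(\bw_t) + \delta_t$. Here $\delta_t$ is read as $\Gtvec{\bw_t} - \nabla\loss(\bw_t)$, evaluated at the current iterate. This is the standard deterministic descent-lemma computation from \citet{ghadimi2013stochastic}. It needs no expectation, so the bound holds pathwise.

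First, with $\bv = \bw_t$ and $\bw = \bw_{t+1} = \bw_t - \eta_t \Gtvec{\bw_t}$, smoothness gives
\[
\loss(\bw_{t+1}) \le \loss(\bw_t) - \eta_t \langle \nabla\loss(\bw_t), \Gtvec{\bw_t}\rangle + \frac{\beta}{2}\eta_t^2 \|\Gtvec{\bw_t}\|^2 .
\]
Substituting $\Gtvec{\bw_t} = \nabla\loss(\bw_t)+\delta_t$ turns the inner product into $\|\nabla\loss(\bw_t)\|^2 + \langle \nabla\loss(\bw_t),\delta_t\rangle$. The squared norm expands as $\|\nabla\loss(\bw_t)\|^2 + 2\langle\nabla\loss(\bw_t),\delta_t\rangle + \|\delta_t\|^2$. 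Collecting terms gives
\[
\loss(\bw_{t+1}) \le \loss(\bw_t) - \Bigl(\eta_t - \frac{\beta}{2}\eta_t^2\Bigr)\|\nabla\loss(\bw_t)\|^2 - \bigl(\eta_t - \beta\eta_t^2\bigr)\langle\nabla\loss(\bw_t),\delta_t\rangle + \frac{\beta}{2}\eta_t^2\|\delta_t\|^2 .
\]

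Second, I rearrange and sum over $t=1,\dots,T$. The left side telescopes to $\loss(\bw_1)-\loss(\bw_{T+1})$. By Assumption~\ref{A1}, $\loss(\bw_{T+1}) \ge \inf_{\bw}\loss(\bw) > -\infty$, so this is at most $\Delta_1$. This yields \eqref{eq:2.9} directly.

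There is no real obstacle here. The only points needing care are two. First, the coefficient of the cross term must come out as $\eta_t - \beta\eta_t^2$, combining the $-\eta_t$ from the linear term with the $+\beta\eta_t^2$ from the quadratic expansion. Second, the lemma uses only smoothness of $\loss$ itself, which Assumption~\ref{A2} states follows from the smoothness of the individual losses. No unbiasedness of $\Gtvec{\bw_t}$ is needed at this stage; that enters only later, when conditional expectations are taken to kill the cross term.
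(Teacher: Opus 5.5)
Your proposal is correct and matches the paper's proof step for step: apply the $\beta$-smoothness inequality to $\bw_t$ and $\bw_{t+1}$, substitute $\Gtvec{\bw_t} = \nabla \loss(\bw_t) + \delta_t$ so that the cross terms combine with coefficient $\eta_t - \beta\eta_t^2$, then telescope and use Assumption~\ref{A1} to bound $\loss(\bw_1) - \loss(\bw_{T+1})$ by $\Delta_1$. Your remark that the bound holds pathwise and needs no unbiasedness also agrees with the paper, which takes expectations only afterwards, in Lemma~\ref{lem:conditional}.
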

\end{mdframed}

\begin{proof}
For reader's convenience, we replicate the analysis of \citet{ghadimi2013stochastic} below. 

By $\beta$-smoothness,
\begin{align*}
\loss(\bw_{t+1}) &\leq \loss(\bw_t) + \left\langle \nabla \loss(\bw_t), \bw_{t+1}-\bw_t \right\rangle + \frac{\beta}{2}\|\bw_{t+1}-\bw_t\|^2 \\
    &=
    \loss(\bw_t)
    - \eta_t
      \left\langle
        \nabla \loss(\bw_t),
        \nabla \loss(\bw_t)+\delta_t
      \right\rangle
    + \frac{\beta}{2}\eta_t^2
      \|\nabla \loss(\bw_t)+\delta_t\|^2 \\
    &=
    \loss(\bw_t)
    - \eta_t\|\nabla \loss(\bw_t)\|^2
    - \eta_t\langle \nabla \loss(\bw_t),\delta_t\rangle + \frac{\beta}{2}\eta_t^2
      \left(
        \|\nabla \loss(\bw_t)\|^2
        + 2\langle \nabla \loss(\bw_t),\delta_t\rangle
        + \|\delta_t\|^2
      \right) \\
    &=
    \loss(\bw_t)
    - \left(\eta_t-\frac{\beta}{2}\eta_t^2\right)
      \|\nabla \loss(\bw_t)\|^2
    - \left(\eta_t-\beta\eta_t^2\right)
      \langle \nabla \loss(\bw_t),\delta_t\rangle
    + \frac{\beta}{2}\eta_t^2\|\delta_t\|^2.
\end{align*}
Rearranging and summing over $t=1,\ldots,T$, the loss
differences telescope, which gives,
\begin{align*}
    \sum_{t=1}^T
      \left(\eta_t-\frac{\beta}{2}\eta_t^2\right)
      \|\nabla \loss(\bw_t)\|^2 & \leq
    \loss(\bw_1)-\loss(\bw_{T+1})
    - \sum_{t=1}^T
      \left(\eta_t-\beta\eta_t^2\right)
      \langle \nabla \loss(\bw_t),\delta_t\rangle
    + \frac{\beta}{2}
      \sum_{t=1}^T \eta_t^2\|\delta_t\|^2
      \\ & 
      \leq
    \Delta_1
    - \sum_{t=1}^T
      \left(\eta_t-\beta\eta_t^2\right)
      \langle \nabla \loss(\bw_t),\delta_t\rangle
    + \frac{\beta}{2}
      \sum_{t=1}^T \eta_t^2\|\delta_t\|^2.
\end{align*}
\end{proof}

\begin{mdframed}[backgroundcolor=black!10,rightline=false,leftline=false,topline=false,bottomline=false]
\vspace{-2mm}
\begin{lemma} \label{lem:variance_delta}
Assume that $\boldsymbol{\lambda}_t$ is $\mathcal{F}_{t-1}$-measurable and hence $\E_{t-1}[ \Gtvec{\bw_t}  ] = \nabla \loss(\bw_t)$.
Then, $$\mathbb E_{t-1}\left[  \left \| \delta_t \right \|^2 \right] = 
\mathbb E_{t-1}\left[ \left \| \Gtvec{\bw_t}  \right \|^2 \right]  -  \| \nabla  \loss(\bw_t) \|^2.$$ 
\end{lemma}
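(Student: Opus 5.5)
This is the conditional bias–variance identity. The plan is to expand the square and use conditional unbiasedness, so the cross term collapses.

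By definition (reading the $\nabla \loss(\bw)$ in the definition of $\delta_t$ as $\nabla \loss(\bw_t)$), we have $\delta_t = \Gtvec{\bw_t} - \nabla \loss(\bw_t)$. Expanding gives
\[
\|\delta_t\|^2 = \|\Gtvec{\bw_t}\|^2 - 2\langle \Gtvec{\bw_t}, \nabla \loss(\bw_t)\rangle + \|\nabla \loss(\bw_t)\|^2 .
\]
Next we take $\E_{t-1}[\cdot]$ of both sides. Since $\bw_t$ is determined by the samples drawn up to step $t-1$, it is $\mathcal{F}_{t-1}$-measurable, and so is $\nabla \loss(\bw_t)$. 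We can therefore pull $\nabla \loss(\bw_t)$ out of the conditional expectation in the cross term, which gives
\[
\E_{t-1}\left[\langle \Gtvec{\bw_t}, \nabla \loss(\bw_t)\rangle\right] = \langle \E_{t-1}[\Gtvec{\bw_t}], \nabla \loss(\bw_t)\rangle = \|\nabla \loss(\bw_t)\|^2,
\]
where the last equality uses the hypothesis $\E_{t-1}[\Gtvec{\bw_t}] = \nabla \loss(\bw_t)$. Substituting back, the terms combine as $-2\|\nabla \loss(\bw_t)\|^2 + \|\nabla \loss(\bw_t)\|^2 = -\|\nabla \loss(\bw_t)\|^2$, which is exactly the claimed identity.

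The only point needing care is the unbiasedness hypothesis itself. It holds because $\boldsymbol{\lambda}_t$ is $\mathcal{F}_{t-1}$-measurable and the fresh minibatches $B_t,\tilde{B}_t$ are drawn i.i.d. from the same feature distribution, independently of $f$ and of the past. Hence
\[
\E_{t-1}[\gt{y}{\bw_t}] = \nabla \loss(\bw_t), \qquad \E_{t-1}\big[\tildegt{f}{\bw_t} - \gt{f}{\bw_t}\big] = 0 .
\]
Pulling $\boldsymbol{\lambda}_t$ out of the Hadamard product then shows $\E_{t-1}[\Gtvec{\bw_t}] = \nabla \loss(\bw_t)$. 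Both facts are assumed in the statement, so no further obstacle is expected. Finally, integrability of $\|\Gtvec{\bw_t}\|^2$ is implicitly needed for the conditional expectations to be finite; if it fails, the identity still holds in the extended sense.
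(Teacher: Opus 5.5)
Your proposal is correct and follows essentially the same route as the paper: expand $\|\delta_t\|^2$, pull the $\mathcal{F}_{t-1}$-measurable $\nabla \loss(\bw_t)$ out of the cross term, and use the conditional unbiasedness hypothesis to collapse that term to $\|\nabla \loss(\bw_t)\|^2$. Your reading of $\nabla \loss(\bw)$ as $\nabla \loss(\bw_t)$ in the definition of $\delta_t$ and your integrability remark are accurate details that the paper leaves implicit.
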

\end{mdframed}

\begin{proof}

\begin{align}
\mathbb E_{t-1}\left[ \| \delta_t \|^2 \right] & = 
\mathbb E_{t-1}\left[\left  \| \Gtvec{\bw_t} \right \|^2  \right]
- \mathbb E_{t-1} \left[ 2 \langle \Gtvec{\bw_t} , \nabla \loss(\bw_t) \rangle  \right]
+  \mathbb E_{t-1} \left[ \| \nabla  \loss(\bw_t) \|^2  \right]
\\ & = 
\mathbb E_{t-1}\left[\left  \| \Gtvec{\bw_t} \right \|^2  \right] - \| \nabla  \loss(\bw_t) \|^2,
\end{align}
where the last equality uses that $\E_{t-1}[ \Gtvec{\bw_t}  ] = \nabla \loss(\bw_t)$.

\end{proof}

\newpage

\begin{mdframed}[backgroundcolor=black!10,rightline=false,leftline=false,topline=false,bottomline=false]
\vspace{-2mm}
\begin{lemma} \label{lem:conditional}
Under Assumption~\ref{A1} and Assumption~\ref{A2}.
Suppose that $\boldsymbol{\lambda}_t$ is $\mathcal{F}_{t-1}$-measurable and hence $\E_{t-1}[ \Gtvec{\bw_t}  ] = \nabla L(\bw_t)$.
Then,
$$
\sum_{t=1}^T \left( \eta_t  - \frac{\beta}{2} \eta_t^2 \right) \E\left[ \| \nabla  \loss(\bw_t) \|^2 \right]
\leq 
 \Delta_1
+ \frac{\beta}{2} \sum_{t=1}^T \eta_t^2  \, 
\E\left[ \mathbb E_{t-1}\left[ \left \| \Gtvec{\bw_t}  \right \|^2 \right]  -  \| \nabla  \loss(\bw_t) \|^2 \right].$$
\end{lemma}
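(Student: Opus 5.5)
The argument starts from inequality \eqref{eq:2.9} of the preceding lemma, which holds pathwise for every realization. We take total expectations on both sides. The left-hand side immediately becomes $\sum_{t=1}^T \left( \eta_t - \frac{\beta}{2}\eta_t^2\right)\E\left[\|\nabla\loss(\bw_t)\|^2\right]$, because the step sizes $\eta_t$ are deterministic. The constant $\Delta_1$ is unaffected. It therefore remains to handle the two stochastic terms on the right-hand side, namely the cross term $\langle \nabla\loss(\bw_t),\delta_t\rangle$ and the squared-error term $\|\delta_t\|^2$.

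For the cross term, we use the tower property, $\E[\cdot] = \E[\E_{t-1}[\cdot]]$. The iterate $\bw_t$ is $\mathcal{F}_{t-1}$-measurable, since it is built only from minibatches and weights up to step $t-1$. Hence $\nabla\loss(\bw_t)$ can be pulled out of the conditional expectation:
$\E_{t-1}\left[\langle \nabla\loss(\bw_t),\delta_t\rangle\right] = \langle \nabla\loss(\bw_t), \E_{t-1}[\Gtvec{\bw_t}] - \nabla\loss(\bw_t)\rangle$.
This vanishes by the assumed conditional unbiasedness. Unbiasedness holds here because $\boldsymbol{\lambda}_t$ is $\mathcal{F}_{t-1}$-measurable, so it factors out of the conditional expectation. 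In addition, $\E_{t-1}[\tildegt{f}{\bw_t}] = \E_{t-1}[\gt{f}{\bw_t}] = \nabla\loss^f(\bw_t)$, which relies on the i.i.d. sampling of features and the independence of $f$. Consequently the entire middle sum drops out in expectation.

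For the squared term, we again apply the tower property to write $\E\left[\|\delta_t\|^2\right] = \E\left[\E_{t-1}\left[\|\delta_t\|^2\right]\right]$. Lemma~\ref{lem:variance_delta} then replaces the inner quantity by $\E_{t-1}\left[\|\Gtvec{\bw_t}\|^2\right] - \|\nabla\loss(\bw_t)\|^2$. Substituting both results into the expectation of \eqref{eq:2.9} yields exactly the claimed inequality.

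The only delicate point is technical: we must ensure that the expectations are finite, so that linearity and the tower property are legitimate. No boundedness of the gradients is assumed. I would handle this by noting that each conditional second moment is finite under Assumption~\ref{A3} together with $\mathcal{F}_{t-1}$-measurability. If integrability fails, the right-hand side is $+\infty$ and the inequality is trivial. Apart from this, the argument is a direct combination of the two preceding lemmas.
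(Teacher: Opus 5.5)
Your proposal is correct and follows the paper's proof exactly. You take expectations in \oureqref{eq:2.9}, eliminate the cross term via the tower property and conditional unbiasedness (using that $\bw_t$ is $\mathcal{F}_{t-1}$-measurable), and rewrite $\E[\|\delta_t\|^2]=\E[\E_{t-1}[\|\delta_t\|^2]]$ using Lemma~\ref{lem:variance_delta}.

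One small caveat on your integrability remark. Assumption~\ref{A3} is not a hypothesis of this lemma, and it only controls the true-label gradients, so it does not make $\E_{t-1}[\|\boldsymbol{\lambda}_t\odot\mathbf{d}_t\|^2]$ finite. What actually covers this case is your fallback that an infinite right-hand side makes the claim trivial; the paper does not address integrability at all.
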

\end{mdframed}

\begin{proof}
Taking expectation on both sides of \oureqref{eq:2.9}, noting that 
$\mathbb \E[ \langle \nabla \loss(\bw_t), \delta_t \rangle ] = 0$, we have
\begin{align}
\sum_{t=1}^T \left( \eta_t  - \frac{\beta}{2} \eta_t^2 \right) \E\left[ \| \nabla  \loss(\bw_t) \|^2 \right] & \leq \Delta_1
+ \frac{\beta}{2} \sum_{t=1}^T \eta_t^2  \,  \E\left[ \| \delta_t \|^2 \right] \notag
\\ &
=  \Delta_1
+ \frac{\beta}{2} \sum_{t=1}^T \eta_t^2  \, 
\E\left[  \E_{t-1}\left[ \left \| \delta_t  \right \|^2 \right]  \right] \notag
\\ &
=  \Delta_1
+ \frac{\beta}{2} \sum_{t=1}^T \eta_t^2  \, 
\E\left[ \mathbb E_{t-1}\left[ \left \| \Gtvec{\bw_t}  \right \|^2 \right]  -  \| \nabla  \loss(\bw_t) \|^2 \right], \label{eq:17}
\end{align}
where \oureqref{eq:17} uses Lemma~\ref{lem:variance_delta}.
\end{proof}

Now we recall the notion of vector tuning as a game introduced in Subsection~\ref{sub:our_methods},
where the sequence of loss functions are 
$\phi_t(\boldsymbol{\lambda}):=  \left \| \mathcal{G}_{t,\bw_t}^{\boldsymbol{\lambda}}  \right \|^2= \left \|  \mathbf{g}_t + \boldsymbol{\lambda} \odot \mathbf{d}_t  \right \|^2 $.
For this, we define
$$ \mathrm{Regret}_T^{\phi}(\boldsymbol{\lambda}^*):= \sum_{t=1}^T \phi_t(\boldsymbol{\lambda}_t) - \sum_{t=1}^T \phi_t(\boldsymbol{\lambda}^*),$$
where $\boldsymbol{\lambda}_t \in \reals^d$ is the action at $t$ committed by the online learner and $\boldsymbol{\lambda}^* \in \reals^d$ is a comparator. 

We also recall that the case of vector tuning actually covers the case of scalar tuning as a special case. Specifically, given a scalar tuning parameter $\lambda \in \reals$, a scalar-vector multiplication $\lambda \bv$ can be rewritten as $\lambda \bv = \lambda \mathbf{1} \odot \bv$, where $\mathbf{1} \in \reals^d$ is the all-ones vector. This allows us to invoke Lemma~\ref{lem:conditional}
and Lemma~\ref{lem:reg0} below
 by letting 
$\boldsymbol{\lambda}_t \gets \lambda_t \mathbf{1}$
and 
$\boldsymbol{\lambda}^* \gets \lambda^* \mathbf{1}$
in the lemmas when analyzing the case of scalar tuning.

\begin{mdframed}[backgroundcolor=black!10,rightline=false,leftline=false,topline=false,bottomline=false]
\vspace{-2mm}
\begin{lemma} \label{lem:reg0}
Denote 
$\Phi_{1:T}(\boldsymbol{\lambda^*}):= \sum_{t=1}^T \left \| \g_{t,\bw_t}^{\boldsymbol{\lambda^*}} \right \|^2 -
 \sum_{t=1}^T \left \| \g_{t,\bw_t}^{0} \right \|^2$.
Set $\eta_t = \eta$, under Assumption~\ref{A1},~\ref{A2},~and~\ref{A3}, we have
\begin{equation} \label{eq:reg}
\sum_{t=1}^T \left( \eta  - \frac{\beta}{2} \eta^2 \right) \E\left[ \| \nabla  \loss(\bw_t) \|^2 \right] 
\leq
 \Delta_1
+ \frac{\beta \eta^2}{2}   \, 
\E\left[ \mathrm{Regret}_T^{\phi}(\boldsymbol{\lambda^*}) \right] + \frac{\beta \eta^2}{2}  \E[ \Phi_{1:T}(\boldsymbol{\lambda^*}) ] +
\frac{\beta \eta^2 T}{2} 
 \sigma^2.
\end{equation}
\end{lemma}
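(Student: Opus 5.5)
The plan is to start from Lemma~\ref{lem:conditional} with constant step size $\eta_t=\eta$. This requires $\boldsymbol{\lambda}_t$ to be $\mathcal{F}_{t-1}$-measurable, which holds for both FTL schemes \oureqref{scalar-tuning-first-order} and \oureqref{vector-tuning-first-order}, since they use only $\mathbf{g}_s,\mathbf{d}_s$ for $s\le t-1$. The lemma gives
\[
\sum_{t=1}^T \left(\eta-\tfrac{\beta}{2}\eta^2\right)\E\left[\|\nabla\loss(\bw_t)\|^2\right]\le \Delta_1+\tfrac{\beta\eta^2}{2}\sum_{t=1}^T\E\left[\|\Gtvec{\bw_t}\|^2-\|\nabla\loss(\bw_t)\|^2\right],
\]
where the tower property lets us drop the inner conditional expectation.

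Next we decompose the sum telescopically around the comparator and the baseline $\boldsymbol{\lambda}=0$. Writing $\|\Gtvec{\bw_t}\|^2=\phi_t(\boldsymbol{\lambda}_t)$, we have
\[
\sum_t\phi_t(\boldsymbol{\lambda}_t)=\mathrm{Regret}_T^{\phi}(\boldsymbol{\lambda}^*)+\sum_t\left(\phi_t(\boldsymbol{\lambda}^*)-\phi_t(0)\right)+\sum_t\phi_t(0).
\]
The middle sum is exactly $\Phi_{1:T}(\boldsymbol{\lambda}^*)$, and $\phi_t(0)=\|\mathbf{g}_t\|^2=\|\g_{t,\bw_t}^0\|^2$. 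Substituting this identity into the bound and taking expectations produces the regret term and the $\Phi_{1:T}$ term with coefficient $\frac{\beta\eta^2}{2}$. The identity is pathwise, so it holds for any fixed, even random, comparator. The expectations need some care, but the statement only requires linearity of expectation, assuming integrability.

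It remains to bound $\sum_t\E\left[\|\mathbf{g}_t\|^2-\|\nabla\loss(\bw_t)\|^2\right]$ by $T\sigma^2$. Because $\mathbf{g}_t$ is a minibatch average of true-label gradients at $\bw_t$, with samples drawn fresh and independent of $\mathcal{F}_{t-1}$, we have $\E_{t-1}[\mathbf{g}_t]=\nabla\loss(\bw_t)$. Hence
\[
\E_{t-1}\left[\|\mathbf{g}_t\|^2\right]-\|\nabla\loss(\bw_t)\|^2=\E_{t-1}\left[\|\mathbf{g}_t-\nabla\loss(\bw_t)\|^2\right]\le\sigma^2/|B_t|\le\sigma^2,
\]
using Assumption~\ref{A3} and the independence of the minibatch samples. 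Summing over $t$ and taking expectations yields the $\frac{\beta\eta^2T}{2}\sigma^2$ term.

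The only step needing care is this last conditioning argument: Assumption~\ref{A3} must be applied conditionally at the random point $\bw_t$, which is legitimate because the fresh samples are independent of $\mathcal{F}_{t-1}$. Everything else is the algebraic add-and-subtract, so I expect no real obstacle.
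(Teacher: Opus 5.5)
Your proposal is correct and follows the paper's proof: invoke Lemma~\ref{lem:conditional} with $\eta_t=\eta$, add and subtract $\phi_t(\boldsymbol{\lambda}^*)$ and $\phi_t(0)=\|\mathbf{g}_t\|^2$ to split off the regret and $\Phi_{1:T}(\boldsymbol{\lambda}^*)$, and bound the leftover $\E[\|\mathbf{g}_t\|^2-\|\nabla\loss(\bw_t)\|^2]$ by $\sigma^2$ via Assumption~\ref{A3}. The only difference is cosmetic: you pass through the sharper $\sigma^2/|B_t|$ using independence within the minibatch, whereas the paper applies Assumption~\ref{A3} directly to $\mathbf{g}_t$.
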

\end{mdframed}

\begin{proof}
From Lemma~\ref{lem:conditional}, 
\begin{align}
\sum_{t=1}^T \left( \eta_t  - \frac{\beta}{2} \eta_t^2 \right) & \E\left[ \| \nabla  \loss(\bw_t) \|^2 \right] 
\leq  \Delta_1
+ \frac{\beta}{2} \sum_{t=1}^T \eta_t^2  \, 
\E\left[ \mathbb E_{t-1}\left[ \left \| \mathcal{G}_{t,\bw_t}^{\boldsymbol{\lambda}_t}  \right \|^2 \right]  -  \| \nabla \loss(\bw_t) \|^2 \right]
\\& = \Delta_1
+ \frac{\beta}{2} \sum_{t=1}^T \eta_t^2  \, 
\E\left[ \mathbb E_{t-1}\left[ 
\phi_t(\boldsymbol{\lambda}_t) - \phi_t(\boldsymbol{\lambda^*})
+ \phi_t(\boldsymbol{\lambda^*}) - \phi_t(0) + \phi_t(0) \right]
-
 \| \nabla \loss(\bw_t) \|^2 \right]
\\& 
\leq \Delta_1
+ \frac{\beta}{2} \sum_{t=1}^T \eta_t^2  \, 
\E\left[ \mathbb E_{t-1}\left[ 
\phi_t(\boldsymbol{\lambda}_t) - \phi_t(\boldsymbol{\lambda^*})
+ \phi_t(\boldsymbol{\lambda^*}) - \phi_t(0)\right] \right]+ \frac{\beta}{2} \sum_{t=1}^T \eta_t^2 \sigma^2,
\end{align}
where the last inequality is due to Assumption~\ref{A3}, 
i.e., 
$
\mathbb{E} \left[ \left\| \mathbf{g}_t - \nabla \loss(\bw_t) \right\|^2 \right] \leq \sigma^2.
$

Now set $\eta_t = \eta$. We further obtain
\begin{align}
\sum_{t=1}^T \left( \eta  - \frac{\beta}{2} \eta^2 \right) \E\left[ \| \nabla \loss(\bw_t) \|^2 \right] 
\leq
 \Delta_1
+ \frac{\beta}{2}  \eta^2  \, 
\E\left[ \mathrm{Regert}_T^{\phi}(\boldsymbol{\lambda^*}) \right] + \frac{\beta}{2} \eta^2  \E[ \Phi_{1:T}(\boldsymbol{\lambda^*}) ] +
\frac{\beta T}{2} \eta^2
 \sigma^2.
\end{align}
\end{proof}

As described in the main text, we propose using the Follow-the-Leader (FTL) strategy in the online tuning as a game, i.e., initialize $\lambda_1 \gets 0$, and for all $t>1$, play
\begin{align}
    \lambda_t \gets \underset{ \lambda \in \reals }{\arg\min}  \sum_{s=1}^{t-1}  \phi_s(\lambda), \text{ where } \phi_s(\lambda):= \left \|  \mathbf{g}_s + \lambda \mathbf{d}_s  \right \|^2. \label{eq: FTL}
\end{align}

\begin{mdframed}[backgroundcolor=black!10,rightline=false,leftline=false,topline=false,bottomline=false]
\vspace{-2mm}
\begin{lemma}(Lemma 2.1 in \cite{ShalevShwartz2011Online}; Be-the-Leader lemma) \label{lem: shalev2025online}
Consider a learner that receives a sequence of loss functions $h_1(\cdot), h_2(\cdot), \dots, h_T(\cdot)$ in an online learning setting.
Then, Follow-the-Leader (FTL), i.e., $\theta_t \in \arg\min_{\theta \in \Theta} \sum_{s=1}^{t-1} h(\theta)$, satisfies
$$\mathrm{Regret}_T^{h}(\theta^*):= \sum_{t=1}^{T} h_t(\theta_t) - \sum_{t=1}^T \phi_t(\theta^*)
\leq \sum_{t=1}^{T} \left( h_t(\theta_t) - h_t(\theta_{t+1}) \right),$$
where $\theta^* \in \Theta$ is a comparator in the same decision space $\Theta$
as FTL.
\end{lemma}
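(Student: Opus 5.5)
This is the standard Be-the-Leader argument, proved by induction on $T$. Write $\theta_{t+1} \in \arg\min_{\theta\in\Theta} \sum_{s=1}^{t} h_s(\theta)$ for the FTL iterate after $t$ losses. Subtracting $\sum_t h_t(\theta_t)$ from both sides, the claimed inequality is equivalent to
\[
\sum_{t=1}^T h_t(\theta_{t+1}) \leq \sum_{t=1}^T h_t(\theta^*) \quad \text{for every } \theta^* \in \Theta .
\]
In words, the ``cheating'' learner who plays $\theta_{t+1}$ at round $t$ does at least as well as any fixed comparator. This is what I will prove.

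\textbf{Base case $T=1$.} Here $\theta_2 \in \arg\min_\theta h_1(\theta)$. Hence $h_1(\theta_2) \le h_1(\theta^*)$ for any $\theta^*$.

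\textbf{Inductive step.} Assume the statement holds for $T-1$, with an arbitrary comparator. Apply it with the particular comparator $\theta^* = \theta_{T+1}$:
\[
\sum_{t=1}^{T-1} h_t(\theta_{t+1}) \le \sum_{t=1}^{T-1} h_t(\theta_{T+1}).
\]
Add $h_T(\theta_{T+1})$ to both sides:
\[
\sum_{t=1}^{T} h_t(\theta_{t+1}) \le \sum_{t=1}^{T} h_t(\theta_{T+1}).
\]
By definition, $\theta_{T+1}$ minimizes $\sum_{t=1}^T h_t(\cdot)$ over $\Theta$. So the right-hand side is at most $\sum_{t=1}^T h_t(\theta^*)$ for any $\theta^*\in\Theta$, which completes the induction.

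\textbf{Bookkeeping.} The only step needing care is that the inductive hypothesis must hold for \emph{every} comparator, so that it can be invoked at the data-dependent point $\theta_{T+1}$. This is fine because the statement is deterministic and pathwise for any fixed loss sequence.

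I also read the statement's typos as intended: $\sum_{s=1}^{t-1} h(\theta)$ means $\sum_{s=1}^{t-1} h_s(\theta)$, and $\phi_t(\theta^*)$ means $h_t(\theta^*)$.

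The argument needs the minimizers to exist. When $\arg\min$ is not unique, any selection works. For the quadratic losses $\phi_s$ used here, a minimizer always exists: the paper's convention of setting $\lambda = 0$ when the denominator vanishes is a valid minimizer in that degenerate case, since the cumulative loss is then constant in $\lambda$.
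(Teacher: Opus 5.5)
Your proof is correct: reducing the bound to $\sum_{t=1}^T h_t(\theta_{t+1}) \le \sum_{t=1}^T h_t(\theta^*)$ and proving that by induction, with the inductive hypothesis applied at the comparator $\theta_{T+1}$, is exactly the standard argument behind Lemma 2.1 of \cite{ShalevShwartz2011Online}. The paper simply cites that lemma without reproducing a proof, and you read its typos as intended ($h$ as $h_s$, $\phi_t$ as $h_t$).
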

\end{mdframed}

We observe that expanding the cumulative loss gives
\begin{align*}
    \sum_{s=1}^{t-1}  \phi_s(\lambda) 
    = \sum_{s=1}^{t-1} \left\| \mathbf{g}_s + \lambda \mathbf{d}_s  \right\|^2 
    = \sum_{s=1}^{t-1} \left\| \mathbf{g}_s \right\|^2 + 2 \lambda \sum_{s=1}^{t-1} \langle \mathbf{g}_s , \mathbf{d}_s \rangle + \lambda^2 \sum_{s=1}^{t-1} \left\| \mathbf{d}_s \right\|^2 .
\end{align*}
For $\sum_{s=1}^{t-1} \left\| \mathbf{d}_s \right\|^2>0$, the cumulative loss function is quadratic in $\lambda$. By setting $\frac{\partial}{\partial \lambda} \sum_{s=1}^{t-1}  \phi_s(\lambda) = 0$, we obtain
$    \lambda_t = - \frac{\sum_{s=1}^{t-1} \langle \mathbf{g}_s , \mathbf{d}_s \rangle}{\sum_{s=1}^{t-1} \left\| \mathbf{d}_s \right\|^2}.$
For $\sum_{s=1}^{t-1} \left\| \mathbf{d}_s \right\|^2=0$, we have $\mathbf{d}_s = 0$, for all $s<t$. Therefore, we also have $\sum_{s=1}^{t-1} \langle \mathbf{g}_s , \mathbf{d}_s \rangle=0$, and the cumulative loss function does not depend on $\lambda$. In this case, we use the convention $\lambda_t = 0$. To summarize, FTL \oureqref{eq: FTL} gives
    \begin{align*}
        \lambda_t = 
        \begin{cases}
            - \frac{\sum_{s=1}^{t-1} \left\langle \mathbf{g}_s , \mathbf{d}_s   \right\rangle}{\sum_{s=1}^{t-1}  \left\| \mathbf{d}_s \right\|^2} , & \sum_{s=1}^{t-1}  \left\| \mathbf{d}_s \right\|^2 >0, \\
            0,  & \sum_{s=1}^{t-1}  \left\| \mathbf{d}_s \right\|^2=0,
        \end{cases}
    \end{align*}
    which is \oureqref{scalar-tuning-first-order}.
Next, observe that the minimizer of the instantaneous loss function $\phi_t(\cdot)$ at $t$ has a closed-form expression.
Specifically, denote $\lambda_t^* := \arg\min_{\lambda \in \reals} \phi_t(\lambda)$. We have
\begin{align} \label{def:lmabda_t_star}
    \lambda_t^* =
    \begin{cases}
        - \frac{\langle \mathbf{g}_t , \mathbf{d}_t \rangle}{\left\| \mathbf{d}_t \right\|^2}, & \left\| \mathbf{d}_t \right\|^2>0, \\
        0, & \left\| \mathbf{d}_t \right\|^2 = 0.
    \end{cases}
\end{align}

\begin{mdframed}[backgroundcolor=black!10,rightline=false,leftline=false,topline=false,bottomline=false]
\vspace{-2mm}
\begin{lemma} \label{lem: seq}
    Let $q_1 , \ldots, q_T$ be an arbitrary sequence of nonnegative numbers, and define
    \begin{align*}
        Q := \max_{1 \leq t \leq T} q_t \quad , \quad r_t := 
        \begin{cases}
            \frac{q_t^2}{\sum_{s=1}^t q_s} , &\; \sum_{s=1}^t q_s>0, \\
            0,  & \; \sum_{s=1}^t q_s=0.
        \end{cases}
    \end{align*}
    Then, for $Q > 0$,
    \begin{align*}
        \sum_{t=1}^T r_t \leq Q \left( 1 + \log T \right).
    \end{align*}
    For $Q = 0$, we have $\sum_{t=1}^T r_t = 0$.
\end{lemma}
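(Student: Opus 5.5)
Write $S_t := \sum_{s=1}^t q_s$. The idea is to bound each term $r_t$ using $q_t \le Q$ in the numerator, so that $r_t \le Q\, q_t/S_t$. This reduces the claim to the classical estimate $\sum_t q_t/S_t \lesssim 1+\log(\cdot)$, which we prove by comparing with an integral of $1/x$.

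\emph{Case $Q=0$.} Then every $q_t=0$, so every $S_t=0$. By definition every $r_t=0$, and the sum vanishes.

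\emph{Case $Q>0$.} Let $t_0$ be the first index with $q_{t_0}>0$. For $t<t_0$ we have $S_t=0$, hence $r_t=0$. For $t\ge t_0$ we have $S_t\ge q_{t_0}>0$, so
$$r_t=\frac{q_t^2}{S_t}\le Q\,\frac{q_t}{S_t}.$$
At $t=t_0$, $S_{t_0}=q_{t_0}$, so $q_{t_0}/S_{t_0}=1$. For $t>t_0$ we use $S_{t-1}=S_t-q_t$, $S_{t-1}>0$, and $\log(1+u)\ge u/(1+u)$. This gives
$$\frac{q_t}{S_t}\le \int_{S_{t-1}}^{S_t}\frac{dx}{x}=\log\frac{S_t}{S_{t-1}}.$$
Summing over $t$ telescopes, so
$$\sum_{t=1}^T r_t\le Q\Bigl(1+\log\frac{S_T}{q_{t_0}}\Bigr).$$

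\emph{Main obstacle.} The telescoped bound has $\log(S_T/q_{t_0})$ rather than $\log T$. We have $S_T\le TQ$, but the first nonzero $q_{t_0}$ can be tiny relative to $Q$, so the ratio is not bounded by $T$ in general. The integral comparison alone therefore does not finish the proof, and a different termwise bound is needed.

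The fix is to avoid pulling out $Q$ prematurely and instead write
$$r_t=q_t\cdot\frac{q_t}{S_t}\le q_t\cdot\min\Bigl\{1,\frac{Q}{S_t}\Bigr\},$$
which uses $q_t\le S_t$ and $q_t\le Q$.

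\emph{Splitting the sum.} Let $\tau$ be the first index with $S_\tau\ge Q$; such an index exists since $S_T\ge Q$.
\begin{itemize}
\item For $t<\tau$, use $r_t\le q_t$. These terms sum to $S_{\tau-1}<Q$.
\item The term $t=\tau$ gives $r_\tau\le q_\tau\le Q$.
\item For $t>\tau$, use $r_t\le Q\,q_t/S_t\le Q\log(S_t/S_{t-1})$. These telescope to $Q\log(S_T/S_\tau)\le Q\log(TQ/Q)=Q\log T$.
\end{itemize}
Together this gives $\sum_t r_t\le 2Q+Q\log T$. That is one extra $Q$ beyond the claimed bound.

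\emph{Recovering the constant $1$.} Merge the pre-$\tau$ terms and the $\tau$ term. They satisfy
$$\sum_{t\le\tau} r_t=\sum_{t\le\tau}\frac{q_t^2}{S_t}\le \max_{t\le\tau} q_t\le Q,$$
using $q_t\le S_t$. This bound is not airtight, so if it fails I would instead refine the choice of threshold for $\tau$ (e.g.\ compare $S_t$ with $Q\cdot t/T$), so that the telescoping segment absorbs the slack. Pinning down this constant is the step I expect to need the most care. The $O(\log T)$ structure of the proof is robust.
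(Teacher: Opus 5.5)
Your split argument is valid, but it only yields $\sum_t r_t\le Q(2+\log T)$, not the claimed $Q(1+\log T)$. The step you propose for recovering the constant is false. The inequality $\sum_{t\le\tau} q_t^2/S_t \le \max_{t\le\tau} q_t$ fails already for $T=2$, $q_1=\tfrac12$, $q_2=1$: then $Q=1$, $\tau=2$, and $\sum_{t\le 2} q_t^2/S_t=\tfrac12+\tfrac23=\tfrac76>1$. The fact $q_t\le S_t$ only gives $\sum_{t\le\tau}q_t^2/S_t\le S_\tau$, and $S_\tau$ can be close to $2Q$. Your fallback (comparing $S_t$ with $Qt/T$) is not developed into an argument. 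So as written the proof has a gap exactly at the constant in the statement.

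The loss sits entirely in the crossing step. Bounding $r_\tau\le q_\tau$ charges the part of $[S_{\tau-1},S_\tau]$ lying above $Q$ at rate $1$ instead of rate $Q/x$. The fix is already implicit in your own termwise bound $r_t\le q_t\min\{1,Q/S_t\}$. Since $x\mapsto\min\{1,Q/x\}$ is nonincreasing, for every $t$ (including $t=\tau$, and $t=t_0$ where $S_{t-1}=0$),
\[
r_t\le q_t\min\{1,Q/S_t\}\le\int_{S_{t-1}}^{S_t}\min\{1,Q/x\}\,dx .
\]
Summing gives $\sum_t r_t\le\int_0^{S_T}\min\{1,Q/x\}\,dx=Q\bigl(1+\log(S_T/Q)\bigr)\le Q(1+\log T)$, using $Q\le S_T\le TQ$.

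At the crossing step this reads $r_\tau\le (Q-S_{\tau-1})+Q\log(S_\tau/Q)$. Combined with $\sum_{t<\tau}r_t\le S_{\tau-1}$, it gives $\sum_{t\le\tau}r_t\le Q+Q\log(S_\tau/Q)$ in place of your $2Q$. Your telescoping over $t>\tau$ then adds $Q\log(S_T/S_\tau)$, for a total of $Q\bigl(1+\log(S_T/Q)\bigr)$. Do not loosen the two logarithms separately.

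The antiderivative $\Gamma(v)=\int_0^v\min\{1,Q/x\}\,dx$ is exactly the concave potential the paper telescopes. Its key inequality $\Gamma(v+q)-\Gamma(v)\ge q\,\Gamma'(v+q)\ge q^2/(v+q)$ is this same comparison.
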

\end{mdframed}

\begin{proof}
    If $Q = 0$, then $q_t = 0$, for all $t$, and the result is immediate. Suppose that $Q>0$. Define the function
    \begin{align*}
        \Gamma(v) := 
        \begin{cases}
            v, & 0 \leq v \leq Q ,\\
            Q \left(1 + \log \left( v/Q \right) \right), & v \geq Q .
        \end{cases}
    \end{align*}
    The function $\Gamma( \cdot )$ is concave and continuously differentiable with derivative
    \begin{align*}
        \Gamma'(v) := 
        \begin{cases}
            1, & 0 \leq v \leq Q ,\\
            Q/v, & v \geq Q .
        \end{cases}
    \end{align*}
    The derivative is nonincreasing. Therefore, for every $v \geq 0$ and every $q \in [0, Q]$ we have
    \begin{align*}
        \Gamma \left( v + q \right) - \Gamma \left( v \right) = \int_v^{v+q} \Gamma' \left( u \right) du \geq q \Gamma' \left( v + q \right).
    \end{align*}
    If $0<v+q \leq Q$, then
    \begin{align*}
        q \Gamma' \left( v+q \right) = q \geq \frac{q^2}{v+q}.
    \end{align*}
    If $v+q > Q$, then
    \begin{align*}
        q \Gamma' \left( v+q \right) = \frac{ q Q}{v + q} \geq \frac{q^2}{v+q}.
    \end{align*}
    Consequently, in both cases we have
    \begin{align*}
        \Gamma \left( v + q \right) - \Gamma \left( v \right) \geq \frac{q^2}{v+q}.
    \end{align*}
    Take $v = \sum_{s=1}^{t-1} q_s$ and $q=q_t$, so that $v+q = \sum_{s=1}^t q_s$. If $\sum_{s=1}^t q_s >0$, the above inequalities yield
    \begin{align*}
        \frac{q_t^2}{\sum_{s=1}^t q_s} \leq \Gamma \left( \sum_{s=1}^t q_s \right) - \Gamma \left( \sum_{s=1}^{t-1} q_s \right).
    \end{align*}
    If $\sum_{s=1}^t q_s = 0$, then since $\sum_{s=1}^t q_s = \sum_{s=1}^{t-1} q_s + q_t$, we also have $\sum_{s=1}^{t-1} q_s = 0$ and $q_t = 0$. Hence, by definition
    \begin{align*}
        0 = \Gamma\left(\sum_{s=1}^t q_s \right) - \Gamma \left(\sum_{s=1}^{t-1} q_s \right).
    \end{align*}
    Thus, for every $t = 1, \ldots, T$ we have
    \begin{align*}
        r_t \leq \Gamma\left(\sum_{s=1}^t q_s \right) - \Gamma \left(\sum_{s=1}^{t-1} q_s \right).
    \end{align*}
    Summing over $t$ we obtain
    \begin{align*}
        \sum_{t=1}^T r_t \leq \Gamma\left(\sum_{s=1}^T q_s \right) - \Gamma \left( 0 \right) = \Gamma\left(\sum_{s=1}^T q_s \right).
    \end{align*}
    Since $\sum_{s=1}^T q_s \geq Q$, by definition of $\Gamma(\cdot)$ we have
    \begin{align*}
        \Gamma \left( \sum_{s=1}^T q_s \right) = Q \left(1 + \log \frac{\sum_{s=1}^T q_s}{Q} \right).
    \end{align*}
    Finally, since
    \begin{align*}
        \sum_{s=1}^T q_s \leq TQ,
    \end{align*}
    we have
    \begin{align*}
        \log \frac{\sum_{s=1}^T q_s}{Q} \leq \log T.
    \end{align*}
\end{proof}

\begin{mdframed}[backgroundcolor=black!10,rightline=false,leftline=false,topline=false,bottomline=false]
\noindent
\textbf{Theorem~\ref{th: first-order-scalar}.}
%\begin{theorem} \label{th: first-order-scalar}
\textit{
Denote $C_0:= \left( \E \left[ \max_{ 1\leq t \leq T} \left| \lambda_t^* \right|^4 \right]\right)^{1/4}$, $C_1:= \left( \E \left[ \max_{ 1\leq t \leq T} \| \mathbf{d}_t\|^4 \right] \right)^{1/4}$.
Then, under Assumptions~\ref{A1},\ref{A2},~\ref{A3}, Algorithm~\ref{alg:sgd} 
with the scalar tuning scheme and $\eta_t = \frac{1}{\beta \sqrt{T}}$ satisfies
\begin{align} 
     \frac{1}{T}\sum_{t=1}^T \E\left[ \| \nabla  \loss(\bw_t) \|^2 \right] 
        \leq
         \frac{2 \beta \Delta_1}{\sqrt{T}} 
        + \frac{8 C_0^2 C_1^2 \left( 1 + \log T \right)}{T^{3/2}} 
        + \frac{\E[ \Phi_{1:T}(\lambda^*) ]}{T^{3/2}} +  \frac{\sigma^2}{\sqrt{T}} ,
\end{align}
where 
$    \E \left[ \Phi_{1:T}(\lambda^*) \right]
    =
    -
    \frac{\left( \sum_{t=1}^T \E \left[ \left\langle \mathbf{g}_t , \mathbf{d}_t   \right\rangle \right] \right)^2 }{\sum_{t=1}^T \E \left[ \left\| \mathbf{d}_t \right\|^2 \right]}, \mathrm{ if } \, \sum_{t=1}^T \E \left[ \left\| \mathbf{d}_t \right\|^2 \right]>0,
$ and $\E \left[ \Phi_{1:T}(\lambda^*) \right]=0$ otherwise. In particular, it holds that
$  - T \sigma^2 \leq \E \left[ \Phi_{1:T}(\lambda^*) \right] \leq 0.$
}
\end{mdframed}

\begin{proof}
    Using \oureqref{eq:reg} with $\boldsymbol{\lambda}^* \gets \lambda^* \mathbf{1}$, we have
    \begin{align} \label{eq: th_bound1}
        \sum_{t=1}^T \left( \eta  - \frac{\beta}{2} \eta^2 \right) \E\left[ \| \nabla  \loss(\bw_t) \|^2 \right] 
        \leq
         \Delta_1
        + \frac{\beta \eta^2}{2}   \, 
        \E\left[ \mathrm{Regret}_T^{\phi}(\lambda^*) \right] + \frac{\beta \eta^2}{2}  \E[ \Phi_{1:T}(\lambda^*) ] +
        \frac{\beta \eta^2 T}{2} 
         \sigma^2,
    \end{align}
    where we will specify $\lambda^*$ in the later part of this proof.
    We first bound $\E\left[ \mathrm{Regret}_T^{\phi}(\lambda^*) \right]$. From Lemma \ref{lem: shalev2025online} we have
    \begin{align*}
        \mathrm{Regret}_T^{\phi}(\lambda^*):= \sum_{t=1}^{T} \phi_t(\lambda_t) - \sum_{t=1}^T \phi_t(\lambda^*)
        \leq \sum_{t=1}^{T} \left( \phi_t(\lambda_t) - \phi_t(\lambda_{t+1}) \right).
    \end{align*}
        If $\left\| \mathbf{d}_t \right\|^2 > 0$, we can rewrite $\phi_t$ as
    \begin{align}
        \phi_t \left( \lambda \right) 
        &= \left\| \mathbf{g}_t  \right\|^2 - \frac{\langle \mathbf{g}_t, \mathbf{d}_t \rangle^2}{\left\| \mathbf{d}_t \right\|^2} + \left(\lambda - \lambda_t^* \right)^2 \left\| \mathbf{d}_t \right\|^2. \label{eq: phi_alt}
    \end{align}
    Suppose $\sum_{s=1}^{t-1} \left\| \mathbf{d}_s \right\|^2>0$. Then, 
    \begin{align*}
         \lambda_t 
        &= - \frac{\sum_{s=1}^{t-1} \langle \mathbf{g}_s, \mathbf{d}_s \rangle }{\sum_{s=1}^{t-1} \left\| \mathbf{d}_s \right\|^2} \\
        &= \frac{\sum_{s=1}^{t-1} \left\| \mathbf{d}_s \right\|^2 \lambda_s^* }{\sum_{s=1}^{t-1} \left\| \mathbf{d}_s \right\|^2},
    \end{align*}  
    where $\lambda_s^*$ is defined in \oureqref{def:lmabda_t_star}.
    Thus, $\lambda_t$ is a convex combination of the previous instantaneous minimizers $\big\{ \lambda_s^* \big\}_{s=1}^{t-1}$. Then, since $\left| \lambda_t^* \right| \leq \max_{1 \leq t \leq T} \left| \lambda_t^* \right|$, for all $t$, we also have $\left| \lambda_t \right| \leq \max_{1 \leq t \leq T} \left| \lambda_t^* \right|$.
    When $\sum_{s=1}^{t-1} \left\| \mathbf{d}_s \right\|^2 = 0$, we have $\lambda_t = 0$. Hence, in either case,
    \begin{align*}
        \left| \lambda_t - \lambda_t^*  \right| \leq \left| \lambda_t \right| + \left| \lambda_t^*  \right| \leq 2 \max_{1 \leq t \leq T} \left| \lambda_t^* \right|.
    \end{align*}
    Furthermore,
    \begin{align*}
        \lambda_{t+1} 
        &= - \frac{\sum_{s=1}^{t} \langle \mathbf{g}_s , \mathbf{d}_s \rangle}{\sum_{s=1}^{t} \left\| \mathbf{d}_s \right\|^2} \\
        &= \frac{\sum_{s=1}^{t-1} \left\| \mathbf{d}_s \right\|^2}{\sum_{s=1}^{t} \left\| \mathbf{d}_s \right\|^2}  \left( -\frac{\sum_{s=1}^{t} \langle \mathbf{g}_s , \mathbf{d}_s \rangle}{\sum_{s=1}^{t-1} \left\| \mathbf{d}_s \right\|^2} \right)\\
        &= \frac{\sum_{s=1}^{t-1} \left\| \mathbf{d}_s \right\|^2}{\sum_{s=1}^{t} \left\| \mathbf{d}_s \right\|^2}  \left( -\frac{\sum_{s=1}^{t-1} \langle \mathbf{g}_s , \mathbf{d}_s \rangle + \langle \mathbf{g}_t , \mathbf{d}_t \rangle}{\sum_{s=1}^{t-1} \left\| \mathbf{d}_s \right\|^2} \right)\\
        &= \frac{\sum_{s=1}^{t-1} \left\| \mathbf{d}_s \right\|^2}{\sum_{s=1}^{t} \left\| \mathbf{d}_s \right\|^2} \lambda_t + \frac{ \left\| \mathbf{d}_t \right\|^2}{\sum_{s=1}^{t} \left\| \mathbf{d}_s \right\|^2} \lambda_t^*.
    \end{align*}
    Hence, 
    \begin{align}
        \lambda_{t+1} - \lambda_t^* 
        &= \frac{\sum_{s=1}^{t-1} \left\| \mathbf{d}_s \right\|^2}{\sum_{s=1}^{t} \left\| \mathbf{d}_s \right\|^2} \lambda_t + \left( \frac{ \left\| \mathbf{d}_t \right\|^2}{\sum_{s=1}^{t} \left\| \mathbf{d}_s \right\|^2} -1 \right) \lambda_t^* \notag \\
        &= \frac{\sum_{s=1}^{t-1} \left\| \mathbf{d}_s \right\|^2}{\sum_{s=1}^{t} \left\| \mathbf{d}_s \right\|^2} \left( \lambda_t - \lambda_t^* \right). \label{eq: lambda_diff}
    \end{align}
    Using \oureqref{eq: phi_alt}, \oureqref{eq: lambda_diff}, we have
    \begin{align*}
        \phi_t \left( \lambda_t \right) - \phi_t \left( \lambda_{t+1} \right) 
        &= \left\| \mathbf{d}_t \right\|^2 \left[ \left(\lambda_t - \lambda_t^* \right)^2 - \left(\lambda_{t+1} - \lambda_t^* \right)^2\right] \notag \\
        &= \left\| \mathbf{d}_t \right\|^2  \left[ 1 - \left( \frac{\sum_{s=1}^{t-1} \left\| \mathbf{d}_s \right\|^2}{\sum_{s=1}^{t} \left\| \mathbf{d}_s \right\|^2} \right)^2\right] \left(\lambda_t - \lambda_t^* \right)^2 \notag \\
        &= \left\| \mathbf{d}_t \right\|^2  \left[  \frac{\left( \sum_{s=1}^{t} \left\| \mathbf{d}_s \right\|^2 \right)^2 - \left( \sum_{s=1}^{t-1} \left\| \mathbf{d}_s \right\|^2 \right)^2 }{ \left( \sum_{s=1}^{t} \left\| \mathbf{d}_s \right\|^2 \right)^2 }\right] \left(\lambda_t - \lambda_t^* \right)^2 \notag \\
        &= \frac{\left\| \mathbf{d}_t \right\|^4 \left( \sum_{s=1}^{t} \left\| \mathbf{d}_s \right\|^2 + \sum_{s=1}^{t-1} \left\| \mathbf{d}_s \right\|^2 \right)}{\left(\sum_{s=1}^{t} \left\| \mathbf{d}_s \right\|^2\right)^2} \left(\lambda_t - \lambda_t^* \right)^2 \notag \\
        &\leq 2 \frac{\left\| \mathbf{d}_t \right\|^4}{\sum_{s=1}^{t} \left\| \mathbf{d}_s \right\|^2} \left(\lambda_t - \lambda_t^* \right)^2 \notag \\
        &\leq
        8 \left( \max_{1 \leq t \leq T} \left| \lambda_t^* \right| \right)^2 \frac{\left\| \mathbf{d}_t \right\|^4}{\sum_{s=1}^{t} \left\| \mathbf{d}_s \right\|^2},
    \end{align*}
    where the second-to-last inequality follows from $\sum_{s=1}^{t-1} \left\| \mathbf{d}_s \right\|^2 \leq \sum_{s=1}^{t} \left\| \mathbf{d}_s \right\|^2$. Whenever $\left\| \mathbf{d}_t \right\|^2 = 0$, we have $\phi_t \left( \lambda_t \right) - \phi_t \left( \lambda_{t+1} \right) = 0$ and the above inequality is trivially true. \\
    Next, suppose $\sum_{s=1}^{t-1} \left\| \mathbf{d}_s \right\|^2 = 0$. Then, $\lambda_t = 0$. If $\left\| \mathbf{d}_t \right\|^2 > 0$, then we have $\lambda_{t+1} = \lambda_t^*$, and thus from \oureqref{eq: phi_alt} we get
    \begin{align*}
        \phi_t \left( \lambda_t \right) - \phi_t \left( \lambda_{t+1} \right) 
        &= \left( \lambda_t^* \right)^2 \left\| \mathbf{d}_t \right\|^2 \\
        &= \left( \lambda_t^* \right)^2 \frac{\left\| \mathbf{d}_t \right\|^4}{\sum_{s=1}^t \left\| \mathbf{d}_s \right\|^2} \\
        &\leq \left( \max_{1 \leq t \leq T} \left| \lambda_t^* \right| \right)^2 \frac{\left\| \mathbf{d}_t \right\|^4}{\sum_{s=1}^t \left\| \mathbf{d}_s \right\|^2} \\
        &\leq 8 \left( \max_{1 \leq t \leq T} \left| \lambda_t^* \right| \right)^2 \frac{\left\| \mathbf{d}_t \right\|^4}{\sum_{s=1}^t \left\| \mathbf{d}_s \right\|^2}.
    \end{align*}
    If $\left\| \mathbf{d}_t \right\|^2 = 0$, then $\phi_t \left( \lambda_t \right) - \phi_t \left( \lambda_{t+1} \right) = 0$. Hence, in all cases
    \begin{align}
        \left( \phi_t(\lambda_t) - \phi_t(\lambda_{t+1}) \right) \leq 8 \left( \max_{1 \leq t \leq T} \left| \lambda_t^* \right| \right)^2 r_t, \label{eq: phi_diff}
    \end{align}
    where
    \begin{align*}
        r_t := 
        \begin{cases}
            \frac{\left\| \mathbf{d}_t \right\|^4}{\sum_{s=1}^t \left\| \mathbf{d}_s \right\|^2}, & \sum_{s=1}^t \left\| \mathbf{d}_s \right\|^2>0, \\
            0 , &\sum_{s=1}^t \left\| \mathbf{d}_s \right\|^2 = 0.
        \end{cases}
    \end{align*}
    Summing \oureqref{eq: phi_diff} for every $t=1, \ldots, T$ and applying Lemma \ref{lem: seq} with 
    \begin{align*}
        q_t \gets \left\| \mathbf{d}_t \right\|^2 \quad , \quad Q \gets \max_{1 \leq t \leq T} \left\| \mathbf{d}_t \right\|^2
    \end{align*}
    yields
    \begin{align*}
        \sum_{t=1}^{T} \left( \phi_t(\lambda_t) - \phi_t(\lambda_{t+1}) \right) \leq 8 \left( \max_{1 \leq t \leq T} \left| \lambda_t^* \right| \right)^2 \left( 1 + \log T \right) \max_{1 \leq t \leq T} \left\| \mathbf{d}_t \right\|^2.
    \end{align*}
    Consequently, 
    \begin{align*}
        \E \left[ \mathrm{Regret}_T^{\phi}(\lambda^*) \right] 
        &\leq 8 \left( 1 + \log T \right) \E \left[ \left( \max_{1 \leq t \leq T} \left| \lambda_t^* \right| \right)^2 \max_{1 \leq t \leq T} \left\| \mathbf{d}_t \right\|^2 \right] \\
        &\leq 8 \left( 1 + \log T \right) 
        \left( \E \left[ \max_{1 \leq t \leq T} \left| \lambda_t^* \right|^4 \right] \right)^{1/2}
        \left( \E \left[ \max_{1 \leq t \leq T} \left\| \mathbf{d}_t \right\|^4 \right] \right)^{1/2} \\
        &= 8 \left( 1 + \log T \right) C_0^2 C_1^2.
    \end{align*}
    Substituting the regret bound in \oureqref{eq: th_bound1} we obtain
    \begin{align} 
        \sum_{t=1}^T \left( \eta  - \frac{\beta}{2} \eta^2 \right) \E\left[ \| \nabla  \loss(\bw_t) \|^2 \right] 
        \leq
         \Delta_1
        + 4 \beta \eta^2   
        C_0^2 C_1^2 \left( 1 + \log T \right) + \frac{\beta \eta^2}{2} \E[ \Phi_{1:T}(\lambda^*) ] +  \frac{\beta \eta^2 T}{2}\sigma^2 . \label{eq: thm_bound}
    \end{align}
    Let $\eta =  \frac{1}{\beta \sqrt{T}}$. Then, since $\eta \leq \frac{1}{\beta}$, we have
    \begin{align*}
        \eta - \frac{\beta}{2} \eta^2 \geq \frac{\eta}{2}.
    \end{align*}
    Using the above inequality and substituting the choice of $\eta$ in \oureqref{eq: thm_bound} yields
    \begin{align*}
        \frac{1}{T}\sum_{t=1}^T \E\left[ \| \nabla  \loss(\bw_t) \|^2 \right] 
        \leq
         \frac{2 \beta \Delta_1}{\sqrt{T}} 
        + \frac{8 C_0^2 C_1^2 \left( 1 + \log T \right)}{T^{3/2}} 
        + \frac{\E[ \Phi_{1:T}(\lambda^*) ]}{T^{3/2}} +  \frac{\sigma^2}{\sqrt{T}} ,
    \end{align*}
which is \oureqref{eq: theorem_bound}.

Next, let us analyze $\E[ \Phi_{1:T}(\lambda^*) ]$.
We observe that $\Phi_{1:T}(\lambda)$ can be expressed as
\begin{align*}
     \Phi_{1:T}(\lambda) = \sum_{t=1}^T \left\| \mathbf{g}_t + \lambda \mathbf{d}_t  \right\|^2 -  \sum_{t=1}^T \left\| \mathbf{g}_t  \right\|^2 = 2\lambda \sum_{t=1}^T \left\langle \mathbf{g}_t , \mathbf{d}_t   \right\rangle
    +
    \lambda^2
    \sum_{t=1}^T  \left\| \mathbf{d}_t \right\|^2.
\end{align*}
Therefore,
\begin{align*}
     \E \left[ \Phi_{1:T}(\lambda) \right] = 2\lambda \sum_{t=1}^T \E \left[ \left\langle \mathbf{g}_t , \mathbf{d}_t   \right\rangle \right]
    +
    \lambda^2
    \sum_{t=1}^T \E \left[ \left\| \mathbf{d}_t \right\|^2 \right].
\end{align*}
First, consider the case $\sum_{t=1}^T \E \left[ \left\| \mathbf{d}_t \right\|^2 \right] > 0$. In this case, the coefficient of $\lambda^2$ is strictly positive,
hence, $\E \left[ \Phi_{1:T}(\lambda) \right]$ is convex in $\lambda$. By setting
$\frac{\partial}{\partial \lambda}
    \E \left[ \Phi_{1:T}(\lambda) \right]
    =
    0,$
we obtain
\begin{align*}
    \lambda^*
    = - \frac{ \sum_{t=1}^T \E \left[ \left\langle \mathbf{g}_t , \mathbf{d}_t   \right\rangle \right]
    }{\sum_{t=1}^T  \E \left[ \left\| \mathbf{d}_t \right\|^2 \right]
    }.
\end{align*}
Substituting this expression into $\E \left[ \Phi_{1:T}(\lambda) \right]$ yields
\begin{align*}
    \E \left[ \Phi_{1:T}(\lambda^*) \right]
    =
    -
    \frac{\left( \sum_{t=1}^T \E \left[ \left\langle \mathbf{g}_t , \mathbf{d}_t   \right\rangle \right] \right)^2 }{\sum_{t=1}^T \E \left[ \left\| \mathbf{d}_t \right\|^2 \right]}
    \leq 0.
\end{align*}
Next, suppose that $\sum_{t=1}^T \E \left[ \left\| \mathbf{d}_t \right\|^2 \right] = 0$. Since each term in the sum is non-negative, this implies $\mathbf{d}_t = 0$ almost surely, for all $t\in [T]$. Consequently, $\left\langle \mathbf{g}_t , \mathbf{d}_t   \right\rangle = 0$ almost surely, for all $t\in [T]$, and thus
$\E \left[ \Phi_{1:T}(\lambda) \right] = 0$, for all $\lambda \in \mathbb{R}$. In this case, we use the convention $\lambda^* = 0$. Therefore, we define the fixed comparator as the minimizer of $\E \left[ \Phi_{1:T}(\lambda) \right]$, given by
\begin{align*}
    \lambda^*
    =
    \begin{cases}
    - \frac{ \sum_{t=1}^T \E \left[ \left\langle \mathbf{g}_t , \mathbf{d}_t   \right\rangle \right]
    }{\sum_{t=1}^T  \E \left[ \left\| \mathbf{d}_t \right\|^2 \right]
    }, & \sum_{t=1}^T \E \left[ \left\| \mathbf{d}_t \right\|^2 \right]>0,
    \\
    0, & \sum_{t=1}^T \E \left[ \left\| \mathbf{d}_t \right\|^2 \right] = 0.
    \end{cases}
\end{align*}
The corresponding value of the objective is
\begin{align}
    \E \left[ \Phi_{1:T}(\lambda^*) \right]
    =
    \begin{cases}
    -
    \frac{\left( \sum_{t=1}^T \E \left[ \left\langle \mathbf{g}_t , \mathbf{d}_t   \right\rangle \right] \right)^2 }{\sum_{t=1}^T \E \left[ \left\| \mathbf{d}_t \right\|^2 \right]}, & \sum_{t=1}^T \E \left[ \left\| \mathbf{d}_t \right\|^2 \right]>0,
    \\
    0, & \sum_{t=1}^T \E \left[ \left\| \mathbf{d}_t \right\|^2 \right] = 0.
    \end{cases} \label{phi:expression}
\end{align}
Hence, in either case we have $\E \left[ \Phi_{1:T}(\lambda^*) \right] \leq 0$.

For the last part, since $\E \left[ \mathbf{d}_t \big| \F_{t-1} \right] = 0$, we have 
    \begin{align*}
        \E \left[\left\langle \mathbf{g}_t , \mathbf{d}_t   \right\rangle\right] = \E \left[\left\langle \mathbf{g}_t - \nabla \loss(\bw_t), \mathbf{d}_t   \right\rangle\right].
    \end{align*}
    By Cauchy-Schwarz
    \begin{align*}
        \left(\sum_{t=1}^T \E \left[\left\langle \mathbf{g}_t , \mathbf{d}_t   \right\rangle\right] \right)^2 
        &\leq  \left( \sum_{t=1}^T \E \left[ \left\| \mathbf{g}_t - \nabla \loss(\bw_t)  \right\|^2 \right] \right) \left( \sum_{t=1}^T \E \left[ \left\| \mathbf{d}_t  \right\|^2 \right] \right) \\
        &\leq T \sigma^2 \left( \sum_{t=1}^T \E \left[ \left\| \mathbf{d}_t  \right\|^2 \right] \right).
    \end{align*}
    Therefore,
    \begin{align*}
         0 \leq \frac{\left( \sum_{t=1}^T \E \left[\left\langle \mathbf{g}_t , \mathbf{d}_t   \right\rangle\right] \right)^2}{ \sum_{t=1}^T \E \left[ \left\| \mathbf{d}_t \right\|^2 \right]} \leq T \sigma^2.
    \end{align*} 
    Equivalently, using the expression for $\E \left[ \Phi_{1:T}(\lambda^*) \right]$ gives
    \begin{align*}
        - T \sigma^2 \leq \E \left[ \Phi_{1:T}(\lambda^*) \right] \leq 0.
    \end{align*}

\end{proof}

Recall in Corollary~\ref{corollary}, we consider constant batch sizes such that $\left| B_t \right| = n$ and $\left| \tilde{B}_t \right| = N$ for all $t \in [T]$.
We defined
$\loss^f(\bw) := \E \left[ \ell \left( \bw ; \bx, f(\bx) \right) \right]$,
and we note that, conditional on $\mathcal{F}_{t-1}$,
\begin{align*}
    \E \left[\gt{f}{\bw_t} \big| \mathcal{F}_{t-1} \right] = \nabla \loss^f(\bw_t) \quad , \quad  \E \left[\gt{y}{\bw_t} \big| \mathcal{F}_{t-1} \right] = \nabla \loss(\bw_t).
\end{align*}

\begin{mdframed}[backgroundcolor=black!10,rightline=false,leftline=false,topline=false,bottomline=false]
\noindent
\textbf{Corollary~\ref{corollary}.}
\textit{
Suppose the variance of a gradient computed from a single ML-labeled sample is uniformly bounded by $\sigma_f^2$, i.e.,
$\E \left[ \left\| \nabla \ell \left( \bw ; \bx, f(\bx) \right) - \nabla \loss^f(\bw) \right\|^2 \right] \leq \sigma_f^2$, while $\sigma_y^2$ represents its counterpart using true labels.
Additionally, let $\bar{\Xi}_T:=  \frac{1}{T}\sum_{t=1}^T \E\left[ \| \nabla \loss(\bw_t) \|^2 \right]$ 
and $\Sigma_{t}^{y,f} := \operatorname{Cov} \left( \nabla \ell \left( \bw_t ; \bx, y \right), \nabla \ell \left( \bw_t ; \bx, f(\bx) \right) \big| \mathcal{F}_{t-1}\right)$ denote the conditional covariance between stochastic gradients computed using true and ML labels at step $t$.
Consider the case that $\exists t \in [T]: \E[ \| \mathbf{d}_t \|^2] > 0$.
Under the assumptions of Theorem~\ref{th: first-order-scalar},
Algorithm~\ref{alg:sgd} satisfies
\begin{align*}
     \bar{\Xi}_T   &\leq
         \frac{2 \beta \Delta_1}{\sqrt{T}} 
        + \frac{8 C_0^2 C_1^2 \left( 1 + \log T \right)}{T^{3/2}}
        + \frac{1}{\sqrt{T}} \frac{\sigma_y^2}{n}
        - \frac{1}{\sqrt{T}} \frac{N}{n (n+N)} \frac{ \left( \frac{1}{T}\sum_{t=1}^T \E \left[ \operatorname{tr} \left( \Sigma_{t}^{y,f}  \right) \right] \right)^2 }{\sigma_f^2}.
\end{align*}
}
\end{mdframed}

\begin{proof}
We have, for all $t \in [T]$,
$ \E \left[ \left\| \mathbf{g}_t  - \nabla \loss(\bw_t) \right\|^2 \right] \leq \frac{\sigma_y^2}{n}$, as $n$ samples with true labels are used for the mini-batch stochastic gradient $\mathbf{g}_t$.
Also, conditional on $\mathcal{F}_{t-1}$, the two
ML-label gradient estimates are independent and have
the common mean $\nabla\loss^f(\bw_t)$. Hence,
\begin{align}
\E\left[\|\mathbf{d}_t\|^2\right]
&=
\E\left[
\left\|
\tildegt{f}{\bw_t}-\gt{f}{\bw_t}
\right\|^2
\right]
\notag\\
&=
\E \left[ \left\| (\tildegt{f}{\bw_t} - \nabla \loss^f(\bw)  ) - (\gt{f}{\bw_t} -  \nabla \loss^f(\bw) ) \right\|^2 \right] 
\notag\\
&=
\E\left[
\left\|
\tildegt{f}{\bw_t}-\nabla\loss^f(\bw_t)
\right\|^2
\right]
+
\E\left[
\left\|
\gt{f}{\bw_t}-\nabla\loss^f(\bw_t)
\right\|^2
\right]
\notag\\
&\leq
\frac{\sigma_f^2}{N}
+
\frac{\sigma_f^2}{n}
=
\left(\frac{1}{n}+\frac{1}{N}\right)\sigma_f^2.
\label{eq: var_f}
\end{align}
The third equality follows because the centered cross term has conditional expectation zero. 

Furthermore, 
\begin{align}
    \E \left[ \left\langle \mathbf{g}_t , \mathbf{d}_t \right\rangle\right]
    =
    \E \left[ \left\langle \gt{y}{\bw_t}  , \tildegt{f}{\bw_t} - \gt{f}{\bw_t} \right\rangle\right]
    =
    \E \left[ \left\langle \gt{y}{\bw_t}  , \tildegt{f}{\bw_t} \right\rangle\right] - \E \left[ \left\langle \gt{y}{\bw_t}  , \gt{f}{\bw_t} \right\rangle\right].
    \label{eq: inner_prod_decomp}
\end{align}
Since $\tildegt{f}{\bw_t}$ is computed from the independent unlabeled batch, conditional on $\mathcal{F}_{t-1}$ it is independent of the labeled estimator $\gt{y}{\bw_t}$. Hence, for the first term in \oureqref{eq: inner_prod_decomp} we have
\begin{align}
    \E \left[ \left\langle \gt{y}{\bw_t}  , \tildegt{f}{\bw_t} \right\rangle\right] = \E \left[ \left\langle \nabla \loss(\bw_t) , \nabla \loss^f(\bw_t) \right\rangle\right].
    \label{eq: first_term}
\end{align}
For the second term in \oureqref{eq: inner_prod_decomp}, $\gt{y}{\bw_t}$ and $\gt{f}{\bw_t}$ are not independent, because they are computed using the same labeled batch. We can express the second term as follows
\begin{align}
    \E \left[ \left\langle \gt{y}{\bw_t}  , \gt{f}{\bw_t} \right\rangle\right] = \E \left[ \left\langle \nabla \loss(\bw_t) , \nabla \loss^f(\bw_t) \right\rangle\right] + \E \left[ \left\langle \gt{y}{\bw_t} - \nabla \loss(\bw_t) , \gt{f}{\bw_t} - \nabla \loss^f(\bw_t) \right\rangle\right].
    \label{eq: second_term}
\end{align}
Substituting \oureqref{eq: first_term} and \oureqref{eq: second_term} into \oureqref{eq: inner_prod_decomp}, we get
\begin{align*}
    \E \left[ \left\langle \mathbf{g}_t , \mathbf{d}_t \right\rangle\right] &= - \E \left[ \left\langle \gt{y}{\bw_t} - \nabla \loss(\bw_t) , \gt{f}{\bw_t} - \nabla \loss^f(\bw_t) \right\rangle\right]  \\
    &= - \E \left[ \left\langle \frac{1}{ n } \sum_{(\bx, y) \in B_t} \nabla \ell \left( \bw_t ; \bx, y \right) - \nabla \loss(\bw_t) , \frac{1}{ n } \sum_{(\bx, y) \in B_t} \nabla \ell \left( \bw_t ; \bx, f(\bx) \right) - \nabla \loss^f(\bw_t) \right\rangle\right]  \\
    &= - \frac{1}{n} \E \left[ \left\langle \nabla \ell \left( \bw_t ; \bx, y \right) - \nabla \loss(\bw_t) , \nabla \ell \left( \bw_t ; \bx, f(\bx) \right) - \nabla \loss^f(\bw_t) \right\rangle\right] \\
    &= - \frac{1}{n} \E \left[ \E \left[ \left\langle \nabla \ell \left( \bw_t ; \bx, y \right) - \nabla \loss(\bw_t) , \nabla \ell \left( \bw_t ; \bx, f(\bx) \right) - \nabla \loss^f(\bw_t) \right\rangle \big| \mathcal{F}_{t-1}\right] \right] \\
    &= - \frac{1}{n} \E \left[ \operatorname{tr} \left( \operatorname{Cov} \left( \nabla \ell \left( \bw_t ; \bx, y \right) , \nabla \ell \left( \bw_t ; \bx, f(\bx) \right) \big| \mathcal{F}_{t-1}\right) \right) \right].
\end{align*}
Define
$    \Sigma_{t}^{y,f} : = \operatorname{Cov} \left( \nabla \ell \left( \bw_t ; \bx, y \right) , \nabla \ell \left( \bw_t ; \bx, f(\bx) \right) \big| \mathcal{F}_{t-1}\right).$
Then, we can write
\begin{align}
    \E \left[ \left\langle \mathbf{g}_t , \mathbf{d}_t \right\rangle\right]
    =
    - \frac{1}{n} \E \left[ \operatorname{tr} \left( \Sigma_{t}^{y,f}  \right) \right].
    \label{eq: inner_prod}
\end{align}
For $\sum_{t=1}^T \E \left[ \left\| \mathbf{d}_t \right\|^2 \right] > 0$,
substituting \oureqref{eq: var_f}, \oureqref{eq: inner_prod} into the expression for $\E \left[ \Phi_{1:T}(\lambda^*) \right]$ \oureqref{phi:expression}, we obtain
\begin{align*}
    \E \left[ \Phi_{1:T}(\lambda^*) \right] \leq - \frac{1}{T} \frac{N}{n(n+N)} \frac{\left(\sum_{t=1}^T \E \left[ \operatorname{tr} \left( \Sigma_{t}^{y,f}  \right) \right] \right)^2}{\sigma_{f}^2}.
\end{align*}
Therefore, using the above bound and letting $\sigma^2 = \frac{\sigma_y^2}{n}$ in \oureqref{eq: thm_bound}, we obtain
\begin{align*}
    \bar{\Xi}_T
        &\leq
         \frac{2 \beta \Delta_1}{\sqrt{T}} 
        + \frac{8 C_0^2 C_1^2 \left( 1 + \log T \right)}{T^{3/2}}
        + \frac{1}{\sqrt{T}} \frac{\sigma_y^2}{n}
        - \frac{1}{\sqrt{T}} \frac{N}{n (n+N)} \frac{ \left(  \frac{1}{T} \sum_{t=1}^T \E \left[ \operatorname{tr} \left( \Sigma_{t}^{y,f}  \right) \right] \right)^2 }{\sigma_f^2}.
\end{align*}
\end{proof}

Having finished the analysis of the scalar tuning case, we turn to vector tuning below.

In vector tuning, we also adopt FTL, i.e., initialize $\boldsymbol{\lambda}_1 \gets \boldsymbol{0}$ and for all $t>1$, play
\begin{align}
    \boldsymbol{\lambda}_t \gets \underset{ \boldsymbol{\lambda} \in \reals^d }{\arg\min}  \sum_{s=1}^{t-1}  \phi_s(\boldsymbol{\lambda}), \text{ where } \phi_s(\boldsymbol{\lambda}):= \left\| \mathbf{g}_s + \boldsymbol{\lambda} \odot \mathbf{d}_s  \right\|^2. \label{eq: coord_FTL}
\end{align}

Because the loss function is separable, we have for all $i \in [d]$:
\begin{align*}
    \boldsymbol{\lambda}_t [i] \gets \underset{ \lambda \in \reals }{\arg\min} \sum_{s=1}^{t-1} \phi_t^{(i)} \left( \lambda \right) , \; \text{where } \phi_t^{(i)} \left( \lambda \right) := \left( \mathbf{g}_t[i] + \lambda \mathbf{d}_t[i] \right)^2.
\end{align*}
Expanding the cumulative loss function gives
\begin{align*}
    \sum_{s=1}^{t-1} \phi_t^{(i)} \left( \lambda \right) 
    &= \sum_{s=1}^{t-1} \left( \mathbf{g}_s[i] + \lambda \mathbf{d}_s[i] \right)^2 \\
    &= \sum_{s=1}^{t-1} \mathbf{g}_s[i]^2 + 2 \lambda \sum_{s=1}^{t-1}\mathbf{g}_s[i] \mathbf{d}_s[i] + \lambda^2 \sum_{s=1}^{t-1} \mathbf{d}_s[i]^2
\end{align*}
For $\sum_{s=1}^{t-1} \mathbf{d}_s[i]^2 > 0$, the objective is quadratic in $\lambda$. By setting $\frac{\partial}{\partial \lambda} \sum_{s=1}^{t-1} \phi_t^{(i)} \left( \lambda \right) = 0$, we obtain
$    \boldsymbol{\lambda}_t [i] = - \frac{\sum_{s=1}^{t-1}\mathbf{g}_s[i] \mathbf{d}_s[i]}{\sum_{s=1}^{t-1} \mathbf{d}_s[i]^2}$.
Similarly to the case of scalar tuning, for $\sum_{s=1}^{t-1} \mathbf{d}_s[i]^2 = 0$ the cumulative loss function is independent in $\lambda$, and thus in that case we use $\boldsymbol{\lambda}_t [i] = 0$. Therefore, for all $i \in [d]$ we have
\begin{align*}
    \boldsymbol{\lambda}_t [i] =
    \begin{cases}
        - \frac{\sum_{s=1}^{t-1}\mathbf{g}_s[i] \mathbf{d}_s[i]}{\sum_{s=1}^{t-1} \mathbf{d}_s[i]^2} , & \sum_{s=1}^{t-1} \mathbf{d}_s[i]^2 > 0, \\
        0, &\sum_{s=1}^{t-1} \mathbf{d}_s[i]^2 = 0, 
    \end{cases}
\end{align*}
which is \oureqref{vector-tuning-first-order}.
In the following, we denote
\begin{align}
    \mathrm{Regret}_T^{\phi}(\boldsymbol{\lambda}^*)
    &:= \sum_{t=1}^{T} \phi_t(\boldsymbol{\lambda}_t) - \sum_{t=1}^T \phi_t(\boldsymbol{\lambda}^*). \label{eq: def_regret_lam_vec}
\end{align}
For every $i \in [d]$, define the minimizer of the instantaneous loss function $\phi_t^{(i)}(\cdot)$ as
\begin{align}
    \boldsymbol{\lambda}_t^*[i] :=
    \begin{cases}
        - \frac{\mathbf{g}_t[i]}{\mathbf{d}_t[i]}, &  \mathbf{d}_t[i] \neq 0, \\
        0, & \mathbf{d}_t[i] = 0.
    \end{cases} \label{expression:individual_lambda}
\end{align}
Additionally, define
$\Phi_{1:T} \left( \boldsymbol{\lambda} \right) 
    :=  \sum_{t=1}^T \left \| \g_{t,\bw_t}^{\boldsymbol{\lambda}} \right \|^2 -
    \sum_{t=1}^T \left \| \g_{t,\bw_t}^{\boldsymbol{0}} \right \|^2.$
We are now ready to prove Theorem~\ref{th: first-order-vector}.

\begin{mdframed}[backgroundcolor=black!10,rightline=false,leftline=false,topline=false,bottomline=false]
\begin{theorem} \label{th: first-order-vector}
Denote $\tilde{C}_0:= \left( \E \left[ \max_{t,i} \left| \boldsymbol{\lambda}_t^* [i] \right|^4 \right]  \right)^{1/4}$, $\tilde{C}_1:= \left( \E \left[ \left( \sum_{i=1}^d \max_{t} \mathbf{d}_t [i]^2 \right)^2 \right] \right)^{1/4}$. 
Then, under Assumptions~\ref{A1},~\ref{A2},~\ref{A3}, Algorithm~\ref{alg:sgd} 
with the vector tuning scheme and $\eta_t = \frac{1}{\beta \sqrt{T}}$ satisfies
\begin{align*} 
    \frac{1}{T}\sum_{t=1}^T \E\left[ \| \nabla  \loss(\bw_t) \|^2 \right] 
        \leq
         \frac{2 \beta \Delta_1}{\sqrt{T}} 
        + \frac{8 \tilde{C}_0^2 \tilde{C}_1^2 \left( 1 + \log T \right)}{T^{3/2}} 
        + \frac{\E[ \Phi_{1:T}(\boldsymbol{\lambda}^*) ]}{T^{3/2}} +  \frac{\sigma^2}{\sqrt{T}} ,
\end{align*}
where 
$    \E \left[ \Phi_{1:T}(\boldsymbol{\lambda}^*) \right]
    =
    - \sum_{ i \in \mathfrak{I} } \frac{\left( \sum_{t=1}^{T} \E \left[ \mathbf{g}_t[i] \mathbf{d}_t[i] \right] \right)^2}{\sum_{t=1}^{T} \E \left[ \mathbf{d}_t[i]^2 \right] } \leq 0,
$ when the set $\mathfrak{I}:=\{ i \in [d]: \sum_{t=1}^{T} \E \left[ \mathbf{d}_t[i]^2 \right] > 0 \}$ is nonempty; otherwise, $\E \left[ \Phi_{1:T}(\boldsymbol{\lambda}^*) \right]=0$. In particular, it holds that
$  - T \sigma^2 \leq \E \left[ \Phi_{1:T}(\boldsymbol{\lambda}^*) \right] \leq 0.$
\end{theorem}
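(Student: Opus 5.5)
The plan is to mirror the proof of Theorem~\ref{th: first-order-scalar}, exploiting the coordinatewise separability of the vector tuning game. First, invoke Lemma~\ref{lem:reg0} directly with the vector comparator $\boldsymbol{\lambda}^*\in\reals^d$. This yields \oureqref{eq:reg}, in which the only new ingredients are $\E[\mathrm{Regret}_T^{\phi}(\boldsymbol{\lambda}^*)]$ and $\E[\Phi_{1:T}(\boldsymbol{\lambda}^*)]$. Since $\phi_t(\boldsymbol{\lambda})=\sum_{i=1}^d \phi_t^{(i)}(\boldsymbol{\lambda}[i])$ and FTL \oureqref{eq: coord_FTL} decouples across coordinates, the regret splits as a sum over $i$ of one-dimensional FTL regrets. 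Each of these is exactly a scalar game with vectors replaced by scalars $\mathbf{g}_t[i],\mathbf{d}_t[i]$.

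For each coordinate $i$, I would repeat the scalar argument verbatim. Apply Lemma~\ref{lem: shalev2025online} to obtain the bound $\sum_t(\phi_t^{(i)}(\boldsymbol{\lambda}_t[i])-\phi_t^{(i)}(\boldsymbol{\lambda}_{t+1}[i]))$. Use the completed-square form of $\phi^{(i)}_t$ analogous to \oureqref{eq: phi_alt}, together with the observation that $\boldsymbol{\lambda}_t[i]$ is a convex combination of past $\boldsymbol{\lambda}^*_s[i]$. The recursion \oureqref{eq: lambda_diff} then carries over coordinatewise. Combined with the case analysis when partial sums vanish, this gives
\[
\phi_t^{(i)}(\boldsymbol{\lambda}_t[i])-\phi_t^{(i)}(\boldsymbol{\lambda}_{t+1}[i]) \le 8\Big(\max_{t,i}|\boldsymbol{\lambda}_t^*[i]|\Big)^2 \frac{\mathbf{d}_t[i]^4}{\sum_{s\le t}\mathbf{d}_s[i]^2}.
\]
Here I bound the per-coordinate max of $|\boldsymbol{\lambda}_t^*[i]|$ by the global max over $(t,i)$. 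Lemma~\ref{lem: seq} with $q_t=\mathbf{d}_t[i]^2$ bounds the sum over $t$ by $(1+\log T)\max_t \mathbf{d}_t[i]^2$. Summing over $i$ gives
\[
\mathrm{Regret}_T^\phi\le 8(1+\log T)\Big(\max_{t,i}|\boldsymbol{\lambda}^*_t[i]|\Big)^2\sum_i\max_t\mathbf{d}_t[i]^2.
\]
Cauchy--Schwarz in expectation then produces $8(1+\log T)\tilde C_0^2\tilde C_1^2$. Plugging into \oureqref{eq:reg} with $\eta=1/(\beta\sqrt T)$ and $\eta-\frac\beta2\eta^2\ge\eta/2$ gives the stated rate, exactly as in the scalar proof.

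For $\E[\Phi_{1:T}(\boldsymbol{\lambda})]=\sum_i\big(2\boldsymbol{\lambda}[i]\sum_t\E[\mathbf{g}_t[i]\mathbf{d}_t[i]]+\boldsymbol{\lambda}[i]^2\sum_t\E[\mathbf{d}_t[i]^2]\big)$, I would minimize each coordinate separately. For $i\in\mathfrak I$ the minimizer is $-\sum_t\E[\mathbf{g}_t[i]\mathbf{d}_t[i]]/\sum_t\E[\mathbf{d}_t[i]^2]$. For $i\notin\mathfrak I$ we have $\mathbf{d}_t[i]=0$ a.s., so that coordinate's contribution is identically zero and I set $\boldsymbol{\lambda}^*[i]=0$. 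Together these yield the claimed closed form and nonpositivity.

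The lower bound is the step needing the most care, since it must aggregate across coordinates. Using $\E_{t-1}[\mathbf{d}_t]=0$, I would replace $\mathbf{g}_t[i]$ by $(\mathbf{g}_t-\nabla\loss(\bw_t))[i]$ and apply Cauchy--Schwarz per coordinate. This bounds each summand by $\sum_t\E[(\mathbf{g}_t-\nabla\loss(\bw_t))[i]^2]$. Summing over $i\in\mathfrak I$ gives at most $\sum_t\E\|\mathbf{g}_t-\nabla\loss(\bw_t)\|^2\le T\sigma^2$, which is the claimed $-T\sigma^2\le\E[\Phi_{1:T}(\boldsymbol{\lambda}^*)]$.
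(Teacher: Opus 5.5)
Your proposal is correct and follows the paper's proof step for step. It splits the regret into $d$ one-dimensional FTL regrets, bounds each via the Be-the-Leader lemma, the convex-combination recursion and Lemma~\ref{lem: seq}, applies Cauchy--Schwarz in expectation to get $8(1+\log T)\tilde C_0^2\tilde C_1^2$, and then uses the coordinatewise minimization of $\E[\Phi_{1:T}(\boldsymbol{\lambda})]$ and the centered per-coordinate Cauchy--Schwarz bound for $-T\sigma^2$, exactly as the paper does.
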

\end{mdframed}

\begin{proof}

From \oureqref{eq:reg}, we have
    \begin{align} \label{eq: th_bound2}
        \sum_{t=1}^T \left( \eta  - \frac{\beta}{2} \eta^2 \right) \E\left[ \| \nabla \loss(\bw_t) \|^2 \right] 
        \leq
         \Delta_1
        + \frac{\beta \eta^2}{2}   \, 
        \E\left[ \mathrm{Regret}_T^{\phi}(\boldsymbol{\lambda}^*) \right] + \frac{\beta \eta^2}{2}  \E[ \Phi_{1:T}(\boldsymbol{\lambda}^*) ] +
        \frac{\beta \eta^2 T}{2} 
         \sigma^2,
    \end{align}
    where we will specify $\boldsymbol{\lambda}^*$ in the later part of the proof. 
    We first bound $\E\left[ \mathrm{Regret}_T^{\phi}(\boldsymbol{\lambda}^*) \right]$. From the definition of regret \oureqref{eq: def_regret_lam_vec} we have
    \begin{align}
        \mathrm{Regret}_T^{\phi}(\boldsymbol{\lambda}^*)
        &:= \sum_{t=1}^{T} \phi_t(\boldsymbol{\lambda}_t) - \sum_{t=1}^T \phi_t(\boldsymbol{\lambda}^*) \notag \\
        &= \sum_{t=1}^{T} \sum_{i=1}^{d} \phi_t^{(i)}(\boldsymbol{\lambda}_t[i]) - \sum_{t=1}^{T} \sum_{i=1}^{d} \phi_t^{(i)}(\boldsymbol{\lambda}^*[i]) \notag \\
        &= \sum_{i=1}^{d} \underbrace{ \left( \sum_{t=1}^{T} \phi_t^{(i)}(\boldsymbol{\lambda}_t[i])  - \sum_{t=1}^T \phi_t^{(i)}(\boldsymbol{\lambda}^*[i]) \right)}_{:= \mathrm{Regret}_T^{\phi^{(i)}}(\boldsymbol{\lambda}^*)} . \label{eq: overall_reg_lam_vec1}
    \end{align}
    Using Lemma \ref{lem: shalev2025online} we have
    \begin{align}
        \mathrm{Regret}_T^{\phi^{(i)}}(\boldsymbol{\lambda}^*) := \sum_{t=1}^{T} \phi_t^{(i)}(\boldsymbol{\lambda}_t[i])  - \sum_{t=1}^T \phi_t^{(i)}(\boldsymbol{\lambda}^*[i]) \leq 
        \sum_{t=1}^{T} \left( \phi_t^{(i)}(\boldsymbol{\lambda}_t[i]) - \phi_t^{(i)}(\boldsymbol{\lambda}_{t+1}[i]) \right), \label{eq: reg_i_lam_vec}
    \end{align}
    for all $i \in [d]$. Combining \oureqref{eq: overall_reg_lam_vec1} and \oureqref{eq: reg_i_lam_vec} we obtain
    \begin{align}
        \mathrm{Regret}_T(\boldsymbol{\lambda}^*) \leq \sum_{i=1}^{d} \sum_{t=1}^{T} \left( \phi_t^{(i)}(\boldsymbol{\lambda}_t[i]) - \phi_t^{(i)}(\boldsymbol{\lambda}_{t+1}[i]) \right). \label{eq: overall_reg_lam_vec_bound}
    \end{align}
    Fix some $i \in [d]$. \\
    If $\mathbf{d}_t[i] \neq 0$ we can rewrite $\phi_t^{(i)} \left( \lambda \right)$ as
    \begin{align}
        \phi_t^{(i)} \left( \lambda \right) 
        &= \left( \lambda - \boldsymbol{\lambda}_t^*[i] \right)^2 \mathbf{d}_t[i]^2 . \label{eq: phi_lam_vec}
    \end{align}
    Suppose $\sum_{s=1}^{t-1} \mathbf{d}_s[i]^2 > 0$. Then,
    \begin{align*}
        \boldsymbol{\lambda}_t [i] 
        &= - \frac{\sum_{s=1}^{t-1}\mathbf{g}_s[i] \mathbf{d}_s[i]}{\sum_{s=1}^{t-1} \mathbf{d}_s[i]^2} \\
        &= \frac{\sum_{s=1}^{t-1} \mathbf{d}_s[i]^2 \boldsymbol{\lambda}_s^* [i]}{\sum_{s=1}^{t-1} \mathbf{d}_s[i]^2},
    \end{align*}
    where we used the expression of $\boldsymbol{\lambda}_s^* [i]$ \oureqref{expression:individual_lambda}.
    Thus, $\boldsymbol{\lambda}_t [i]$ is a convex combination of the previous instantaneous minimizers $\big\{ \boldsymbol{\lambda}_s^* [i] \big\}_{s=1}^{t-1}$. Since $\left| \boldsymbol{\lambda}_t^* [i] \right| \leq \max_{t,i} \left| \boldsymbol{\lambda}_t^* [i] \right|$, we also have $\left| \boldsymbol{\lambda}_t [i] \right| \leq \max_{t,i} \left| \boldsymbol{\lambda}_t^* [i] \right|$. Therefore,
    \begin{align*}
        \left| \boldsymbol{\lambda}_t [i] - \boldsymbol{\lambda}_t^* [i] \right| \leq \left| \boldsymbol{\lambda}_t [i] \right| + \left| \boldsymbol{\lambda}_t^* [i] \right| \leq 2 \max_{t,i} \left| \boldsymbol{\lambda}_t^* [i] \right|.
    \end{align*}
    Furthermore, 
    \begin{align*}
        \boldsymbol{\lambda}_{t+1} [i] 
        &= - \frac{\sum_{s=1}^{t}\mathbf{g}_s[i] \mathbf{d}_s[i]}{\sum_{s=1}^{t} \mathbf{d}_s[i]^2} \\
        &= - \frac{\sum_{s=1}^{t-1} \mathbf{d}_s[i]^2}{\sum_{s=1}^{t} \mathbf{d}_s[i]^2} \frac{\sum_{s=1}^{t}\mathbf{g}_s[i] \mathbf{d}_s[i]}{\sum_{s=1}^{t-1} \mathbf{d}_s[i]^2} \\
        &= - \frac{\sum_{s=1}^{t-1} \mathbf{d}_s[i]^2}{\sum_{s=1}^{t} \mathbf{d}_s[i]^2} \frac{\sum_{s=1}^{t-1 }\mathbf{g}_s[i] \mathbf{d}_s[i] + \mathbf{g}_t[i] \mathbf{d}_t[i]}{\sum_{s=1}^{t-1} \mathbf{d}_s[i]^2} \\
        &= \frac{\sum_{s=1}^{t-1} \mathbf{d}_s[i]^2}{\sum_{s=1}^{t} \mathbf{d}_s[i]^2}  \boldsymbol{\lambda}_{t} [i] + \frac{\mathbf{d}_t[i]^2}{\sum_{s=1}^{t} \mathbf{d}_s[i]^2} \boldsymbol{\lambda}_{t}^* [i].
    \end{align*}
    Hence,
    \begin{align}
        \boldsymbol{\lambda}_{t+1} [i] - \boldsymbol{\lambda}_{t}^* [i]
        &= \frac{\sum_{s=1}^{t-1} \mathbf{d}_s[i]^2}{\sum_{s=1}^{t} \mathbf{d}_s[i]^2}  \boldsymbol{\lambda}_{t} [i] + \left( \frac{ \mathbf{d}_t[i]^2}{\sum_{s=1}^{t} \mathbf{d}_s[i]^2} -1 \right) \boldsymbol{\lambda}_{t}^* [i] \notag \\
        &= \frac{\sum_{s=1}^{t-1} \mathbf{d}_s[i]^2}{\sum_{s=1}^{t} \mathbf{d}_s[i]^2}  \left( \boldsymbol{\lambda}_{t} [i] - \boldsymbol{\lambda}_{t}^* [i] \right). \label{eq: lam_diff_vec}
    \end{align}
    From \oureqref{eq: phi_lam_vec} and \oureqref{eq: lam_diff_vec} we get
    \begin{align*}
        \phi_t^{(i)} \left( \boldsymbol{\lambda}_{t} [i] \right) - \phi_t^{(i)} \left( \boldsymbol{\lambda}_{t+1} [i] \right)
        &= \mathbf{d}_t[i]^2 \left[ \left(  \boldsymbol{\lambda}_{t} [i] - \boldsymbol{\lambda}_t^*[i] \right)^2 - \left(  \boldsymbol{\lambda}_{t+1} [i] - \boldsymbol{\lambda}_t^*[i] \right)^2  \right] \\
        &= \mathbf{d}_t[i]^2 \left[ 1 - \left( \frac{\sum_{s=1}^{t-1} \mathbf{d}_s[i]^2}{\sum_{s=1}^{t} \mathbf{d}_s[i]^2} \right)^2  \right] \left(  \boldsymbol{\lambda}_{t} [i] - \boldsymbol{\lambda}_t^*[i] \right)^2 \\
        &= \frac{ \mathbf{d}_t[i]^4\left( \sum_{s=1}^{t} \mathbf{d}_s[i]^2 +\sum_{s=1}^{t-1} \mathbf{d}_s[i]^2 \right) }{\left(\sum_{s=1}^{t} \mathbf{d}_s[i]^2\right)^2 }  \left(  \boldsymbol{\lambda}_{t} [i] - \boldsymbol{\lambda}_t^*[i] \right)^2 \\
        &\leq 2 \frac{ \mathbf{d}_t[i]^4  }{\sum_{s=1}^{t} \mathbf{d}_s[i]^2 }  \left(  \boldsymbol{\lambda}_{t} [i] - \boldsymbol{\lambda}_t^*[i] \right)^2 \\
        &\leq 8 \left( \max_{t,i} \left| \boldsymbol{\lambda}_t^* [i] \right| \right)^2 \frac{ \mathbf{d}_t[i]^4  }{\sum_{s=1}^{t} \mathbf{d}_s[i]^2 },
    \end{align*}
    where the second-to-last inequality follows from $\sum_{s=1}^{t-1} \mathbf{d}_s[i]^2 \leq \sum_{s=1}^{t} \mathbf{d}_s[i]^2$. Whenever $\mathbf{d}_t [i] = 0$, we have $\phi_t^{(i)} \left( \boldsymbol{\lambda}_{t} [i] \right) - \phi_t^{(i)} \left( \boldsymbol{\lambda}_{t+1} [i] \right) = 0$ and the above inequality is trivially true. \\
    Next, suppose $\sum_{s=1}^{t-1} \mathbf{d}_s[i]^2 = 0$. Then, $\boldsymbol{\lambda}_{t} [i] = 0$. If $\mathbf{d}_t [i] \neq 0$, we have $\boldsymbol{\lambda}_{t+1} [i] = \boldsymbol{\lambda}_{t}^* [i]$, and thus from \oureqref{eq: phi_lam_vec} we get
    \begin{align*}
        \phi_t^{(i)} \left( \boldsymbol{\lambda}_{t} [i] \right) - \phi_t^{(i)} \left( \boldsymbol{\lambda}_{t+1} [i] \right)
        &= \mathbf{d}_t[i]^2  \boldsymbol{\lambda}_{t}^*[i]^2 \\
        &= \boldsymbol{\lambda}_{t}^*[i]^2 \frac{\mathbf{d}_t[i]^4}{ \sum_{s=1}^t \mathbf{d}_s[i]^2 } \\
        &\leq \left( \max_{t,i} \left| \boldsymbol{\lambda}_t^* [i] \right| \right)^2 \frac{\mathbf{d}_t[i]^4}{ \sum_{s=1}^t \mathbf{d}_s[i]^2 } \\
        &\leq 8 \left( \max_{t,i} \left| \boldsymbol{\lambda}_t^* [i] \right| \right)^2 \frac{\mathbf{d}_t[i]^4}{ \sum_{s=1}^t \mathbf{d}_s[i]^2 }.
    \end{align*}
    If, instead, $\mathbf{d}_t [i] = 0$, then $\phi_t^{(i)} \left( \boldsymbol{\lambda}_{t} [i] \right) - \phi_t^{(i)} \left( \boldsymbol{\lambda}_{t+1} [i] \right) = 0$. Hence, in all cases
    \begin{align}
        \phi_t^{(i)} \left( \boldsymbol{\lambda}_{t} [i] \right) - \phi_t^{(i)} \left( \boldsymbol{\lambda}_{t+1} [i] \right)
        &\leq 8 \left( \max_{t,i} \left| \boldsymbol{\lambda}_t^* [i] \right| \right)^2 u_{t,i}, \label{eq: phi_diff_lam_vec}
    \end{align}
    where
    \begin{align*}
        u_{t,i}:= 
        \begin{cases}
            \frac{\mathbf{d}_t[i]^4}{ \sum_{s=1}^t \mathbf{d}_s[i]^2 }, & \sum_{s=1}^{t} \mathbf{d}_s[i]^2 >0, \\
            0 , &\sum_{s=1}^{t} \mathbf{d}_s[i]^2 = 0.
        \end{cases}
    \end{align*}
    Summing \oureqref{eq: phi_diff_lam_vec} for every $t=1, \ldots, T$ and applying Lemma \ref{lem: seq} with 
    \begin{align*}
        q_t \gets \mathbf{d}_t [i]^2 \quad , \quad Q \gets  \max_{t} \mathbf{d}_t [i]^2
    \end{align*}
    yields
    \begin{align*}
        \mathrm{Regret}_T^{\phi^{(i)}}(\boldsymbol{\lambda}^*) 
        \leq
        \sum_{t=1}^T \left( \phi_t^{(i)} \left( \boldsymbol{\lambda}_{t} [i] \right) - \phi_t^{(i)} \left( \boldsymbol{\lambda}_{t+1} [i] \right) \right)
        \leq 8 \left( 1 + \log T \right)  \max_{t,i} \left| \boldsymbol{\lambda}_t^* [i] \right|^2 \max_{t} \mathbf{d}_t [i]^2 .
    \end{align*}
    Consequently, 
    \begin{align*}
        \E \left[ \mathrm{Regret}_T^{\phi}(\boldsymbol{\lambda}^*) \right] 
        &= \sum_{i=1}^d \E \left[ \mathrm{Regret}_T^{\phi^{(i)}}(\boldsymbol{\lambda}^*) \right]  \\
        &\leq 8 \left( 1 + \log T \right) \E \left[ \max_{t,i} \left| \boldsymbol{\lambda}_t^* [i] \right|^2 \sum_{i=1}^d\max_{t} \mathbf{d}_t [i]^2\right] \\
        &\leq 8 \left( 1 + \log T \right) 
        \left( \E \left[ \max_{t,i} \left| \boldsymbol{\lambda}_t^* [i] \right|^4 \right] \right)^{1/2}
        \left( \E \left[ \left( \sum_{i=1}^d \max_{t} \mathbf{d}_t [i]^2 \right)^2 \right] \right)^{1/2} \\
        &= 8 \left( 1 + \log T \right) \tilde{C}_0^2 \tilde{C}_1^2.
    \end{align*}
    Substituting the regret bound in \oureqref{eq: th_bound2} we obtain
    \begin{align} \label{eq: thm_bound2}
        \sum_{t=1}^T \left( \eta  - \frac{\beta}{2} \eta^2 \right) \E\left[ \| \nabla  L(\bw_t) \|^2 \right] 
        \leq
         \Delta_1
        + 4 \beta \eta^2 \tilde{C}_0^2 \tilde{C}_1^2 \left( 1 + \log T \right) + \frac{\beta \eta^2}{2}  \E[ \Phi_{1:T}(\boldsymbol{\lambda}^*) ] +
        \frac{\beta \eta^2 T}{2} 
         \sigma^2.
    \end{align}
    Let $\eta = \frac{1}{\beta \sqrt{T}}$. Then, since $\eta \leq \frac{1}{\beta}$, we have
    \begin{align*}
        \eta - \frac{\beta}{2} \eta^2 \geq \frac{\eta}{2}.
    \end{align*}
    Using the above inequality and substituting the choice of $\eta$ in \oureqref{eq: thm_bound2} yields
    \begin{align*}
        \frac{1}{T}\sum_{t=1}^T \E\left[ \| \nabla \loss(\bw_t) \|^2 \right] 
        \leq
         \frac{2 \beta \Delta_1}{\sqrt{T}} 
        + \frac{8 \tilde{C}_0^2 \tilde{C}_1^2 \left( 1 + \log T \right)}{T^{3/2}} 
        + \frac{\E[ \Phi_{1:T}(\boldsymbol{\lambda}^*) ]}{T^{3/2}} +  \frac{\sigma^2}{\sqrt{T}}.
    \end{align*}
We continue by analyzing $\E[ \Phi_{1:T}(\boldsymbol{\lambda}^*) ]$.
We first observe that $\Phi_{1:T} \left( \boldsymbol{\lambda} \right)$ can be expressed as
\begin{align*}
    \Phi_{1:T} \left( \boldsymbol{\lambda} \right) 
    &=  \sum_{t=1}^T \left\| \mathbf{g}_t + \boldsymbol{\lambda} \odot \mathbf{d}_t  \right\|^2 -  \sum_{t=1}^T \left\| \mathbf{g}_t  \right\|^2 \\
    &= 2 \sum_{t=1}^T \langle \mathbf{g}_t , \boldsymbol{\lambda} \odot \mathbf{d}_t \rangle + \sum_{t=1}^T \left\| \boldsymbol{\lambda} \odot \mathbf{d}_t  \right\|^2 \\
    &= 2 \sum_{t=1}^T \sum_{i=1}^d \boldsymbol{\lambda}[i] \mathbf{g}_t [i] \mathbf{d}_t [i] + \sum_{t=1}^T \sum_{i=1}^d \boldsymbol{\lambda}[i]^2 \mathbf{d}_t [i]^2 \\
    &= \sum_{i=1}^d \left( 2 \boldsymbol{\lambda}[i] \sum_{t=1}^T \mathbf{g}_t [i] \mathbf{d}_t [i] +  \boldsymbol{\lambda}[i]^2 \sum_{t=1}^T \mathbf{d}_t [i]^2\right) .
\end{align*}
Therefore, 
\begin{align}
    \E \left[ \Phi_{1:T} \left( \boldsymbol{\lambda} \right) \right] 
    = \sum_{i=1}^d \left( 2 \boldsymbol{\lambda}[i] \sum_{t=1}^T \E \left[ \mathbf{g}_t [i] \mathbf{d}_t [i] \right] +  \boldsymbol{\lambda}[i]^2 \sum_{t=1}^T \E \left[ \mathbf{d}_t [i]^2 \right]\right) . \label{expression:phi-1:T}
\end{align}
Observe that $\E \left[ \Phi_{1:T} \left( \boldsymbol{\lambda} \right) \right]$ is separable. We now define the comparator $\boldsymbol{\lambda}^*$ as the minimizer of $\E \left[ \Phi_{1:T} \left( \boldsymbol{\lambda} \right) \right]$ given by
\begin{align*}
    \boldsymbol{\lambda}^*[i] = 
    \begin{cases}
        - \frac{\sum_{t=1}^{T} \E \left[ \mathbf{g}_t[i] \mathbf{d}_t[i] \right]}{\sum_{t=1}^{T} \E \left[ \mathbf{d}_t[i]^2 \right] } , & \sum_{t=1}^{T} \E \left[ \mathbf{d}_t[i]^2 \right] > 0, \\
        0, &\sum_{t=1}^{T} \E \left[  \mathbf{d}_t[i]^2 \right] = 0,
    \end{cases}
\end{align*}
for all $i \in [d]$. Substituting $\boldsymbol{\lambda} \gets \boldsymbol{\lambda}^*$ into the expression for $\E \left[ \Phi_{1:T} \left( \boldsymbol{\lambda} \right) \right]$ \oureqref{expression:phi-1:T}, we obtain
\begin{align*}
    \E \left[ \Phi_{1:T} \left( \boldsymbol{\lambda}^* \right) \right] =
    - \sum_{\substack{i \in [d] \\ \sum_{t=1}^{T} \E \left[ \mathbf{d}_t[i]^2 \right] > 0}} \frac{\left( \sum_{t=1}^{T} \E \left[ \mathbf{g}_t[i] \mathbf{d}_t[i] \right] \right)^2}{\sum_{t=1}^{T} \E \left[ \mathbf{d}_t[i]^2 \right] } \leq 0.
\end{align*}

For the last part, define the centered stochastic gradient
$\boldsymbol{\xi}_t:=\mathbf{g}_t-\nabla\loss(\bw_t).$
Then, Cauchy--Schwarz gives
\begin{align}
\left( \sum_{t=1}^T \E\left[\boldsymbol{\xi}_t[i]\mathbf{d}_t[i]\right] \right)^2
\leq \left( \sum_{t=1}^T \E\left[\boldsymbol{\xi}_t[i]^2\right] \right)
\left( \sum_{t=1}^T \E\left[\mathbf{d}_t[i]^2\right] \right). \label{e41}
\end{align}
Also, we observe that
\begin{align*}
\E\left[ \nabla\loss(\bw_t)[i]\,\mathbf{d}_t[i] \right]
&= \E\left[ \nabla\loss(\bw_t)[i]\, \E\left[\mathbf{d}_t[i]\mid\mathcal{F}_{t-1}\right] \right] =0,
\end{align*}
which implies that
\begin{equation}
\sum_{t=1}^T
\E\left[\boldsymbol{\xi}_t[i]\mathbf{d}_t[i]\right]
=
\sum_{t=1}^T
\E\left[\mathbf{g}_t[i]\mathbf{d}_t[i]\right]. \label{e42}
\end{equation}
Combining \oureqref{e41} and \oureqref{e42}, we obtain
\begin{equation*}
\left( \sum_{t=1}^T \E\left[\mathbf{g}_t[i]\mathbf{d}_t[i]\right] \right)^2
\leq \left( \sum_{t=1}^T \E\left[\boldsymbol{\xi}_t[i]^2\right] \right)
\left( \sum_{t=1}^T \E\left[\mathbf{d}_t[i]^2\right] \right). 
\end{equation*}
 
Hence, for every $i\in\mathfrak{I}$,
\[
\frac{\left( \sum_{t=1}^T \E\left[\mathbf{g}_t[i]\mathbf{d}_t[i]\right] \right)^2}{ \sum_{t=1}^T \E\left[\mathbf{d}_t[i]^2\right] }
\leq
\sum_{t=1}^T \E\left[\boldsymbol{\xi}_t[i]^2\right].
\]
Summing over the active coordinates, we obtain
\begin{align*}
0
\leq
\underbrace{
\sum_{i\in\mathfrak{I}}
\frac{\left( \sum_{t=1}^T \E\left[\mathbf{g}_t[i]\mathbf{d}_t[i]\right] \right)^2}{ \sum_{t=1}^T \E\left[\mathbf{d}_t[i]^2\right] }
}_{= - \E \left[ \Phi_{1:T}(\boldsymbol{\lambda}^*) \right]}
\leq
\sum_{i\in\mathfrak{I}}
\sum_{t=1}^T
\E\left[\boldsymbol{\xi}_t[i]^2\right]
\leq
\sum_{t=1}^T
\E\left[\|\boldsymbol{\xi}_t\|^2\right]
\leq T\sigma^2.
\end{align*}
This shows that 
$  - T \sigma^2 \leq \E \left[ \Phi_{1:T}(\boldsymbol{\lambda}^*) \right] \leq 0.$

\end{proof}

\clearpage

\section{Proofs of the Theoretical Results in Section~\ref{sec:zeroth_order_stochastic_optimization}} \label{app:C}

We will need a series of lemmas first. In the following, we denote $$\delta_t:= \mathcal{Q}_{t,\bw_t}^{\boldsymbol{\lambda}_t} - \nabla \loss_{\mu}(\bw_t)$$ and define
%\begin{equation*}
$\loss_{\mu}(\bw) := \mathbb{E}_{\zeta} \left[ \loss( \bw + \mu \zeta ) \right].$
%\end{equation*}
Additionally, we denote
\begin{align*}
    \mathbf{q}_t := \qt{y}{\bw_t}, 
    \quad  \quad 
    \mathbf{d}^0_t := \tildeqt{f_t}{\bw_t} - \qt{f_t}{\bw_t}.
\end{align*}
We note that for the scalar case, one can let $\boldsymbol{\lambda}_t \gets \lambda_t \odot \mathbf{1}$, where $\mathbf{1}$ is the all-one-vector; consequently one can write
$\mathbf{q}_t + \boldsymbol{\lambda}_t \odot \mathbf{d}_t^0 = \mathbf{q}_t  + \lambda_t \mathbf{d}_t^0$. Hence, the following lemmas also apply to the case of scalar tuning. 

\begin{mdframed}[backgroundcolor=black!10,rightline=false,leftline=false,topline=false,bottomline=false]
\vspace{-2mm}
\begin{lemma}(Adapted from the proof of Theorem 3.2 (a) in \cite{ghadimi2013stochastic}) \label{lem:zeroth}
Consider $\bw_{t+1} = \bw_t - \eta_t \mathcal{Q}_{t,\bw_t}^{\boldsymbol{\lambda}_t}.$
Under Assumption~\ref{A1} and Assumption~\ref{A2},
it holds that
$$ \eta \sum_{t=1}^T  \left \| \nabla  \loss(\bw_{t}) \right \|^2
\leq
2 \Delta_1 + 2\mu^2 \beta d 
+  \frac{\mu^2}{2} \beta^2 (d+3)^3 \eta T
 - 2 \eta \sum_{t=1}^T  \langle \nabla \loss_{\mu}(\bw_t),  \delta_t \rangle +  \beta \eta^2  \sum_{t=1}^T \left \| \mathcal{Q}_{t,\bw_t}^{\boldsymbol{\lambda}_t}\right \|^2.
$$
\end{lemma}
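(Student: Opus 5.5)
The plan is to follow the proof of Theorem 3.2(a) in \cite{ghadimi2013stochastic}: run the descent argument on the Gaussian-smoothed surrogate $\loss_\mu$ and transfer back to $\loss$ at the end. I will use the standard facts about Gaussian smoothing of a $\beta$-smooth function, which hold under Assumption~\ref{A2}:
\begin{itemize}[label={}, leftmargin=0pt]
\item (i) $\loss_\mu$ is $\beta$-smooth;
\item (ii) $|\loss_\mu(\bw)-\loss(\bw)|\le \frac{\mu^2\beta d}{2}$;
\item (iii) $\|\nabla\loss(\bw)\|^2 \le 2\|\nabla\loss_\mu(\bw)\|^2 + \frac{\mu^2}{2}\beta^2(d+3)^3$.
\end{itemize}
Fact (iii) follows from $\|\nabla\loss_\mu-\nabla\loss\|\le \frac{\mu}{2}\beta(d+3)^{3/2}$ together with $\|a+b\|^2\le 2\|a\|^2+2\|b\|^2$.

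First I apply $\beta$-smoothness of $\loss_\mu$ along the update $\bw_{t+1}=\bw_t-\eta\,\mathcal{Q}_{t,\bw_t}^{\boldsymbol{\lambda}_t}$. Writing $\mathcal{Q}_{t,\bw_t}^{\boldsymbol{\lambda}_t}=\nabla\loss_\mu(\bw_t)+\delta_t$, this gives
\[
\loss_\mu(\bw_{t+1})\le \loss_\mu(\bw_t)-\eta\|\nabla\loss_\mu(\bw_t)\|^2-\eta\langle\nabla\loss_\mu(\bw_t),\delta_t\rangle+\frac{\beta\eta^2}{2}\|\mathcal{Q}_{t,\bw_t}^{\boldsymbol{\lambda}_t}\|^2 .
\]
I keep the quadratic term in terms of $\|\mathcal{Q}\|^2$ rather than expanding it, since the statement is phrased that way and later the online tuning loss is $\psi_t$. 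Summing over $t$ telescopes the left side to $\loss_\mu(\bw_1)-\loss_\mu(\bw_{T+1})$. By (ii) this is at most $\loss(\bw_1)-\inf\loss+\mu^2\beta d=\Delta_1+\mu^2\beta d$, using $\loss_\mu(\bw_{T+1})\ge \loss(\bw_{T+1})-\frac{\mu^2\beta d}{2}\ge\inf\loss-\frac{\mu^2\beta d}{2}$ and Assumption~\ref{A1}.

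This yields
\[
\eta\sum_t\|\nabla\loss_\mu(\bw_t)\|^2\le \Delta_1+\mu^2\beta d-\eta\sum_t\langle\nabla\loss_\mu(\bw_t),\delta_t\rangle+\frac{\beta\eta^2}{2}\sum_t\|\mathcal{Q}_{t,\bw_t}^{\boldsymbol{\lambda}_t}\|^2 .
\]
Next I multiply by $2$ and use (iii) termwise: $\eta\sum_t\|\nabla\loss(\bw_t)\|^2\le 2\eta\sum_t\|\nabla\loss_\mu(\bw_t)\|^2+\frac{\mu^2}{2}\beta^2(d+3)^3\eta T$. This produces exactly the claimed constants $2\Delta_1+2\mu^2\beta d+\frac{\mu^2}{2}\beta^2(d+3)^3\eta T$, the cross term $-2\eta\sum_t\langle\nabla\loss_\mu(\bw_t),\delta_t\rangle$, and $\beta\eta^2\sum_t\|\mathcal{Q}\|^2$.

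The only real obstacle is justifying facts (i)--(iii) with the stated constants. They are Nesterov--Spokoiny bounds that \cite{ghadimi2013stochastic} invokes as (3.3)--(3.5) there, so I would cite them rather than rederive them. One point needs checking: smoothness of $\loss$ (not only of individual losses) is required, and Assumption~\ref{A2} provides it. The step-size notation is loose ($\eta_t$ versus $\eta$), so I would state the proof with a constant $\eta_t=\eta$. No measurability of $\boldsymbol{\lambda}_t$ is needed here, since the inequality is pathwise; unbiasedness enters only later, when expectations of the cross term are taken.
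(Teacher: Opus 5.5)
Your proposal is correct and follows the paper's proof essentially step for step. Both arguments use the descent inequality for the $\beta$-smooth surrogate $\loss_\mu$ with $\mathcal{Q}_{t,\bw_t}^{\boldsymbol{\lambda}_t}=\nabla\loss_\mu(\bw_t)+\delta_t$, telescoping with $\loss_\mu(\bw_1)-\inf_{\bw}\loss_\mu(\bw)\le\Delta_1+\mu^2\beta d$, and the transfer $\|\nabla\loss\|^2\le 2\|\nabla\loss_\mu\|^2+\frac{\mu^2}{2}\beta^2(d+3)^3$; the only cosmetic difference is that you derive the last two facts from the pointwise smoothing bounds, while the paper cites (3.11) and (3.9) of \citet{ghadimi2013stochastic} directly.
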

\end{mdframed}

\begin{proof}
Denote $\Delta^{\mu}_1:= \loss_{\mu}(\bw_1) - \inf_{\bw} \loss_{\mu}(\bw)$. 
We have for every $t \in [T]$,
\begin{align}
\loss_{\mu}(\bw_{t+1}) & \leq \loss_{\mu}(\bw_{t})
- \eta \langle \nabla \loss_{\mu}(\bw_t), \mathcal{Q}_{t,\bw_t}^{\boldsymbol{\lambda}_t} \rangle
+ \frac{\beta}{2} \eta^2 \left \| \mathcal{Q}_{t,\bw_t}^{\boldsymbol{\lambda}_t} \right \|^2 
\\ &
=  \loss_{\mu}(\bw_{t})
- \eta \left \|  \nabla \loss_{\mu}(\bw_t) \right \|^2
- \eta \langle \nabla \loss_{\mu}(\bw_t),  \delta_t \rangle
+ \frac{\beta}{2} \eta^2 \left \| \mathcal{Q}_{t,\bw_t}^{\boldsymbol{\lambda}_t} \right \|^2. 
\end{align}
A telescoping sum of the inequalities for all $t \in [T]$ above leads to
\begin{align}
\sum_{t=1}^T \eta \left \| \nabla  \loss_{\mu}(\bw_{t}) \right \|^2
& \leq \Delta^{\mu}_1 - \sum_{t=1}^T \eta \langle \nabla \loss_{\mu}(\bw_t),  \delta_t \rangle + \frac{\beta}{2} \sum_{t=1}^T \eta^2  \left \| \mathcal{Q}_{t,\bw_t}^{\boldsymbol{\lambda}_t}  \right \|^2 \notag
\\ &
\leq \Delta_1 + \mu^2 \beta d  - \sum_{t=1}^T \eta \langle \nabla \loss_{\mu}(\bw_t),  \delta_t \rangle + \frac{\beta}{2} \sum_{t=1}^T \eta^2  \left \| \mathcal{Q}_{t,\bw_t}^{\boldsymbol{\lambda}_t} \right \|^2, \label{j1}
\end{align}
where the last inequality follows from the known result in the literature that connects the initial gap of the smoothed objective and the initial gap of the original objective function, i.e., 
$\Delta^{\mu}_1 \leq \Delta_1 + \mu^2 \beta d $, see e.g., (3.11) in \citet{ghadimi2013stochastic}. Furthermore, (3.9) in Theorem 3.1 of \citet{ghadimi2013stochastic} provides the following lower bound relating the gradient norm of the smoothed objective to that of the original objective:
\begin{align}
\frac{1}{2} \left( \| \nabla \loss(\bw_t) \|^2 - \frac{\mu^2}{2} \beta^2 (d+3)^3 \right)
\leq  \| \nabla \loss_\mu(\bw_t) \|^2. \label{j2}
\end{align}
Combining \oureqref{j1} and \oureqref{j2}, we obtain
\begin{align}
\eta \sum_{t=1}^T  \left \| \nabla  \loss(\bw_{t}) \right \|^2
\leq
2 \Delta_1 + 2\mu^2 \beta d 
+  \frac{\mu^2}{2} \beta^2 (d+3)^3 \eta T
 - 2 \eta \sum_{t=1}^T  \langle \nabla \loss_{\mu}(\bw_t),  \delta_t \rangle +  \beta \eta^2  \sum_{t=1}^T \left \| \mathcal{Q}_{t,\bw_t}^{\boldsymbol{\lambda}_t}  \right \|^2.
\end{align}

\end{proof}

\begin{mdframed}[backgroundcolor=black!10,rightline=false,leftline=false,topline=false,bottomline=false]
\vspace{-2mm}
\begin{lemma} \label{lem:unbiasedness}
Assume that $\boldsymbol{\lambda}_t$ and $\bw$ are $\mathcal{F}_{t-1}$-measurable.
Assume also that 
the labeled observations in $B_t$ and the unlabeled covariates in
$\widetilde B_t$ have the same covariate marginal, and that $\zeta_t \sim \mathcal{N}(0,I_d)$ is independent of $\mathcal{H}_t:=\sigma(\mathcal{F}_{t-1},B_t,\widetilde B_t)$.
Then, $\E_{t-1}\left[ \mathcal{Q}_{t,\bw}^{\boldsymbol{\lambda}_t} \right] = \nabla \loss_{\mu}(\bw)$.
\end{lemma}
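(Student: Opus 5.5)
The plan is to condition in two stages, first on $\mathcal{H}_t$ and then on $\mathcal{F}_{t-1}$, using the tower property $\E_{t-1}[\cdot]=\E[\E[\cdot\mid\mathcal{H}_t]\mid\mathcal{F}_{t-1}]$. Write
$$\mathcal{Q}_{t,\bw}^{\boldsymbol{\lambda}_t}=\qt{y}{\bw}+\boldsymbol{\lambda}_t\odot\big(\tildeqt{f}{\bw}-\qt{f}{\bw}\big).$$
Since $\boldsymbol{\lambda}_t$ and $\bw$ are $\mathcal{F}_{t-1}\subseteq\mathcal{H}_t$-measurable, the Hadamard factor $\boldsymbol{\lambda}_t$ can be pulled out of both conditional expectations. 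It then suffices to show
$$\E_{t-1}[\qt{y}{\bw}]=\nabla\loss_\mu(\bw), \qquad \E_{t-1}[\tildeqt{f}{\bw}]=\E_{t-1}[\qt{f}{\bw}].$$

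\textbf{Step 1 (inner expectation over $\zeta_t$).} For a fixed sample $(\bx,z)$, where $z$ is either $y$ or $f(\bx)$, define $\ell_\mu(\bw;\bx,z):=\E_\zeta[\ell(\bw+\mu\zeta;\bx,z)]$. The standard Gaussian-smoothing identity (e.g.\ as used in \citet{ghadimi2013stochastic}) gives
$$\E_\zeta\Big[\tfrac{\ell(\bw+\mu\zeta;\bx,z)-\ell(\bw;\bx,z)}{\mu}\zeta\Big]=\nabla\ell_\mu(\bw;\bx,z).$$
It follows from Stein's lemma / differentiating the Gaussian convolution, and the subtracted term vanishes because $\E[\zeta]=0$. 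Because $\zeta_t$ is independent of $\mathcal{H}_t$ while the batches and $\bw$ are $\mathcal{H}_t$-measurable, conditioning on $\mathcal{H}_t$ yields
$$\E[\qt{y}{\bw}\mid\mathcal{H}_t]=\frac{1}{|B_t|}\sum_{(\bx,y)\in B_t}\nabla\ell_\mu(\bw;\bx,y),$$
and analogously for $\qt{f}{\bw}$ and $\tildeqt{f}{\bw}$ with label $f(\cdot)$.

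\textbf{Step 2 (outer expectation over the batches).} Conditioning on $\mathcal{F}_{t-1}$, the samples in $B_t$ are fresh draws of $(\bx,y)$. So
$$\E_{t-1}[\nabla\ell_\mu(\bw;\bx,y)]=\nabla\E_\zeta\E_{(\bx,y)}[\ell(\bw+\mu\zeta;\bx,y)]=\nabla\loss_\mu(\bw).$$
Similarly, each ML-labeled term in both $\qt{f}{\bw}$ and $\tildeqt{f}{\bw}$ has conditional mean $\nabla\E_{\bx}[\ell_\mu(\bw;\bx,f(\bx))]$. This uses that the covariates of $B_t$ and $\widetilde B_t$ share the same marginal and that $f$ is fixed and independent of the samples. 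Hence the difference term has zero conditional mean, and combining gives $\E_{t-1}[\mathcal{Q}_{t,\bw}^{\boldsymbol{\lambda}_t}]=\nabla\loss_\mu(\bw)$.

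\textbf{Main obstacle.} The delicate point is justifying the interchanges of gradient and expectation, both in the Gaussian identity and in $\nabla\E_{(\bx,y)}=\E_{(\bx,y)}\nabla$, together with integrability. Here I will rely on Assumption~\ref{A2}: $\beta$-smoothness of each $\ell(\cdot;\bx,z)$ gives at most quadratic growth in $\zeta$, so all Gaussian moments are finite and dominated convergence applies. Assumption~\ref{A3} supplies integrability of the per-sample gradients with respect to the data.
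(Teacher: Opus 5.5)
Your proof is correct, but it integrates in the opposite order from the paper. The paper first conditions on $\zeta_t$ and averages over the batch, which collapses the per-sample finite differences to population ones:
\begin{itemize}
\item for $\qt{y}{\bw}$ you get $\frac{\loss(\bw+\mu\zeta_t)-\loss(\bw)}{\mu}\zeta_t$;
\item for both $\qt{f}{\bw}$ and $\tildeqt{f}{\bw}$ you get $\frac{\loss^f(\bw+\mu\zeta_t)-\loss^f(\bw)}{\mu}\zeta_t$, which is exactly where the shared covariate marginal is used.
\end{itemize}
It then applies the Gaussian-smoothing identity once to each population function, obtaining $\nabla\loss_\mu(\bw)$ and $\nabla\loss^f_\mu(\bw)$, so the correction term cancels.

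You instead apply the identity per sample. This gives a finer intermediate fact: given $\mathcal{H}_t$, each estimator is unbiased for the gradient of the smoothed minibatch loss. That is the same conditioning the paper later uses in Lemma~\ref{lem:exp_q_and_diff_q_and_bar_q}. The price is that you must move $\nabla$ through $\E_{(\bx,y)}$.

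The ``main obstacle'' you flag comes only from your ordering. Keep $\nabla\ell_\mu(\bw;\bx,z)$ written as the $\zeta$-expectation of the finite difference, and swap the data and $\zeta$ expectations by Fubini. Your Step~2 then becomes exactly the paper's computation, with no differentiation under the data integral.

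One small correction: Assumption~\ref{A3} only concerns true-label gradients, so it cannot justify the step for the $f(\bx)$-labeled terms. In either order, the integrability actually needed is finiteness of $\loss$ and $\loss^f$, which the paper takes for granted, together with the per-sample smoothness of Assumption~\ref{A2}.
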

\end{mdframed}

\begin{proof}
We have
\begin{align}
\mathbb{E}_{t-1}\left[\qt{y}{\bw}\right]
 &=
\mathbb{E}_{\zeta_t\sim \mathcal{N}(0,I_d)}
\left[
\mathbb{E}_{(\bx,y)}
\left[
\frac{
\ell(\bw+\mu\zeta_t;\bx,y)-\ell(\bw;\bx,y)
}{\mu} \zeta_t
\middle|  \zeta_t 
\right]
\right]  \notag
\\
&=
\mathbb{E}_{\zeta_t\sim \mathcal{N}(0,I_d)}
\left[
\frac{
\loss(\bw+\mu\zeta_t)-\loss(\bw)
}{\mu} \zeta_t
\right]  \notag
\\
&=
\nabla\loss_\mu(\bw),\label{gaussian-smooth}
\end{align}
where \oureqref{gaussian-smooth} follows from the standard identity for the Gaussian zeroth-order gradient estimator; see, e.g., (21) in \citet{nesterov2017random}.

At the same time, we also have
\begin{align}
\mathbb{E}_{t-1}\left[\qt{f}{\bw}\right]
 &=
\mathbb{E}_{\zeta_t\sim \mathcal{N}(0,I_d)}
\left[
\mathbb{E}_{\bx}
\left[
\frac{
\ell(\bw+\mu\zeta_t;\bx,f(\bx))-\ell(\bw;\bx,f(\bx))
}{\mu} \zeta_t
\middle|  \zeta_t 
\right]
\right]  \notag
\\
&=
\mathbb{E}_{\zeta_t\sim \mathcal{N}(0,I_d)}
\left[
\frac{
\loss^f(\bw+\mu\zeta_t)-\loss^f(\bw)
}{\mu} \zeta_t
\right]  \notag
\\
&=
\nabla\loss^f_\mu(\bw), \notag
\end{align}
where we denote $\loss^f(\bw) := \E_{\bx}[ \ell(\bw;\bx, f(\bx)) ]$ and 
$\loss^f_{\mu}(\bw)$ is the objective function after smoothing.
Furthermore, under the assumption that the labeled observations in $B_t$ and the unlabeled covariates in $\widetilde B_t$ have the same covariate marginal,
we also have
\begin{align}
\mathbb{E}_{t-1}\left[\tildeqt{f}{\bw}\right]
 &=
\mathbb{E}_{\zeta_t\sim \mathcal{N}(0,I_d)}
\left[
\mathbb{E}_{\tilde{\bx}}
\left[
\frac{
\ell(\bw+\mu\zeta_t;\tilde{\bx},f(\tilde{\bx}))-\ell(\bw;\tilde{\bx},f(\tilde{\bx}))
}{\mu} \zeta_t
\middle|  \zeta_t 
\right]
\right]  \notag
\\
&=
\mathbb{E}_{\zeta_t\sim \mathcal{N}(0,I_d)}
\left[
\frac{
\loss^f(\bw+\mu\zeta_t)-\loss^f(\bw)
}{\mu} \zeta_t
\right]  \notag
\\
&=
\nabla\loss^f_\mu(\bw). \notag
\end{align}

Combining all the above, we obtain
\begin{equation*}
\mathbb E_{t-1}\left[\mathcal{Q}_{t,\bw}^{\boldsymbol{\lambda}_t}\right]
=\nabla\loss_\mu(\bw) + 
\boldsymbol{\lambda}_t \odot \left( \nabla\loss^f_\mu(\bw) -  \nabla\loss^f_\mu(\bw) \right)
= \nabla\loss_\mu(\bw).
\end{equation*}

\end{proof}

\begin{mdframed}[backgroundcolor=black!10,rightline=false,leftline=false,topline=false,bottomline=false]
\vspace{-2mm}
\begin{lemma} \label{lem:conditional2}
Follow the assumptions and notations in Lemma~\ref{lem:zeroth} and Lemma~\ref{lem:unbiasedness}, it holds that
\begin{align} \label{eq: zo-scalar-grad-bound}
    \eta \sum_{t=1}^T \E\left[ \left \| \nabla  \loss(\bw_{t}) \right \|^2 \right]
    \leq
    2 \Delta_1 + 2\mu^2 \beta d 
    +  \frac{\mu^2}{2} \beta^2 (d+3)^3 \eta T
    +  \beta \eta^2  \sum_{t=1}^T
     \E\left[ \left \| \mathcal{Q}_{t,\bw_t}^{\boldsymbol{\lambda}_t}  \right \|^2 \right].
\end{align}
\end{lemma}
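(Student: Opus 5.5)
The plan is to take total expectation of the pathwise inequality in Lemma~\ref{lem:zeroth} and show that the cross term vanishes in expectation. Lemma~\ref{lem:zeroth} holds almost surely for every realization of the iterates. Its only random quantities are the sums $\sum_{t}\|\nabla\loss(\bw_t)\|^2$, $\sum_t \langle \nabla\loss_\mu(\bw_t),\delta_t\rangle$ and $\sum_t \|\mathcal{Q}_{t,\bw_t}^{\boldsymbol{\lambda}_t}\|^2$. The remaining terms $2\Delta_1+2\mu^2\beta d+\frac{\mu^2}{2}\beta^2(d+3)^3\eta T$ are deterministic, since $\bw_1$ is fixed. Taking expectations of both sides and using linearity therefore yields \oureqref{eq: zo-scalar-grad-bound}, provided we show
\[
\E\left[\langle \nabla\loss_\mu(\bw_t),\delta_t\rangle\right]=0 \quad \text{for every } t\in[T].
\]

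For this I would use the tower property with respect to $\mathcal{F}_{t-1}$. By construction of Algorithm~\ref{alg:sgfm}, $\bw_t$ depends only on information up to step $t-1$, so it is $\mathcal{F}_{t-1}$-measurable. The same holds for $\boldsymbol{\lambda}_t$: the updates \oureqref{scalar-tuning-first-order-zoo} and \oureqref{vector-tuning-first-order-zoo} use only $\mathbf{q}_s,\mathbf{d}^0_s$ for $s\le t-1$. Hence $\nabla\loss_\mu(\bw_t)$ is $\mathcal{F}_{t-1}$-measurable and can be pulled out of the conditional expectation:
\[
\E\left[\langle \nabla\loss_\mu(\bw_t),\delta_t\rangle\right]=\E\left[\left\langle \nabla\loss_\mu(\bw_t),\E_{t-1}\left[\mathcal{Q}_{t,\bw_t}^{\boldsymbol{\lambda}_t}\right]-\nabla\loss_\mu(\bw_t)\right\rangle\right].
\]
Lemma~\ref{lem:unbiasedness}, applied with $\bw=\bw_t$, gives $\E_{t-1}[\mathcal{Q}_{t,\bw_t}^{\boldsymbol{\lambda}_t}]=\nabla\loss_\mu(\bw_t)$. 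This uses the fresh batches $B_t,\widetilde B_t$, which share the covariate marginal, and the fresh perturbation $\zeta_t$, which is independent of them. The inner difference is then zero, so the cross term vanishes.

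The main obstacle is integrability rather than the algebra. The conditional expectation manipulation needs $\E|\langle\nabla\loss_\mu(\bw_t),\delta_t\rangle|<\infty$, and applying Lemma~\ref{lem:unbiasedness} with a random $\bw_t$ needs the conditional expectation to be evaluated at a measurable point. I would handle the second issue by noting that, conditionally on $\mathcal{F}_{t-1}$, the point $\bw_t$ is fixed, which is exactly the hypothesis of that lemma. For the first, I would argue as follows. If $\E\|\mathcal{Q}_{t,\bw_t}^{\boldsymbol{\lambda}_t}\|^2=\infty$ for some $t$, the right-hand side of \oureqref{eq: zo-scalar-grad-bound} is $+\infty$ and the claim is trivial. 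Otherwise $\delta_t$ is square integrable. Together with $\|\nabla\loss_\mu(\bw_t)\|\le\|\nabla\loss_\mu(\bw_1)\|+\beta\|\bw_t-\bw_1\|$, from the smoothness in Assumption~\ref{A2}, and the bound $\|\bw_t-\bw_1\|\le\eta\sum_{s<t}\|\mathcal{Q}_{s,\bw_s}^{\boldsymbol{\lambda}_s}\|$, this gives square integrability of $\nabla\loss_\mu(\bw_t)$. Cauchy--Schwarz then yields integrability of the cross term, which completes the argument.
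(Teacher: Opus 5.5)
Your proposal is correct and follows the paper's proof: take total expectation of the pathwise inequality in Lemma~\ref{lem:zeroth}, then eliminate the cross term $\E[\langle \nabla\loss_\mu(\bw_t),\delta_t\rangle]$ via the tower property and Lemma~\ref{lem:unbiasedness}, using that $\bw_t$ and $\boldsymbol{\lambda}_t$ are $\mathcal{F}_{t-1}$-measurable. Your integrability argument, which reduces to the case $\E\|\mathcal{Q}_{t,\bw_t}^{\boldsymbol{\lambda}_t}\|^2<\infty$ for all $t$ and controls $\|\nabla\loss_\mu(\bw_t)\|$ through smoothness and $\|\bw_t-\bw_1\|$, is sound extra rigor that the paper leaves implicit.
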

\end{mdframed}

\begin{proof}
Taking expectation on both sides of the inequality in Lemma~\ref{lem:zeroth} and noting that, by Lemma~\ref{lem:unbiasedness}, $\mathbb \E[ \langle \nabla \loss_{\mu}(\bw_t), \delta_t \rangle ] = 0$,
we obtain
\begin{align*}
 \eta \sum_{t=1}^T \E\left[ \left \| \nabla  \loss(\bw_{t}) \right \|^2 \right]
\leq
2 \Delta_1 + 2\mu^2 \beta d 
+  \frac{\mu^2}{2} \beta^2 (d+3)^3 \eta T
+  \beta \eta^2  \sum_{t=1}^T
 \E\left[ \left \| \mathcal{Q}_{t,\bw_t}^{\boldsymbol{\lambda}_t}   \right \|^2 \right].
\end{align*}

\end{proof}

\begin{mdframed}[backgroundcolor=black!10,rightline=false,leftline=false,topline=false,bottomline=false]
\vspace{-2mm}
\begin{lemma} \label{lem:exp_q_and_diff_q_and_bar_q}
    Let $\zeta_t\sim N(0,I_d)$ be a common Gaussian perturbation used in every function-value difference appearing in $\q_{t,\bw_t}^{\boldsymbol{\lambda}}$ at iteration $t$, and assume that $\zeta_t$ is independent of $\mathcal{H}_t := \sigma(\mathcal{F}_{t-1},B_t,\widetilde B_t)$. Define $\overline{\mathbf{q}}_t := 
    \E \left[ \mathbf{q}_t \big| \F_{t-1}, \zeta_t \right]$. Then, under Assumptions~\ref{A2} and \ref{A3}, for every $t \in [T]$,
    \begin{align*}
        \E\left[\|\mathbf{q}_t\|^2\right]
        \leq
        2(d+2)\E\left[\|\mathbf{g}_t\|^2\right]
        +
        \frac{\beta^2\mu^2}{2}d(d+2)(d+4),
    \end{align*}
    and
    \begin{align*}
        \E \left[ \left\| \mathbf{q}_t - \overline{\mathbf{q}}_t \right\|^2 \right]
        \leq 
        2 (d+2) \sigma^2
        + \frac{\beta^2 \mu^2}{2} d (d+2) (d+4).
    \end{align*}
\end{lemma}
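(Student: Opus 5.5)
We will follow the strategy of the proof of Theorem~3.2 in \citet{ghadimi2013stochastic}. The idea is to write $\mathbf{q}_t$ as the directional derivative term plus a finite-difference remainder, and then exploit Gaussian moment identities.

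\textbf{Decomposition.} For each sample $(\bx,y)\in B_t$, write
\[
\frac{\ell(\bw_t+\mu\zeta_t;\bx,y)-\ell(\bw_t;\bx,y)}{\mu}\zeta_t=\langle\nabla\ell(\bw_t;\bx,y),\zeta_t\rangle\zeta_t+r(\bx,y)\zeta_t .
\]
By $\beta$-smoothness of the individual loss (Assumption~\ref{A2}), the remainder satisfies $|r(\bx,y)|\le\frac{\beta\mu}{2}\|\zeta_t\|^2$. Averaging over $B_t$ gives
\[
\mathbf{q}_t=\langle\mathbf{g}_t,\zeta_t\rangle\zeta_t+\bar r_t\zeta_t,\qquad |\bar r_t|\le\frac{\beta\mu}{2}\|\zeta_t\|^2 .
\]
This uses linearity of the averaged gradient and the fact that the perturbation $\zeta_t$ is shared across the batch.

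\textbf{First bound.} Use $\|a+b\|^2\le2\|a\|^2+2\|b\|^2$. Since $\zeta_t$ is independent of $\mathcal{H}_t$ and $\mathbf{g}_t$ is $\mathcal{H}_t$-measurable, the Gaussian identity $\E_\zeta[\langle v,\zeta\rangle^2\|\zeta\|^2]=(d+2)\|v\|^2$ gives
\[
2\,\E\big[\langle\mathbf{g}_t,\zeta_t\rangle^2\|\zeta_t\|^2\big]=2(d+2)\E\|\mathbf{g}_t\|^2 .
\]
For the remainder we use $\E\|\zeta\|^6\le(d+2)(d+4)(d+6)$. This would give $2\cdot\frac{\beta^2\mu^2}{4}(d+2)(d+4)(d+6)$, which is slightly worse than the stated constant $\frac{\beta^2\mu^2}{2}d(d+2)(d+4)$.

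To match the stated constant, we plan to expand the square exactly rather than use the factor-$2$ inequality:
\[
\E\|\mathbf{q}_t\|^2=\E\big[\langle\mathbf{g}_t,\zeta_t\rangle^2\|\zeta_t\|^2\big]+2\,\E\big[\bar r_t\langle\mathbf{g}_t,\zeta_t\rangle\|\zeta_t\|^2\big]+\E\big[\bar r_t^2\|\zeta_t\|^2\big].
\]
We then bound the cross term by Young's inequality. With $\E\|\zeta\|^6=d(d+2)(d+4)$ (the exact sixth moment), the result is exactly $2(d+2)\E\|\mathbf{g}_t\|^2+\frac{\beta^2\mu^2}{2}d(d+2)(d+4)$. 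So the precise constant comes from Young's inequality with weight $1$, combined with the exact sixth moment $\E\|\zeta\|^6=d(d+2)(d+4)$.

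\textbf{Second bound.} Conditional on $(\mathcal{F}_{t-1},\zeta_t)$, we have
\[
\overline{\mathbf{q}}_t=\langle\nabla\loss(\bw_t),\zeta_t\rangle\zeta_t+\E[\bar r_t\mid\mathcal{F}_{t-1},\zeta_t]\,\zeta_t ,
\]
because $\E[\mathbf{g}_t\mid\mathcal{F}_{t-1},\zeta_t]=\nabla\loss(\bw_t)$ by the independence of $\zeta_t$ from the batch. Hence
\[
\mathbf{q}_t-\overline{\mathbf{q}}_t=\langle\mathbf{g}_t-\nabla\loss(\bw_t),\zeta_t\rangle\zeta_t+(\bar r_t-\E[\bar r_t\mid\cdot])\zeta_t .
\]
The conditionally centered remainder has conditional second moment at most $\E[\bar r_t^2\mid\cdot]\le\frac{\beta^2\mu^2}{4}\|\zeta_t\|^4$. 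We then repeat the argument used for the first bound with $\mathbf{g}_t$ replaced by $\mathbf{g}_t-\nabla\loss(\bw_t)$. The Gaussian identity gives $(d+2)\E\|\mathbf{g}_t-\nabla\loss(\bw_t)\|^2\le(d+2)\sigma^2$ by Assumption~\ref{A3}, and the same Young-plus-moment step produces the stated $2(d+2)\sigma^2+\frac{\beta^2\mu^2}{2}d(d+2)(d+4)$.

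\textbf{Main obstacle.} The delicate point is the bookkeeping of the Gaussian moments, namely $\E\|\zeta\|^4=d(d+2)$ and $\E\|\zeta\|^6=d(d+2)(d+4)$, and choosing the Young weighting so that the constants come out exactly as stated. A secondary point is justifying the conditioning: $\mathbf{g}_t$ is $\mathcal{H}_t$-measurable and $\zeta_t$ is independent of $\mathcal{H}_t$, so the inner Gaussian expectation can be taken first.
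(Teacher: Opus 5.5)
Your proof is correct and follows essentially the same route as the paper. The shared steps are the Taylor-remainder decomposition $\mathbf{q}_t=\langle\mathbf{g}_t,\zeta_t\rangle\zeta_t+\bar r_t\zeta_t$ with $|\bar r_t|\le\frac{\beta\mu}{2}\|\zeta_t\|^2$, the Gaussian identity $\E[\langle v,\zeta\rangle^2\|\zeta\|^2]=(d+2)\|v\|^2$, and, for the second claim, bounding the conditional variance of the remainder by its conditional second moment.

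One correction to your bookkeeping: expanding the square and applying Young's inequality with weight $1$ is exactly the inequality $\|a+b\|^2\le 2\|a\|^2+2\|b\|^2$ that you set aside. The stated constant therefore comes solely from using the exact moment $\E\|\zeta\|^6=d(d+2)(d+4)$ instead of the looser $(d+2)(d+4)(d+6)$, which is precisely what the paper does.
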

\end{mdframed}

\begin{proof}
    Define the empirical loss function
    \begin{align*}
    \widehat L_{t,y}(\bw) &:= \frac{1}{|B_t|} \sum_{(\bx,y)\in B_t} \ell(\bw;\bx,y),
    \end{align*}
    whose gradient satisfies
    \begin{align*}
        \nabla \widehat L_{t,y}(\bw) = \frac{1}{|B_t|} \sum_{(\bx,y)\in B_t} \nabla \ell(\bw;\bx,y) = \gt{y}{\bw}.
    \end{align*}
    Then, by definition of $\mathbf{q}_t$, 
    \begin{align*}
        \mathbf{q}_t
        = \qt{y}{\bw_t} 
        = \frac{1}{ |B_t| } \sum_{(\bx, y) \in B_t}
    \frac{\ell\left(\bw_t + \mu \zeta_t; \bx,y \right) - \ell(\bw_t; \bx, y )  }{ \mu } \zeta_t
        = \frac{ \widehat L_{t,y}(\bw_t+\mu\zeta_t)
        -
        \widehat L_{t,y}(\bw_t)}{\mu} \zeta_t.
    \end{align*}
    By the integral form of Taylor's theorem (e.g., Theorem 2.1 in \citet{wright_recht_2022}), we have
    \begin{align*}
    \widehat L_{t,y}(\bw_t+\mu\zeta_t)
    -
    \widehat L_{t,y}(\bw_t)
    &=
    \mu \left\langle \nabla \widehat L_{t,y}(\bw_t),\zeta_t \right\rangle + 
    \underbrace{
    \int_0^1
    \left\langle
        \nabla \widehat L_{t,y}(\bw_t+s\mu\zeta_t)
        -
        \nabla \widehat L_{t,y}(\bw_t),
        \mu\zeta_t
    \right\rangle \,ds}_{:=R_{t,y}} \\
    &=
    \mu \left\langle  \gt{y}{\bw_t} ,\zeta_t \right\rangle + 
    R_{t,y} \\
    &= 
    \mu \left\langle  \mathbf{g}_t ,\zeta_t \right\rangle + 
    R_{t,y} .
    \end{align*}
    Therefore,
    \begin{align}
        \mathbf{q}_t = \left\langle \mathbf{g}_t,\zeta_t \right\rangle \zeta_t + \frac{R_{t,y}}{\mu} \zeta_t.
        \label{eq:q_expression}
    \end{align}
    Using the inequality $\| \mathbf{a}+ \mathbf{b}\|^2 \leq 2 \| \mathbf{a}\|^2 + 2 \| \mathbf{b}\|^2 $, for any two vectors $\mathbf{a}$ and $\mathbf{b}$, we obtain
    \begin{align*}
        \left\| \mathbf{q}_t \right\|^2 = 2 \left\langle \mathbf{g}_t,\zeta_t \right\rangle^2 \left\| \zeta_t \right\|^2 + \frac{2 R_{t,y}^2}{\mu^2}  \left\| \zeta_t \right\|^2 .
    \end{align*}
    Taking expectation,
    \begin{align}
        \E \left[ \left\| \mathbf{q}_t \right\|^2 \right] = 2 \E \left[ \left\langle \mathbf{g}_t,\zeta_t \right\rangle^2 \left\| \zeta_t \right\|^2 \right] + \frac{2}{\mu^2} \E \left[ R_{t,y}^2 \left\| \zeta_t \right\|^2 \right].
        \label{eq:lem_intermediate_exp_q}
    \end{align}
    We now bound the two terms separately. Let $\mathcal H_t := \sigma\!\left(\mathcal F_{t-1},B_t,\widetilde B_t\right).$ Conditional on $\mathcal H_t$, the stochastic gradient $\mathbf{g}_t$ is fixed, while the search direction remains independent and satisfies $\zeta_t\mid\mathcal H_t\sim\mathcal N(0,I_d)$. Let $\delta_{ij}$ denote the Kronecker delta, i.e., 
    \begin{align*}
        \delta_{ij} = 
        \begin{cases}
            1, & i=j, \\
            0, & i \neq j.
        \end{cases}
    \end{align*}
    Then, for standard Gaussian variables $\zeta \sim\mathcal N(0,I_d)$,  Isserlis's theorem (e.g., Theorem 1.28 in \cite{janson1997gaussian}) gives
    \begin{align}
        \E \Big[ \zeta[i] \zeta[j] \zeta[k] \zeta[l] \Big] 
        =
        \delta_{ij} \delta_{kl} + \delta_{ik} \delta_{jl} + \delta_{il} \delta_{jk}.
        \label{eq:isserliss_th}
    \end{align}
    Expanding the first term in \oureqref{eq:lem_intermediate_exp_q} and using the above identity for $\zeta_t\mid\mathcal H_t\sim\mathcal N(0,I_d)$, yields
    \begin{align*}
        \E \left[ \left\langle  \mathbf{g}_t ,\zeta_t \right\rangle^2 \left\| \zeta_t \right\|^2  \big| \mathcal H_t  \right] 
        &= \sum_{i,j,k=1}^d \mathbf{g}_t [i] \mathbf{g}_t [j]  \E \Big[ \zeta_t[i] \zeta_t[j] \zeta_t[k]^2 \big| \mathcal H_t \Big] \\
        &= \sum_{i,j,k=1}^d \mathbf{g}_t [i] \mathbf{g}_t [j] \left( \delta_{ij} + 2 \delta_{ik} \delta_{jk} \right) \\
        &= d \sum_{i=1}^d \mathbf{g}_t [i]^2 + 2 \sum_{i=1}^d \mathbf{g}_t [i]^2  \\
        &= (d+2) \left\| \mathbf{g}_t \right\|^2.
    \end{align*}
    Therefore, the first term in \oureqref{eq:lem_intermediate_exp_q} satisfies
    \begin{align}
        2 \E \left[ \left\langle \mathbf{g}_t ,\zeta_t \right\rangle^2 \left\| \zeta_t \right\|^2 \right] 
        &= 2 (d+2) \E \left[ \left\| \mathbf{g}_t  \right\|^2 \right] 
        \label{eq:first_term1}
    \end{align}
    We next bound the second term in \oureqref{eq:lem_intermediate_exp_q}. We observe that the remainder satisfies 
    \begin{align*}
        \left| R_{t,y} \right| 
        &= 
        \left| \int_0^1
        \left\langle
            \nabla \widehat L_{t,y}(\bw_t+s\mu\zeta_t)
            -
            \nabla \widehat L_{t,y}(\bw_t),
            \mu\zeta_t
        \right\rangle \,ds 
        \right| \notag \\
        &\leq  
        \int_0^1 \left|
        \left\langle
            \nabla \widehat L_{t,y}(\bw_t+s\mu\zeta_t)
            -
            \nabla \widehat L_{t,y}(\bw_t),
            \mu\zeta_t
        \right\rangle \right| \,ds \notag \\
        &\leq 
        \int_0^1 \left\| \nabla \widehat L_{t,y}(\bw_t+s\mu\zeta_t)
            -
            \nabla \widehat L_{t,y}(\bw_t) \right\| \mu \left\| \zeta_t \right\| ds \notag \\
        &\leq 
        \int_0^1 \beta s\mu\|\zeta_t\| \mu\|\zeta_t\| \,ds \notag \\
        &=
        \frac{\beta \mu^2}{2}
        \|\zeta_t\|^2,
    \end{align*}
    where the first inequality follows from the triangle inequality for integrals, the second from the Cauchy-Schwarz inequality, and the third holds since the function $\widehat L_{t,y}(\bw)$ has $\beta$-Lipschitz continuous gradients by Assumption~\ref{A2}. Squaring the above and multiplying both sides by $\left\| \zeta_t \right\|^2$ yields
    \begin{align*}
         R_{t,y}^2 \left\| \zeta_t \right\|^2 \leq \frac{\beta^2 \mu^4}{4}
        \|\zeta_t\|^6.
    \end{align*}
    Consequently, 
    \begin{align}
        \frac{2}{\mu^2} \E \left[ R_{t,y}^2 \left\| \zeta_t \right\|^2 \right] 
        &\leq 
        \frac{2}{\mu^2} \frac{\beta^2 \mu^4}{4} \E \left[ \|\zeta_t\|^6 \right] \notag \\
        &= \frac{\beta^2 \mu^2}{2} \E \left[ \|\zeta_t\|^6 \right] \notag \\
        &= \frac{\beta^2 \mu^2}{2} d (d+2) (d+4),
        \label{eq:first_term2}
    \end{align}
    where the last equality is obtained because $\|\zeta_t\|^2$ conditionally follows a chi-squared distribution with $d$ degrees of freedom, i.e., $\| \zeta_t\|^2 \mid\mathcal H_t \sim \chi_d^2$. Substituting \oureqref{eq:first_term1} and \oureqref{eq:first_term2} into \oureqref{eq:lem_intermediate_exp_q} gives
    \begin{align*}
        \E\left[\|\mathbf{q}_t\|^2\right]
        \leq
        2(d+2)\E\left[\|\mathbf{g}_t\|^2\right]
        +
        \frac{\beta^2\mu^2}{2}d(d+2)(d+4),
    \end{align*}
    which proves the first result. For the second result, we first observe that since $\E \left[ \mathbf{g}_t \big| \F_{t-1} \right] = \nabla \loss (\bw_t)$ we have
    \begin{align}
        \overline{\mathbf{q}}_t 
        &=  \E \left[ \mathbf{q}_t \big| \F_{t-1}, \zeta_t \right] \notag \\
        &= \E \left[\left\langle \mathbf{g}_t,\zeta_t \right\rangle \zeta_t + \frac{R_{t,y}}{\mu} \zeta_t \Big| \F_{t-1}, \zeta_t \right] \notag \\
        &=  \left\langle \nabla \loss (\bw_t) ,\zeta_t \right\rangle \zeta_t + \frac{\E \left[ R_{t,y} \big| \F_{t-1}, \zeta_t \right] }{\mu} \zeta_t.
        \label{eq:q_bar_expression}
    \end{align}
    Subtracting \oureqref{eq:q_bar_expression} from \oureqref{eq:q_expression} yields
    \begin{align*}
        \mathbf{q}_t - \overline{\mathbf{q}}_t = \left\langle  \mathbf{g}_t - \nabla\loss (\bw_t) ,\zeta_t \right\rangle \zeta_t + \frac{R_{t,y} - \E \left[ R_{t,y} \big| \F_{t-1}, \zeta_t \right] }{\mu} \zeta_t.
    \end{align*}
    Hence,
    \begin{align}
        \E \left[ \left\| \mathbf{q}_t - \overline{\mathbf{q}}_t \right\|^2 \right]
        \leq 2 \E \left[ \left\langle \mathbf{g}_t - \nabla \loss (\bw_t) ,\zeta_t \right\rangle^2 \left\| \zeta_t \right\|^2 \right] + \frac{2}{\mu^2} \E \left[ \left( R_{t,y} - \E \left[ R_{t,y} \big| \F_{t-1}, \zeta_t \right] \right)^2 \left\| \zeta_t \right\|^2 \right],
        \label{eq:exp_diff}
    \end{align}
    where the above inequality uses that $\| \mathbf{a}+ \mathbf{b}\|^2 \leq 2 \| \mathbf{a}\|^2 + 2 \| \mathbf{b}\|^2 $, for any two vectors $\mathbf{a}$ and $\mathbf{b}$.
    We now bound the two terms in \oureqref{eq:exp_diff} separately. 
    Let $\mathbf{v}_t :=  \mathbf{g}_t - \nabla \loss (\bw_t)$.
    Expanding the first term in \oureqref{eq:exp_diff} and using the identity \oureqref{eq:exp_diff} for $\zeta_t\mid\mathcal H_t\sim\mathcal N(0,I_d)$, yields
    \begin{align*}
        \E \left[ \left\langle  \mathbf{v}_t ,\zeta_t \right\rangle^2 \left\| \zeta_t \right\|^2  \big| \mathcal H_t  \right] 
        &= \sum_{i,j,k=1}^d \mathbf{v}_t [i] \mathbf{v}_t [j]  \E \Big[ \zeta_t[i] \zeta_t[j] \zeta_t[k]^2 \big| \mathcal H_t \Big] \\
        &= \sum_{i,j,k=1}^d \mathbf{v}_t [i] \mathbf{v}_t [j] \left( \delta_{ij} + 2 \delta_{ik} \delta_{jk} \right) \\
        &= d \sum_{i=1}^d \mathbf{v}_t [i]^2 + 2 \sum_{i=1}^d \mathbf{g}_t [i]^2  \\
        &= (d+2) \left\| \mathbf{v}_t \right\|^2.
    \end{align*}
    Therefore, the first term in \oureqref{eq:exp_diff} satisfies
    \begin{align}
        2 \E \left[ \left\langle \mathbf{g}_t - \nabla \loss (\bw_t) ,\zeta_t \right\rangle^2 \left\| \zeta_t \right\|^2 \right] 
        &= 2 (d+2) \E \left[ \left\| \mathbf{g}_t - \nabla \loss (\bw_t) \right\|^2 \right] \notag \\
        &\leq 2 (d+2) \sigma^2,
        \label{eq:term1}
    \end{align}
    where the inequality follows from Assumption~\ref{A3}.
    We next bound the second term in \oureqref{eq:exp_diff}. We observe that
    \begin{align*}
        \E \left[ \left( R_{t,y} - \E \left[ R_{t,y} \big| \F_{t-1}, \zeta_t \right] \right)^2 \big| \F_{t-1}, \zeta_t  \right]
        &= \text{Var} \left( R_{t,y} \big| \F_{t-1}, \zeta_t \right) \\
        &\leq  \E \left[ R_{t,y}^2 \big| \F_{t-1}, \zeta_t  \right]
    \end{align*}
    Hence, 
    \begin{align*}
        \E \left[ \left( R_{t,y} - \E \left[ R_{t,y} \big| \F_{t-1}, \zeta_t \right] \right)^2 \left\| \zeta_t \right\|^2 \right]
        &\leq  \E \left[ R_{t,y}^2 \left\| \zeta_t \right\|^2  \right].
    \end{align*}
    Multiplying the above with $\frac{2}{\mu^2}$ and using \oureqref{eq:first_term2} we further obtain
    \begin{align}
        \frac{2}{\mu^2} \E \left[ \left( R_{t,y} - \E \left[ R_{t,y} \big| \F_{t-1}, \zeta_t \right] \right)^2 \left\| \zeta_t \right\|^2 \right]
        &\leq \frac{2}{\mu^2} \E \left[ R_{t,y}^2 \left\| \zeta_t \right\|^2  \right] \notag \\
        &\leq \frac{\beta^2 \mu^2}{2} d (d+2) (d+4).
         \label{eq:term2}
    \end{align}
     Substituting \oureqref{eq:term1} and \oureqref{eq:term2} into \oureqref{eq:exp_diff} yields
    \begin{align*}
        \E \left[ \left\| \mathbf{q}_t - \overline{\mathbf{q}}_t \right\|^2 \right]
        \leq 
        2 (d+2) \sigma^2
        + \frac{\beta^2 \mu^2}{2} d (d+2) (d+4),
    \end{align*}
    which proves the second result.
    \end{proof}

Similar to our results in stochastic first-order optimization, we consider updating $\boldsymbol{\lambda}_t$ as a game, where in each round $t$, the online learner
\begin{itemize}
\item Plays $\boldsymbol{\lambda}_t \in \reals^d$.
\item After that, receives the loss function $\psi_t(\boldsymbol{\lambda}):=  \left \| \Q{\bw_t}  \right \|^2= \left \|  \mathbf{q}_t +\boldsymbol{\lambda} \odot \mathbf{d}^0_t  \right \|^2 $.
\item Incurs loss $\psi_t(\boldsymbol{\lambda}_t)$.
\end{itemize}
Therefore, below, we denote:
$$ \mathrm{Regret}^{\psi}_T(\boldsymbol{\lambda}^*):= \sum_{t=1}^T \psi_t(\boldsymbol{\lambda}_t) - \sum_{t=1}^T \psi_t(\boldsymbol{\lambda}^*),$$
where $\boldsymbol{\lambda}^* \in \reals^d$ is a comparator.
We reiterate that the case of vector tuning covers the case of scalar tuning as a special case. Specifically, given a scalar tuning parameter $\lambda \in \reals$, a scalar-vector multiplication $\lambda \bv$ can be rewritten as $\lambda \bv = \lambda \mathbf{1} \odot \bv$, where $\mathbf{1} \in \reals^d$ is the all-ones vector.

Similar to the first-order case, the FTL strategy under scalar tuning solves 
the following:
$$   \lambda_t \gets \underset{ \lambda \in \reals }{\arg\min}  \sum_{s=1}^{t-1}  \psi_s(\lambda), \text{ where } \psi_s(\lambda):= \left \|  \mathbf{q}_s + \lambda \mathbf{d}^0_s  \right \|^2,$$
 which gives
\begin{align*}
    \lambda_{t} = 
    \begin{cases}
        - \frac{\sum_{s=1}^{t-1} \left\langle \mathbf{q}_s , \mathbf{d}_s^0   \right\rangle}{\sum_{s=1}^{t-1}  \left\| \mathbf{d}_s^0 \right\|^2} , & \sum_{s=1}^{t-1}  \left\| \mathbf{d}_s^0 \right\|^2 >0, \\
        0,  & \sum_{s=1}^{t-1}  \left\| \mathbf{d}_s^0 \right\|^2=0.
    \end{cases}
\end{align*}
Define the minimizer of the instantaneous loss function $\psi_t(\cdot)$ as
\begin{align*}
    \lambda_{t}^* :=
    \begin{cases}
        - \frac{\langle \mathbf{q}_t , \mathbf{d}_t^0 \rangle}{\left\| \mathbf{d}_t^0 \right\|^2}, & \left\| \mathbf{d}_t^0 \right\|^2>0, \\
        0, & \left\| \mathbf{d}_t^0 \right\|^2 = 0.
    \end{cases}
\end{align*}
Additionally, define
$\Psi_{1:T} \left( \lambda \right) 
    :=  \sum_{t=1}^T \left \| \q_{t,\bw_t}^{\lambda} \right \|^2 -
    \sum_{t=1}^T \left \| \q_{t,\bw_t}^{0} \right \|^2.$
We are now ready to prove Theorem~\ref{th: zeroth-order-scalar}.

\begin{mdframed}[backgroundcolor=black!10,rightline=false,leftline=false,topline=false,bottomline=false]
\noindent
\textbf{Theorem~\ref{th: zeroth-order-scalar}.}
\textit{
Denote
$D_0:= \left( \E \left[ \max_{1 \leq t \leq T} \left| \lambda_{t}^* \right|^4 \right] \right)^{1/4}$, $D_1:= \left( \E \left[ \max_{1 \leq t \leq T} \left\| \mathbf{d}_t^0 \right\|^4 \right]  \right)^{1/4}$,
and $\bar{\Xi}_T:=\frac{1}{T} \sum_{t=1}^T \E\left[ \left \| \nabla  \loss(\bw_{t}) \right \|^2 \right] $. 
Then, under Assumptions~\ref{A1},~\ref{A2},~\ref{A3}, 
Algorithm~\ref{alg:sgfm} with the scalar tuning scheme, $\eta = \min \left\{ \frac{1}{4 \beta (d+2)}, \frac{1}{4 \beta \sqrt{(d+2) T}}   \right\}$, and $\mu=\frac{1}{\beta (d+3) \sqrt{T}}$ satisfies
    \begin{align*}
        \textstyle \bar{\Xi}_T
        &\textstyle  \leq 
        16 \beta \Delta_1 \left( \frac{d+2}{T} + \frac{\sqrt{d+2}}{\sqrt{T}} \right) + \frac{16 d}{(d+3)^2} \left( \frac{d+2}{T^{2}} + \frac{\sqrt{d+2}}{T^{3/2}} \right) + \frac{d+3}{T} \\
        &\textstyle \quad + \frac{4 D_0^2 D_1^2 \left( 1 + \log T \right)}{\sqrt{d+2} \; T^{3/2}} + \frac{\sqrt{d+2}}{\sqrt{T}} \left( \sigma^2 +\frac{ d (d+4) }{4 (d+3)^2 T}  + \frac{\E\left[  \Psi_{1:T} \left( \lambda^* \right) \right]}{2(d+2)T} \right),
    \end{align*}
where
$\E \left[ \Psi_{1:T}(\lambda^*) \right] =
- \frac{\left( \sum_{t=1}^T \E \left[ \left\langle \mathbf{q}_t , \mathbf{d}_t^0   \right\rangle \right] \right)^2 }{\sum_{t=1}^T \E \left[ \left\| \mathbf{d}_t^0 \right\|^2 \right]}$
if \, $\sum_{t=1}^T \E\left[\left\|\mathbf{d}_t^0\right\|^2\right]>0$,
and $\E \left[ \Psi_{1:T}(\lambda^*) \right] =0$ otherwise. In particular, it holds that
\begin{align*}
\textstyle 
-2 (d+2) T \sigma^2
- \frac{d (d+2) (d+4)}{2 (d+3)^2} 
\leq \E[ \Psi_{1:T}(\lambda^*) ] \leq 0.
\end{align*}
}
\end{mdframed}

\begin{proof}
    From Lemma \ref{lem:conditional2} we have
    \begin{align}
        \eta \sum_{t=1}^T \E\left[ \left \| \nabla  \loss(\bw_{t}) \right \|^2 \right]
        \leq
        2 \Delta_1 + 2\mu^2 \beta d 
        +  \frac{\mu^2}{2} \beta^2 (d+3)^3 \eta T
        +  \beta \eta^2  \sum_{t=1}^T
         \E\left[ \left \| \Qt{\bw_t}  \right \|^2 \right]. \label{eq: bound_zo_scalar}
    \end{align}
    It remains to bound $\sum_{t=1}^T \E\left[ \left \| \Qt{\bw_t}  \right \|^2 \right]$. We observe that for any comparator $\lambda^* \in \reals$ we have
    \begin{align}
        &\E\left[
        \sum_{t=1}^T  \left \| \Qt{\bw_t}  \right \|^2 \right] \notag \\
        & =
        \E
        \left[ 
        \sum_{t=1}^T \left \| \Qt{\bw_t}  \right \|^2 
        -
        \sum_{t=1}^T \left \| \q_{t,\bw_t}^{\lambda^*}  \right \|^2 
        \right]
        +  \E\left[ \sum_{t=1}^T \left \| \q_{t,\bw_t}^{\lambda^*} \right \|^2 \right]
        \notag \\ 
        &= 
        \E\left[ \mathrm{Regret}^{\psi}_T(\lambda^*) \right]
        + \sum_{t=1}^T \E\left[ \left \| \q_{t,\bw_t}^{\lambda^*} \right \|^2 \right] \notag \\
        &= 
        \E\left[ \mathrm{Regret}^{\psi}_T(\lambda^*) \right]
        + \sum_{t=1}^T \E\left[ \left \|  \mathbf{q}_t + \lambda^* \mathbf{d}^0_t  \right \|^2 \right] \notag \\
        &= 
        \E\left[ \mathrm{Regret}^{\psi}_T(\lambda^*) \right]
        + \sum_{t=1}^T \E\left[ \left \|  \mathbf{q}_t \right \|^2 \right]
        +  \E\left[ \underbrace{ \sum_{t=1}^T \left \|  \mathbf{q}_t + \lambda^* \mathbf{d}^0_t  \right \|^2 -  \sum_{t=1}^T  \left \|  \mathbf{q}_t \right \|^2 }_{:= \Psi_{1:T} \left( \lambda^* \right) }  \right]
        \label{eq:zo_scalar_1}
    \end{align}
    Furthermore, by Lemma~\ref{lem:exp_q_and_diff_q_and_bar_q} we have
    \begin{align}
        \sum_{t=1}^T \mathbb E\left[  \left\| \mathbf{q_t} \right\|^2 \right] \leq 2 (d+2) \sum_{t=1}^T \E \left[ \left\| \mathbf{g_t} \right\|^2 \right] + 
        \frac{\beta^2 \mu^2}{2} d \left(d+2\right) \left(d+4\right) T.
        \label{eq:zo_scalar_2}
    \end{align}
    Combining \oureqref{eq:zo_scalar_1} with \oureqref{eq:zo_scalar_2}, we obtain
    \begin{align}
        \E\left[\sum_{t=1}^T  \left \| \Qt{\bw_t}  \right \|^2 \right] 
        &\leq 
        \E\left[ \mathrm{Regret}^{\psi}_T(\lambda^*) \right]
        + 2(d+2) \sum_{t=1}^T \mathbb \E \left[ \left\| \mathbf{g_t} \right\|^2 \right] \notag \\
        &\quad + \frac{\mu^2\beta^2}{2} d(d+2)(d+4) T + \E \left[ \Psi_{1:T} \left( \lambda^* \right) \right] \notag \\
        &= \E\left[ \mathrm{Regret}^{\psi}_T(\lambda^*) \right]
        + 2(d+2) \sum_{t=1}^T \mathbb E\left[ \left\| \nabla \mathcal{L} \left( \bw_t \right) \right\|^2 \right] \notag \\
        &\quad + 2(d+2) \sum_{t=1}^T \mathbb E\left[ \left\| \mathbf{g}_t - \nabla \mathcal{L} \left( \bw_t \right) \right\|^2 \right] 
        \notag \\
        & \quad + \frac{\mu^2\beta^2}{2} d(d+2)(d+4) T + \E \left[ \Psi_{1:T} \left( \lambda^* \right) \right] \notag \\
        &\leq \E\left[ \mathrm{Regret}^{\psi}_T(\lambda^*) \right]
        + 2(d+2) \sum_{t=1}^T \mathbb E\left[ \left\| \nabla \mathcal{L} \left( \bw_t \right) \right\|^2 \right] + 2(d+2) \sigma^2 T 
        \notag \\
        & \quad + \frac{\mu^2\beta^2}{2} d(d+2)(d+4) T + \E \left[ \Psi_{1:T} \left( \lambda^* \right) \right], \label{eq: zero-order-grad-bound}
    \end{align}
    where the last inequality follows from Assumption~\ref{A3}. We next bound $\E\left[ \mathrm{Regret}_T^{\psi}\left(\lambda^* \right) \right]$.
    From Lemma \ref{lem: shalev2025online} we have
    \begin{align*}
        \mathrm{Regret}_T^{\psi}\left(\lambda^* \right):= \sum_{t=1}^{T} \psi_t(\lambda_{t}) - \sum_{t=1}^T \psi_t(\lambda^*)
        \leq \sum_{t=1}^{T} \left( \psi_t(\lambda_{t}) - \psi_t(\lambda_{t+1}) \right).
    \end{align*}
    Proceeding as in the proof of Theorem \ref{th: first-order-scalar}, we obtain
    \begin{align}
      \sum_{t=1}^T  \left( \psi_t(\lambda_{t}) - \psi_t(\lambda_{t+1}) \right) \leq \sum_{t=1}^T 8 \left( \max_{1 \leq t \leq T} \left| \lambda_{t}^* \right| \right)^2 r_{t}^0, \label{eq: psi_diff}
    \end{align}
    where
    \begin{align*}
        r_{t}^0 := 
        \begin{cases}
            \frac{\left\| \mathbf{d}_t^0 \right\|^4}{\sum_{s=1}^t \left\| \mathbf{d}_s^0 \right\|^2}, & \sum_{s=1}^t \left\| \mathbf{d}_s^0 \right\|^2>0, \\
            0 , &\sum_{s=1}^t \left\| \mathbf{d}_s^0 \right\|^2 = 0.
        \end{cases}
    \end{align*}
    From the r.h.s.~\oureqref{eq: psi_diff}, applying Lemma \ref{lem: seq} with 
    \begin{align*}
        q_t \gets \left\| \mathbf{d}_t^0 \right\|^2 \quad , \quad Q \gets \max_{1 \leq t \leq T} \left\| \mathbf{d}_t^0 \right\|^2, \quad r_t \gets r_t^0
    \end{align*}
    yields
    \begin{align*}
        \sum_{t=1}^{T} \left( \psi_t(\lambda_{t}) - \psi_t(\lambda_{t+1}) \right) \leq 8 \left( \max_{1 \leq t \leq T} \left| \lambda_{t}^* \right| \right)^2 \left( 1 + \log T \right) \max_{1 \leq t \leq T} \left\| \mathbf{d}_t^0 \right\|^2.
    \end{align*}
    Consequently, 
    \begin{align}
        \E \left[ \mathrm{Regret}_T^{\psi}\left(\lambda^* \right) \right] 
        &\leq 8 \left( 1 + \log T \right) \E \left[ \left( \max_{1 \leq t \leq T} \left| \lambda_{t}^* \right| \right)^2 \max_{1 \leq t \leq T} \left\| \mathbf{d}_t^0 \right\|^2 \right] \notag \\
        &\leq 8 \left( 1 + \log T \right) 
        \left( \E \left[ \max_{1 \leq t \leq T} \left| \lambda_{t}^* \right|^4 \right] \right)^{1/2}
        \left( \E \left[ \max_{1 \leq t \leq T} \left\| \mathbf{d}_t^0 \right\|^4 \right] \right)^{1/2} \notag \\
        &= 8 \left( 1 + \log T \right) D_0^2 D_1^2. \label{eq: zo-scalar-reg-bound}
    \end{align}
    Combining \oureqref{eq: zo-scalar-reg-bound}, \oureqref{eq: zero-order-grad-bound}, and \oureqref{eq: bound_zo_scalar} yields
    \begin{align}
         \left(\eta - 2 \beta \left(d+2\right) \eta^2 \right)  \sum_{t=1}^T \E\left[ \left \| \nabla  \loss(\bw_{t}) \right \|^2 \right]
        &\leq
        2 \Delta_1 + 2\mu^2 \beta d 
        +  \frac{\mu^2}{2} \beta^2 (d+3)^3 \eta T \notag \\
        & \quad +  8 \beta \eta^2   \left( 1 + \log T \right) D_0^2 D_1^2
        + 2 \beta \eta^2 (d+2) \sigma^2 T 
        \notag \\
        & \quad + \beta \eta^2 \E \left[ \Psi_{1:T} \left( \lambda^* \right) \right] +  \frac{\mu^2\beta^3}{2}\eta^2 d(d+2)(d+4) T. \label{eq: zo-scalar-intermediate-bound}
    \end{align}
    We continue by analyzing $\E[ \Psi_{1:T}(\lambda^*) ]$. We observe that $\Psi_{1:T}(\lambda)$ can be expressed as
    \begin{align*}
         \Psi_{1:T}(\lambda) = \sum_{t=1}^T \left\| \mathbf{q}_t + \lambda \mathbf{d}_t^0  \right\|^2 -  \sum_{t=1}^T \left\| \mathbf{q}_t  \right\|^2 = 2\lambda \sum_{t=1}^T \left\langle \mathbf{q}_t , \mathbf{d}_t^0   \right\rangle
        +
        \lambda^2
        \sum_{t=1}^T  \left\| \mathbf{d}_t^0 \right\|^2.
    \end{align*}
    Therefore,
    \begin{align*}
         \E \left[ \Psi_{1:T}(\lambda) \right] = 2\lambda \sum_{t=1}^T \E \left[ \left\langle \mathbf{q}_t , \mathbf{d}_t^0   \right\rangle \right]
        +
        \lambda^2
        \sum_{t=1}^T \E \left[ \left\| \mathbf{d}_t^0 \right\|^2 \right].
    \end{align*}
    First, consider the case $\sum_{t=1}^T \E \left[ \left\| \mathbf{d}_t^0 \right\|^2 \right] > 0$. In this case, the coefficient of $\lambda^2$ is strictly positive, hence, $\E \left[ \Psi_{1:T}(\lambda) \right]$ is convex in $\lambda$. By setting
    $\frac{\partial}{\partial \lambda}
        \E \left[ \Psi_{1:T}(\lambda) \right]
        =
        0,$
    we obtain
    \begin{align*}
        \lambda^*
        = - \frac{ \sum_{t=1}^T \E \left[ \left\langle \mathbf{q}_t , \mathbf{d}_t^0   \right\rangle \right]
        }{\sum_{t=1}^T  \E \left[ \left\| \mathbf{d}_t^0 \right\|^2 \right]
        }.
    \end{align*}
    Substituting this expression into $\E \left[ \Psi_{1:T}(\lambda) \right]$ yields
    \begin{align*}
        \E \left[ \Psi_{1:T}(\lambda^*) \right]
        =
        -
        \frac{\left( \sum_{t=1}^T \E \left[ \left\langle \mathbf{q}_t , \mathbf{d}_t^0   \right\rangle \right] \right)^2 }{\sum_{t=1}^T \E \left[ \left\| \mathbf{d}_t^0 \right\|^2 \right]}
        \leq 0.
    \end{align*}
    Next, suppose that $\sum_{t=1}^T \E \left[ \left\| \mathbf{d}_t^0 \right\|^2 \right] = 0$. Since each term in the sum is non-negative, this implies $\mathbf{d}_t^0 = 0$ almost surely, for all $t\in [T]$. Consequently, $\left\langle \mathbf{q}_t , \mathbf{d}_t^0   \right\rangle = 0$ almost surely, for all $t\in [T]$, and thus
    $\E \left[ \Psi_{1:T}(\lambda) \right] = 0$, for all $\lambda \in \mathbb{R}$. In this case, we use the convention $\lambda^* = 0$. Therefore, we define the fixed comparator as the minimizer of $\E \left[ \Psi_{1:T}(\lambda) \right]$, given by
    \begin{align*}
        \lambda^*
        =
        \begin{cases}
        - \frac{ \sum_{t=1}^T \E \left[ \left\langle \mathbf{q}_t , \mathbf{d}_t^0   \right\rangle \right]
        }{\sum_{t=1}^T  \E \left[ \left\| \mathbf{d}_t^0 \right\|^2 \right]
        }, & \sum_{t=1}^T \E \left[ \left\| \mathbf{d}_t^0 \right\|^2 \right]>0,
        \\
        0, & \sum_{t=1}^T \E \left[ \left\| \mathbf{d}_t^0 \right\|^2 \right] = 0.
        \end{cases}
    \end{align*}
    The corresponding value of $\E \left[ \Psi_{1:T}(\lambda^*) \right]$ is
    \begin{align*}
        \E \left[ \Psi_{1:T}(\lambda^*) \right]
        =
        \begin{cases}
        -
        \frac{\left( \sum_{t=1}^T \E \left[ \left\langle \mathbf{q}_t , \mathbf{d}_t^0   \right\rangle \right] \right)^2 }{\sum_{t=1}^T \E \left[ \left\| \mathbf{d}_t^0 \right\|^2 \right]}, & \sum_{t=1}^T \E \left[ \left\| \mathbf{d}_t^0 \right\|^2 \right]>0,
        \\
        0, & \sum_{t=1}^T \E \left[ \left\| \mathbf{d}_t^0 \right\|^2 \right] = 0.
        \end{cases}
    \end{align*}
    Hence, we have $\E \left[ \Psi_{1:T}(\lambda^*) \right] \leq 0$. Next, we define
    \begin{align*}
        \overline{\mathbf{q}}_t := 
        \E \left[ \mathbf{q}_t \big| \F_{t-1}, \zeta_t \right] = \frac{\mathcal{L}\left( \mathbf{w}_t + \mu \zeta_t \right) - \mathcal{L}\left( \mathbf{w}_t\right)}{\mu} \zeta_t.
    \end{align*}
    Since $\E \left[ \mathbf{d}_t^0 \big| \F_{t-1}, \zeta_t  \right] = 0$, we have
    \begin{align*}
        \E \left[ \langle  \mathbf{q}_t , \mathbf{d}_t^0 \rangle \right] = \E \left[ \langle  \mathbf{q}_t - \overline{\mathbf{q}}_t  , \mathbf{d}_t^0 \rangle  \right] .
    \end{align*}
    By Cauchy-Schwarz
    \begin{align*}
        \left(\sum_{t=1}^T \E \left[\left\langle \mathbf{q}_t , \mathbf{d}_t^0   \right\rangle\right] \right)^2 
        = \left(\sum_{t=1}^T \E \left[\left\langle \mathbf{q}_t  - \overline{\mathbf{q}}_t  , \mathbf{d}_t^0   \right\rangle\right] \right)^2 
        \leq  \left( \sum_{t=1}^T \E \left[ \left\| \mathbf{q}_t - \overline{\mathbf{q}}_t  \right\|^2 \right] \right) \left( \sum_{t=1}^T \E \left[ \left\| \mathbf{d}_t^0  \right\|^2 \right] \right).
    \end{align*}
    Therefore, 
    \begin{align*}
        \E[ \Psi_{1:T}(\lambda^*) ] \geq - \sum_{t=1}^T \E \left[ \left\| \mathbf{q}_t - \overline{\mathbf{q}}_t  \right\|^2 \right].
    \end{align*}
    Additionally, by Lemma~\ref{lem:exp_q_and_diff_q_and_bar_q}, we have
    \begin{align*}
        \sum_{t=1}^T \E \left[ \left\| \mathbf{q}_t - \overline{\mathbf{q}}_t \right\|^2 \right]
        \leq 
        2 (d+2) T \sigma^2
        + \frac{\beta^2 \mu^2}{2} d (d+2) (d+4) T.
    \end{align*}
    Consequently, we obtain the following bounds for $\E[ \Psi_{1:T}(\lambda^*) ]$,
    \begin{align}
        -2 (d+2) T \sigma^2
        - \frac{\beta^2 \mu^2}{2} d (d+2) (d+4) T 
        \leq \E[ \Psi_{1:T}(\lambda^*) ] \leq 0.
        \label{eq:bounds_exp_psi}
    \end{align}
    Next, set $\eta = \min \Bigg\{ \frac{1}{4 \beta (d+2)}, \frac{1}{4 \beta \sqrt{(d+2) T}}   \Bigg\}$. Then, 
    \begin{align*}
        2 \beta \left(d+2\right) \eta \leq \frac{1}{2},
    \end{align*}
    and hence
    \begin{align*}
        \left(\eta - 2 \beta \left(d+2\right) \eta^2 \right)  \geq \frac{\eta}{2}.
    \end{align*}
    Let $M := \max \Big\{ d+2,  \sqrt{(d+2) T}  \Big\}$. Using the choice of $\eta$ in \oureqref{eq: zo-scalar-intermediate-bound} and rearranging, we obtain
    \begin{align}
        \frac{1}{T} \sum_{t=1}^T \E\left[ \left \| \nabla  \loss(\bw_{t}) \right \|^2 \right] 
        &\leq \left( 16 \beta \Delta_1
        + 16 \mu^2 \beta^2 d \right) \frac{M}{T}
        + \mu^2 \beta^2 (d+3)^3 + \frac{4 D_0^2 D_1^2 \left( 1 + \log T \right)}{M T}  \notag \\
        &\quad + \frac{d+2}{M} \left( \sigma^2 + \frac{1}{2(d+2)T} \E\left[  \Psi_{1:T} \left( \lambda^* \right) \right] +  \frac{\mu^2 \beta^2 d (d+4) }{4} \right).
        \label{eq: zo-scalar-intermediate-bound2}
    \end{align}
    Using \oureqref{eq:bounds_exp_psi} we have
    \begin{align*}
         0 \leq \sigma^2 + \frac{1}{2(d+2)T} \E\left[  \Psi_{1:T} \left( \lambda^* \right) \right] +  \frac{\mu^2 \beta^2 d (d+4) }{4} 
         \leq \sigma^2 +  \frac{\mu^2 \beta^2 d (d+4) }{4},
    \end{align*}
    and thus the last term in \oureqref{eq: zo-scalar-intermediate-bound2} is nonnegative. Using the fact that
    \begin{align*}
        \sqrt{(d+2) T} \leq M \leq d+2 + \sqrt{(d+2) T},
    \end{align*}
    the bound \oureqref{eq: zo-scalar-intermediate-bound2} further implies
    \begin{align*}
        \frac{1}{T} \sum_{t=1}^T \E\left[ \left \| \nabla  \loss(\bw_{t}) \right \|^2 \right] 
        &\leq \left( 16 \beta \Delta_1 + 16 \mu^2 \beta^2 d \right) \left( \frac{d+2}{T} + \frac{\sqrt{d+2}}{\sqrt{T}} \right) \\
        & \quad + \mu^2 \beta^2 (d+3)^3 
        + \frac{4 D_0^2 D_1^2 \left( 1 + \log T \right)}{\sqrt{d+2} \; T^{3/2}} \\
        & \quad + \frac{\sqrt{d+2}}{\sqrt{T}} \left(
       \sigma^2
        + \frac{\E[\Psi_{1:T}(\lambda^*)]}{2(d+2)T}
        + \frac{\mu^2\beta^2d(d+4)}{4} \right).
    \end{align*}
    Furthermore, we can choose $\mu = \frac{1}{\beta (d+3) \sqrt{T}}$, for which we obtain
    \begin{align*}
        \frac{1}{T} \sum_{t=1}^T \E\left[ \left \| \nabla  \loss(\bw_{t}) \right \|^2 \right] 
        &\leq 16 \beta \Delta_1 \left( \frac{d+2}{T} + \frac{\sqrt{d+2}}{\sqrt{T}} \right) \\
        &\quad + \frac{16 d}{(d+3)^2} \left( \frac{d+2}{T^{2}} + \frac{\sqrt{d+2}}{T^{3/2}} \right) \\
        & \quad + \frac{d+3}{T} + \frac{4 D_0^2 D_1^2 \left( 1 + \log T \right)}{\sqrt{d+2} \; T^{3/2}} \\
        &\quad + \frac{\sqrt{d+2}}{\sqrt{T}} \left( \sigma^2 + \frac{\E\left[  \Psi_{1:T} \left( \lambda^* \right) \right]}{2(d+2)T} +\frac{ d (d+4) }{4 (d+3)^2 T}  \right).
    \end{align*}
    Moreover, substituting the choice of $\mu$ in \oureqref{eq:bounds_exp_psi}, we obtain
    \begin{align*}
        -2 (d+2) T \sigma^2
        - \frac{d (d+2) (d+4)}{2 (d+3)^2} 
        \leq \E[ \Psi_{1:T}(\lambda^*) ] \leq 0.
    \end{align*}
\end{proof}

We now turn to vector tuning below. In vector tuning, we also adopt FTL, i.e., initialize $\boldsymbol{\lambda}_1 \gets \boldsymbol{0}$ and for all $t>1$, play
\begin{align}
    \boldsymbol{\lambda}_t \gets \underset{ \boldsymbol{\lambda} \in \reals^d }{\arg\min}  \sum_{s=1}^{t-1}  \psi_s(\boldsymbol{\lambda}), \text{ where } \psi_s(\boldsymbol{\lambda}):= \left\| \mathbf{q}_s + \boldsymbol{\lambda} \odot \mathbf{d}_s^0  \right\|^2. 
\end{align}
Because the loss function is separable, we have for all $i \in [d]$:
\begin{align*}
    \boldsymbol{\lambda}_t [i] \gets \underset{ \lambda \in \reals }{\arg\min} \sum_{s=1}^{t-1} \psi_t^{(i)} \left( \lambda \right) , \; \text{where } \psi_t^{(i)} \left( \lambda \right) := \left( \mathbf{q}_t[i] + \lambda \mathbf{d}_t^0[i] \right)^2.
\end{align*}
Similar to the first-order case, the FTL strategy under vector tuning gives
for all $i \in [d]$
\begin{align*}
    \boldsymbol{\lambda}_t [i] =
    \begin{cases}
        - \frac{\sum_{s=1}^{t-1}\mathbf{q}_s[i] \mathbf{d}_s^0[i]}{\sum_{s=1}^{t-1} \mathbf{d}_s^0[i]^2} , & \sum_{s=1}^{t-1} \mathbf{d}_s^0[i]^2 > 0, \\
        0, &\sum_{s=1}^{t-1} \mathbf{d}_s^0[i]^2 = 0,
    \end{cases}
\end{align*}
which is \oureqref{vector-tuning-first-order-zoo}. In the following, we denote
\begin{align}
    \mathrm{Regret}_T^{\psi}(\boldsymbol{\lambda}^*)
    &:= \sum_{t=1}^{T} \psi_t(\boldsymbol{\lambda}_t) - \sum_{t=1}^T \psi_t(\boldsymbol{\lambda}^*). \label{eq: def_regret_lam_vec_zoo}
\end{align}
For every $i \in [d]$, define the minimizer of the instantaneous loss function $\psi_t^{(i)}(\cdot)$ as
\begin{align}
    \boldsymbol{\lambda}_t^*[i] :=
    \begin{cases}
        - \frac{\mathbf{q}_t[i]}{\mathbf{d}_t^0[i]}, &  \mathbf{d}_t^0[i] \neq 0, \\
        0, & \mathbf{d}_t^0[i] = 0.
    \end{cases}
\end{align}
Additionally, define
$\Psi_{1:T} \left( \boldsymbol{\lambda} \right) 
    :=  \sum_{t=1}^T \left \| \q_{t,\bw_t}^{\boldsymbol{\lambda}} \right \|^2 -
    \sum_{t=1}^T \left \| \q_{t,\bw_t}^{\boldsymbol{0}} \right \|^2.$
We are now ready to prove Theorem \ref{th: zeroth-order-vector}.

\begin{mdframed}[backgroundcolor=black!10,rightline=false,leftline=false,topline=false,bottomline=false]
\vspace{-2mm}
\begin{theorem} \label{th: zeroth-order-vector}
Denote
$\tilde{D}_0:= \left( \E \left[ \max_{t,i} \left| \boldsymbol{\lambda}_t^* [i] \right|^4 \right] \right)^{1/4}$, $\tilde{D}_1:= \left( \E \left[ \left( \sum_{i=1}^d \max_{t} \mathbf{d}_t^0 [i]^2 \right)^2 \right] \right)^{1/4}$,
and $\bar{\Xi}_T:=\frac{1}{T} \sum_{t=1}^T \E\left[ \left \| \nabla  \loss(\bw_{t}) \right \|^2 \right] $. 
Then, under Assumptions~\ref{A1},~\ref{A2},~\ref{A3}, 
Algorithm~\ref{alg:sgfm} with the vector tuning scheme, $\eta = \min \left\{ \frac{1}{4 \beta (d+2)}, \frac{1}{4 \beta \sqrt{(d+2) T}}   \right\}$, and $\mu=\frac{1}{\beta (d+3) \sqrt{T}}$ satisfies
    \begin{align*}
        \textstyle \bar{\Xi}_T
        &\textstyle  \leq 
        16 \beta \Delta_1 \left( \frac{d+2}{T} + \frac{\sqrt{d+2}}{\sqrt{T}} \right) + \frac{16 d}{(d+3)^2} \left( \frac{d+2}{T^{2}} + \frac{\sqrt{d+2}}{T^{3/2}} \right) + \frac{d+3}{T} \\
        &\textstyle \quad + \frac{4 D_0^2 D_1^2 \left( 1 + \log T \right)}{\sqrt{d+2} \; T^{3/2}} + \frac{\sqrt{d+2}}{\sqrt{T}} \left( \sigma^2 +\frac{ d (d+4) }{4 (d+3)^2 T}  + \frac{\E\left[  \Psi_{1:T} \left( \boldsymbol{\lambda}^* \right) \right]}{2(d+2)T}  \right),
    \end{align*}
where
$ \E \left[ \Psi_{1:T}(\boldsymbol{\lambda}^*) \right]
    =
    - \sum_{ i \in \mathfrak{J} } \frac{\left( \sum_{t=1}^{T} \E \left[ \mathbf{q}_t[i] \mathbf{d}_t^0[i] \right] \right)^2}{\sum_{t=1}^{T} \E \left[ \mathbf{d}_t^0[i]^2 \right] } \leq 0,
$ when the set $\mathfrak{J}:=\{ i \in [d]: \sum_{t=1}^{T} \E \left[ \mathbf{d}_t^0[i]^2 \right] > 0 \}$ is nonempty; otherwise, $\E \left[ \Psi_{1:T}(\boldsymbol{\lambda}^*) \right]=0$. In particular, it holds that
\begin{align*}
\textstyle 
-2 (d+2) T \sigma^2
- \frac{d (d+2) (d+4)}{2 (d+3)^2} 
\leq \E[ \Psi_{1:T}(\boldsymbol{\lambda}^*) ] \leq 0.
\end{align*}
\end{theorem}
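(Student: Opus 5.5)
The proof follows the scalar zeroth-order argument of Theorem~\ref{th: zeroth-order-scalar}, with the FTL regret analysis replaced by the coordinatewise one from Theorem~\ref{th: first-order-vector}.

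\textbf{Step 1: reduction to a regret bound.} Lemma~\ref{lem:unbiasedness} applies because $\boldsymbol{\lambda}_t$ is built from $\mathbf{q}_s,\mathbf{d}^0_s$ with $s<t$, so it is $\mathcal{F}_{t-1}$-measurable. Lemma~\ref{lem:conditional2} then gives \oureqref{eq: zo-scalar-grad-bound} with $\mathcal{Q}^{\boldsymbol{\lambda}_t}_{t,\bw_t}$. For any comparator $\boldsymbol{\lambda}^*\in\reals^d$ I decompose exactly as in \oureqref{eq:zo_scalar_1}:
\[
\E\Big[\sum_t \|\mathcal{Q}^{\boldsymbol{\lambda}_t}_{t,\bw_t}\|^2\Big]=\E[\mathrm{Regret}^\psi_T(\boldsymbol{\lambda}^*)]+\sum_t\E\|\mathbf{q}_t\|^2+\E[\Psi_{1:T}(\boldsymbol{\lambda}^*)].
\]
I bound $\sum_t\E\|\mathbf{q}_t\|^2$ by the first part of Lemma~\ref{lem:exp_q_and_diff_q_and_bar_q}, and split $\E\|\mathbf{g}_t\|^2\le \E\|\nabla\loss(\bw_t)\|^2+\sigma^2$ using Assumption~\ref{A3}. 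This reproduces \oureqref{eq: zero-order-grad-bound} verbatim, with $\lambda^*$ replaced by $\boldsymbol{\lambda}^*$.

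\textbf{Step 2: coordinatewise regret.} Since $\psi_t(\boldsymbol{\lambda})=\sum_i\psi^{(i)}_t(\boldsymbol{\lambda}[i])$, the regret splits over coordinates as in \oureqref{eq: overall_reg_lam_vec1}. For each coordinate $i$ I apply Lemma~\ref{lem: shalev2025online} and then the same convex-combination and contraction computation as in \oureqref{eq: lam_diff_vec}, with $\mathbf{g},\mathbf{d}$ replaced by $\mathbf{q},\mathbf{d}^0$; the algebra does not depend on whether the inputs are first- or zeroth-order. This yields
\[
\psi^{(i)}_t(\boldsymbol{\lambda}_t[i])-\psi^{(i)}_t(\boldsymbol{\lambda}_{t+1}[i])\le 8\max_{t,i}|\boldsymbol{\lambda}^*_t[i]|^2\,u^0_{t,i},
\]
where $u^0_{t,i}$ is the analogue of $u_{t,i}$ built from $\mathbf{d}^0$. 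Lemma~\ref{lem: seq} with $q_t=\mathbf{d}^0_t[i]^2$ bounds $\sum_t u^0_{t,i}$ by $(1+\log T)\max_t\mathbf{d}^0_t[i]^2$. Summing over $i$ and applying Cauchy--Schwarz in expectation gives
\[
\E[\mathrm{Regret}^\psi_T]\le 8(1+\log T)\tilde D_0^2\tilde D_1^2 .
\]
The displayed statement writes $D_0^2D_1^2$; I would prove it with $\tilde D_0,\tilde D_1$, which is clearly what is intended.

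\textbf{Step 3: comparator and lower bound.} $\E[\Psi_{1:T}(\boldsymbol{\lambda})]$ is a separable quadratic in $\boldsymbol{\lambda}$. I take $\boldsymbol{\lambda}^*[i]=-\sum_t\E[\mathbf{q}_t[i]\mathbf{d}^0_t[i]]/\sum_t\E[\mathbf{d}^0_t[i]^2]$ for $i\in\mathfrak J$ and $0$ otherwise. This gives the stated closed form and shows $\E[\Psi_{1:T}(\boldsymbol{\lambda}^*)]\le0$. For the lower bound I use $\E[\mathbf{d}^0_t\mid\mathcal F_{t-1},\zeta_t]=0$, which holds because the ML-labeled and true-labeled covariates have the same marginal and share the same $\zeta_t$. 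Hence $\E[\mathbf{q}_t[i]\mathbf{d}^0_t[i]]=\E[(\mathbf{q}_t-\overline{\mathbf{q}}_t)[i]\mathbf{d}^0_t[i]]$. Cauchy--Schwarz per coordinate, as in \oureqref{e41}, followed by summing over $i\in\mathfrak J$, gives $-\E[\Psi_{1:T}(\boldsymbol{\lambda}^*)]\le\sum_t\E\|\mathbf{q}_t-\overline{\mathbf{q}}_t\|^2$. The second part of Lemma~\ref{lem:exp_q_and_diff_q_and_bar_q} bounds the right-hand side by $2(d+2)T\sigma^2+\tfrac{\beta^2\mu^2}{2}d(d+2)(d+4)T$. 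Substituting $\mu$ then gives the stated interval.

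\textbf{Step 4: constants.} I combine Steps 1--3 into the analogue of \oureqref{eq: zo-scalar-intermediate-bound} and choose $\eta$ so that $\eta-2\beta(d+2)\eta^2\ge\eta/2$. I then use $M=\max\{d+2,\sqrt{(d+2)T}\}$ and $\mu=1/(\beta(d+3)\sqrt T)$ exactly as before; the arithmetic is identical. The one point needing care is that the bracketed term multiplied by $(d+2)/M$ must be nonnegative before $M$ is replaced by $\sqrt{(d+2)T}$, and the Step 3 lower bound supplies this. I do not expect a real obstacle: Step 2 requires checking that the coordinatewise regret argument transfers to the zeroth-order quantities, and it transfers verbatim.
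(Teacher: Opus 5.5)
Your proposal is correct and follows the paper's proof essentially step for step: the reduction via Lemma~\ref{lem:conditional2} and the regret/comparator decomposition, the coordinatewise Be-the-Leader plus Lemma~\ref{lem: seq} regret bound, the per-coordinate Cauchy--Schwarz lower bound on $\E[\Psi_{1:T}(\boldsymbol{\lambda}^*)]$ via $\overline{\mathbf{q}}_t$, and the choice of $\eta$, $M$, and $\mu$ with the nonnegativity check are all the same. Your reading of the $D_0^2D_1^2$ term as a typo for $\tilde{D}_0^2\tilde{D}_1^2$ also matches what the paper's proof actually derives.
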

\end{mdframed}

\begin{proof}
    From Lemma \ref{lem:conditional2} we have
    \begin{align}
         \eta \sum_{t=1}^T \E\left[ \left \| \nabla  \loss(\bw_{t}) \right \|^2 \right]
        \leq
        2 \Delta_1 + 2\mu^2 \beta d 
        +  \frac{\mu^2}{2} \beta^2 (d+3)^3 \eta T
        +  \beta \eta^2  \sum_{t=1}^T
         \E\left[ \left \| \mathcal{Q}_{t,\bw_t}^{\boldsymbol{\lambda}_t}   \right \|^2 \right].
         \label{eq: bound_zo_vector}
    \end{align}
    It remains to bound $\sum_{t=1}^T \E\left[ \left \| \mathcal{Q}_{t,\bw_t}^{\boldsymbol{\lambda}_t}  \right \|^2 \right]$. We observe that for any comparator $\boldsymbol{\lambda}^* \in \reals^d$ we have
    \begin{align}
        &\E\left[
        \sum_{t=1}^T  \left \| \q_{t,\bw_t}^{\boldsymbol{\lambda}_t}  \right \|^2 \right] \notag \\
        & =
        \E
        \left[ 
        \sum_{t=1}^T \left \| \q_{t,\bw_t}^{\boldsymbol{\lambda}_t}  \right \|^2 
        -
        \sum_{t=1}^T \left \| \q_{t,\bw_t}^{\boldsymbol{\lambda}^*}  \right \|^2 
        \right]
        +  \E\left[ \sum_{t=1}^T \left \| \q_{t,\bw_t}^{\boldsymbol{\lambda}^*} \right \|^2 \right]
        \notag \\ 
        &= 
        \E\left[ \mathrm{Regret}^{\psi}_T(\boldsymbol{\lambda}^*) \right]
        + \sum_{t=1}^T \E\left[ \left \| \q_{t,\bw_t}^{\boldsymbol{\lambda}^*} \right \|^2 \right] \notag \\
        &= 
        \E\left[ \mathrm{Regret}^{\psi}_T(\boldsymbol{\lambda}^*) \right]
        + \sum_{t=1}^T \E\left[ \left \|  \mathbf{q}_t + \boldsymbol{\lambda}^* \odot \mathbf{d}^0_t  \right \|^2 \right] \notag \\
        &= 
        \E\left[ \mathrm{Regret}^{\psi}_T(\boldsymbol{\lambda}^*) \right]
        + \sum_{t=1}^T \E\left[ \left \|  \mathbf{q}_t \right \|^2 \right]
        +  \E\left[ \underbrace{ \sum_{t=1}^T \left \|  \mathbf{q}_t + \boldsymbol{\lambda}^* \odot \mathbf{d}^0_t  \right \|^2 -  \sum_{t=1}^T  \left \|  \mathbf{q}_t \right \|^2 }_{:= \Psi_{1:T} \left( \boldsymbol{\lambda}^* \right) }  \right]
        \label{eq:zo_vector_1}
    \end{align}
    Furthermore, by Lemma~\ref{lem:exp_q_and_diff_q_and_bar_q} we have
    \begin{align}
        \sum_{t=1}^T \mathbb E\left[  \left\| \mathbf{q_t} \right\|^2 \right] \leq 2 (d+2) \sum_{t=1}^T \E \left[ \left\| \mathbf{g_t} \right\|^2 \right] + 
        \frac{\beta^2 \mu^2}{2} d \left(d+2\right) \left(d+4\right) T.
        \label{eq:zo_vector_2}
    \end{align}
    Combining \oureqref{eq:zo_vector_1} with \oureqref{eq:zo_vector_2}, we obtain
    \begin{align}
        \E\left[\sum_{t=1}^T  \left \| \q_{t,\bw_t}^{\boldsymbol{\lambda}_t}  \right \|^2 \right] 
        &\leq 
        \E\left[ \mathrm{Regret}^{\psi}_T(\boldsymbol{\lambda}^*) \right]
        + 2(d+2) \sum_{t=1}^T \mathbb \E \left[ \left\| \mathbf{g_t} \right\|^2 \right] \notag \\
        &\quad + \frac{\mu^2\beta^2}{2} d(d+2)(d+4) T + \E \left[ \Psi_{1:T} \left( \boldsymbol{\lambda}^* \right) \right] \notag \\
        &= \E\left[ \mathrm{Regret}^{\psi}_T(\boldsymbol{\lambda}^*) \right]
        + 2(d+2) \sum_{t=1}^T \mathbb E\left[ \left\| \nabla \mathcal{L} \left( \bw_t \right) \right\|^2 \right] \notag \\
        &\quad + 2(d+2) \sum_{t=1}^T \mathbb E\left[ \left\| \mathbf{g}_t - \nabla \mathcal{L} \left( \bw_t \right) \right\|^2 \right] 
        \notag \\
        & \quad + \frac{\mu^2\beta^2}{2} d(d+2)(d+4) T + \E \left[ \Psi_{1:T} \left( \boldsymbol{\lambda}^* \right) \right] \notag \\
        &\leq \E\left[ \mathrm{Regret}^{\psi}_T(\boldsymbol{\lambda}^*) \right]
        + 2(d+2) \sum_{t=1}^T \mathbb E\left[ \left\| \nabla \mathcal{L} \left( \bw_t \right) \right\|^2 \right] + 2(d+2) \sigma^2 T 
        \notag \\
        & \quad + \frac{\mu^2\beta^2}{2} d(d+2)(d+4) T + \E \left[ \Psi_{1:T} \left( \boldsymbol{\lambda}^* \right) \right], \label{eq: zero-order-grad-bound-2}
    \end{align}
    We next bound $\E\left[ \mathrm{Regret}_T^{\psi}\left(\boldsymbol{\lambda}^* \right) \right]$.
    From the definition of regret in \oureqref{eq: def_regret_lam_vec_zoo} we have
    \begin{align}
        \mathrm{Regret}_T^{\psi}(\boldsymbol{\lambda}^*)
        &:= \sum_{t=1}^{T} \psi_t(\boldsymbol{\lambda}_t) - \sum_{t=1}^T \psi_t(\boldsymbol{\lambda}^*) \notag \\
        &= \sum_{t=1}^{T} \sum_{i=1}^{d} \psi_t^{(i)}(\boldsymbol{\lambda}_t[i]) - \sum_{t=1}^{T} \sum_{i=1}^{d} \psi_t^{(i)}(\boldsymbol{\lambda}^*[i]) \notag \\
        &= \sum_{i=1}^{d} \underbrace{ \left( \sum_{t=1}^{T} \psi_t^{(i)}(\boldsymbol{\lambda}_t[i])  - \sum_{t=1}^T \psi_t^{(i)}(\boldsymbol{\lambda}^*[i]) \right)}_{:= \mathrm{Regret}_T^{\psi^{(i)}}(\boldsymbol{\lambda}^*)} . \label{eq: overall_reg_lam_vec}
    \end{align}
    Using Lemma \ref{lem: shalev2025online} we have
    \begin{align}
        \mathrm{Regret}_T^{\psi^{(i)}}(\boldsymbol{\lambda}^*) := \sum_{t=1}^{T} \psi_t^{(i)}(\boldsymbol{\lambda}_t[i])  - \sum_{t=1}^T \psi_t^{(i)}(\boldsymbol{\lambda}^*[i]) \leq 
        \sum_{t=1}^{T} \left( \psi_t^{(i)}(\boldsymbol{\lambda}_t[i]) - \psi_t^{(i)}(\boldsymbol{\lambda}_{t+1}[i]) \right), \label{eq: reg_zo_i_lam_vec}
    \end{align}
    for all $i \in [d]$. Proceeding as in the proof of Theorem~\ref{th: first-order-vector}, we obtain
    \begin{align}
        \psi_t^{(i)} \left( \boldsymbol{\lambda}_{t} [i] \right) - \psi_t^{(i)} \left( \boldsymbol{\lambda}_{t+1} [i] \right)
        \leq 8 \left( \max_{t,i} \left| \boldsymbol{\lambda}_t^* [i] \right| \right)^2 u_{t,i}^0, \label{eq: psi_diff_vector}
    \end{align}
    where 
    \begin{align*}
        u_{t,i}^0:= 
        \begin{cases}
            \frac{\mathbf{d}_t^0[i]^4}{ \sum_{s=1}^t \mathbf{d}_s^0[i]^2 }, & \sum_{s=1}^{t} \mathbf{d}_s^0[i]^2 >0, \\
            0 , &\sum_{s=1}^{t} \mathbf{d}_s^0[i]^2 = 0.
        \end{cases}
    \end{align*}
    Summing \oureqref{eq: psi_diff_vector} for every $t=1, \ldots, T$ and applying Lemma \ref{lem: seq} with 
    \begin{align*}
         q_t \gets \mathbf{d}_t^0 [i]^2 \quad , \quad Q \gets  \max_{t} \mathbf{d}_t^0[i]^2
    \end{align*}
    yields
    \begin{align*}
        \sum_{t=1}^{T} \left(  \psi_t^{(i)}(\boldsymbol{\lambda}_t[i])  -  \psi_t^{(i)}(\boldsymbol{\lambda}^*[i]) \right) \leq 8 \left( 1 + \log T \right) \max_{t,i} \left| \boldsymbol{\lambda}_t^* [i] \right|^2 \max_{t} \mathbf{d}_t^0[i]^2.
    \end{align*}
    Consequently, 
    \begin{align}
        \E \left[ \mathrm{Regret}_T^{\psi}(\boldsymbol{\lambda}^*) \right] 
        &= \sum_{i=1}^d \E \left[ \mathrm{Regret}_T^{\psi^{(i)}}(\boldsymbol{\lambda}^*) \right] \notag \\
        &\leq 8 \left( 1 + \log T \right) \E \left[ \max_{t,i} \left| \boldsymbol{\lambda}_t^* [i] \right|^2 \sum_{i=1}^d\max_{t} \mathbf{d}_t^0 [i]^2\right] \notag \\
        &\leq 8 \left( 1 + \log T \right) 
        \left( \E \left[ \max_{t,i} \left| \boldsymbol{\lambda}_t^* [i] \right|^4 \right] \right)^{1/2}
        \left( \E \left[ \left( \sum_{i=1}^d \max_{t} \mathbf{d}_t^0 [i]^2 \right)^2 \right] \right)^{1/2} \notag \\
        &= 8 \left( 1 + \log T \right) \tilde{D}_0^2 \tilde{D}_1^2. \label{eq: zo-vector-regret-bound}
    \end{align}
    Combining \oureqref{eq: bound_zo_vector}, \oureqref{eq: zero-order-grad-bound-2}, and \oureqref{eq: zo-vector-regret-bound} yields
    \begin{align}
      \left(\eta - 2 \beta \left(d+2\right) \eta^2 \right)  \sum_{t=1}^T \E\left[ \left \| \nabla  \loss(\bw_{t}) \right \|^2 \right]
        &\leq
        2 \Delta_1 + 2\mu^2 \beta d 
        +  \frac{\mu^2}{2} \beta^2 (d+3)^3 \eta T \notag \\
        &\quad +  8 \beta \eta^2  \left( 1 + \log T \right) \tilde{D}_0^2 \tilde{D}_1^2
        + 2 \beta \eta^2 (d+2) T \sigma^2 \notag \\
        &\quad + \beta \eta^2 \E\left[  \Psi_{1:T} \left( \boldsymbol{\lambda}^* \right) \right] + \frac{\mu^2\beta^3}{2} \eta^2 d(d+2)(d+4) T
        \label{eq: zo-vector-intermediate-bound}
    \end{align}
    We continue by analyzing $\E[ \Psi_{1:T}(\boldsymbol{\lambda}^*) ]$. We observe that $\Psi_{1:T}(\boldsymbol{\lambda})$ can be expressed as
    \begin{align*}
        \Psi_{1:T} \left( \boldsymbol{\lambda} \right) 
        &=  \sum_{t=1}^T \left\| \mathbf{q}_t + \boldsymbol{\lambda} \odot \mathbf{d}_t^0  \right\|^2 -  \sum_{t=1}^T \left\| \mathbf{q}_t  \right\|^2 \\
        &= 2 \sum_{t=1}^T \langle \mathbf{q}_t , \boldsymbol{\lambda} \odot \mathbf{d}_t^0 \rangle + \sum_{t=1}^T \left\| \boldsymbol{\lambda} \odot \mathbf{d}_t^0  \right\|^2 \\
        &= 2 \sum_{t=1}^T \sum_{i=1}^d \boldsymbol{\lambda}[i] \mathbf{q}_t [i] \mathbf{d}_t^0 [i] + \sum_{t=1}^T \sum_{i=1}^d \boldsymbol{\lambda}[i]^2 \mathbf{d}_t^0 [i]^2 \\
        &= \sum_{i=1}^d \left( 2 \boldsymbol{\lambda}[i] \sum_{t=1}^T \mathbf{q}_t [i] \mathbf{d}_t^0 [i] +  \boldsymbol{\lambda}[i]^2 \sum_{t=1}^T \mathbf{d}_t^0 [i]^2\right) .
    \end{align*}
    Therefore, 
    \begin{align}
        \E \left[ \Psi_{1:T} \left( \boldsymbol{\lambda} \right) \right] 
        = \sum_{i=1}^d \left( 2 \boldsymbol{\lambda}[i] \sum_{t=1}^T \E \left[ \mathbf{q}_t [i] \mathbf{d}_t^0 [i] \right] +  \boldsymbol{\lambda}[i]^2 \sum_{t=1}^T \E \left[ \mathbf{d}_t^0 [i]^2 \right]\right) .
        \label{expression:psi-1:T}
    \end{align}
    Hence, $\E \left[ \Psi_{1:T} \left( \boldsymbol{\lambda} \right) \right]$ is separable. Similar to the first-order case, we define the comparator $\boldsymbol{\lambda}^*$ as the minimizer of $\E \left[ \Psi_{1:T} \left( \boldsymbol{\lambda} \right) \right]$ given by
    \begin{align*}
        \boldsymbol{\lambda}^*[i] = 
        \begin{cases}
            - \frac{\sum_{t=1}^{T} \E \left[ \mathbf{q}_t[i] \mathbf{d}_t^0[i] \right]}{\sum_{t=1}^{T} \E \left[ \mathbf{d}_t^0[i]^2 \right] } , & \sum_{t=1}^{T} \E \left[ \mathbf{d}_t^0[i]^2 \right] > 0, \\
            0, &\sum_{t=1}^{T} \E \left[  \mathbf{d}_t^0[i]^2 \right] = 0,
        \end{cases}
    \end{align*}
    for all $i \in [d]$. Substituting $\boldsymbol{\lambda}^*$ into the expression for $\E \left[ \Psi_{1:T} \left( \boldsymbol{\lambda} \right) \right]$ \oureqref{expression:psi-1:T}, we obtain
    \begin{align*}
        \E \left[ \Psi_{1:T} \left( \boldsymbol{\lambda}^* \right) \right] =
        - \sum_{\substack{i \in [d] \\ \sum_{t=1}^{T} \E \left[ \mathbf{d}_t^0[i]^2 \right] > 0}} \frac{\left( \sum_{t=1}^{T} \E \left[ \mathbf{q}_t[i] \mathbf{d}_t^0[i] \right] \right)^2}{\sum_{t=1}^{T} \E \left[ \mathbf{d}_t^0[i]^2 \right] } \leq 0.
    \end{align*}
    Next, we define
    \begin{align*}
        \overline{\mathbf{q}}_t := 
        \E \left[ \mathbf{q}_t \big| \F_{t-1}, \zeta_t \right] = \frac{\mathcal{L}\left( \mathbf{w}_t + \mu \zeta_t \right) - \mathcal{L}\left( \mathbf{w}_t\right)}{\mu} \zeta_t.
    \end{align*}
    Since $\E \left[ \mathbf{d}_t^0 \big| \F_{t-1}, \zeta_t  \right] = 0$, we have
    \begin{align*}
        \E \left[ \mathbf{q}_t[i] \mathbf{d}_t^0[i]  \right] = \E \left[  \left( \mathbf{q}_t[i] - \overline{\mathbf{q}}_t[i] \right)  \mathbf{d}_t^0[i]   \right] .
    \end{align*}
    By Cauchy-Schwarz
    \begin{align*}
        \left(\sum_{t=1}^T \E \left[ \mathbf{q}_t[i]  \mathbf{d}_t^0[i]   \right] \right)^2 
        &= \left(\sum_{t=1}^T \E \left[ \left(\mathbf{q}_t[i] -\overline{\mathbf{q}}_t[i] \right) \mathbf{d}_t^0[i]   \right] \right)^2 \\
        &\leq  \left( \sum_{t=1}^T \E \left[ \left( \mathbf{q}_t[i] - \overline{\mathbf{q}}_t[i]  \right)^2 \right] \right) \left( \sum_{t=1}^T \E \left[ \left( \mathbf{d}_t^0[i] \right)^2 \right] \right).
    \end{align*}
    Therefore, 
    \begin{align*}
        \E \left[ \Psi_{1:T} \left( \boldsymbol{\lambda}^* \right) \right] &\geq - \sum_{i=1}^d \sum_{t=1}^T \E \left[ \left( \mathbf{q}_t[i] - \overline{\mathbf{q}}_t[i]  \right)^2 \right]
        = - \sum_{t=1}^T \E \left[ \left\| \mathbf{q}_t - \overline{\mathbf{q}}_t  \right\|^2 \right].
    \end{align*}
    Additionally, by Lemma~\ref{lem:exp_q_and_diff_q_and_bar_q}, we have
    \begin{align*}
        \sum_{t=1}^T \E \left[ \left\| \mathbf{q}_t - \overline{\mathbf{q}}_t \right\|^2 \right]
        \leq 
        2 (d+2) T \sigma^2
        + \frac{\beta^2 \mu^2}{2} d (d+2) (d+4) T.
    \end{align*}
    Consequently, we obtain the following bounds for $\E[ \Psi_{1:T}(\boldsymbol{\lambda}^*) ]$,
    \begin{align}
        -2 (d+2) T \sigma^2
        - \frac{\beta^2 \mu^2}{2} d (d+2) (d+4) T 
        \leq \E[ \Psi_{1:T}(\boldsymbol{\lambda}^*) ] \leq 0.
        \label{eq:bounds_exp_psi_vector}
    \end{align}
    Next, set $\eta = \min \Bigg\{ \frac{1}{4 \beta (d+2)}, \frac{1}{4 \beta \sqrt{(d+2) T}}   \Bigg\}$. Then, 
    \begin{align*}
        2 \beta \left(d+2\right) \eta \leq \frac{1}{2},
    \end{align*}
    and hence
    \begin{align*}
        \left(\eta - 2 \beta \left(d+2\right) \eta^2 \right)  \geq \frac{\eta}{2}.
    \end{align*}
    Let $M := \max \Big\{ d+2,  \sqrt{(d+2) T}  \Big\}$. Using the choice of $\eta$ in \oureqref{eq: zo-vector-intermediate-bound} and rearranging, we obtain
    \begin{align}
        \frac{1}{T} \sum_{t=1}^T \E\left[ \left \| \nabla  \loss(\bw_{t}) \right \|^2 \right] 
        &\leq \left( 16 \beta \Delta_1
        + 16 \mu^2 \beta^2 d \right) \frac{M}{T}
        + \mu^2 \beta^2 (d+3)^3 + \frac{4 \tilde{D}_0^2 \tilde{D}_1^2 \left( 1 + \log T \right)}{M T}  \notag \\
        &\quad + \frac{d+2}{M} \left( \sigma^2 + \frac{1}{2(d+2)T} \E\left[  \Psi_{1:T} \left( \boldsymbol{\lambda}^* \right) \right] +  \frac{\mu^2 \beta^2 d (d+4) }{4} \right).
        \label{eq: zo-vector-intermediate-bound2}
    \end{align}
    Using \oureqref{eq:bounds_exp_psi_vector} we have
    \begin{align*}
         0 \leq \sigma^2 + \frac{1}{2(d+2)T} \E\left[  \Psi_{1:T} \left( \boldsymbol{\lambda}^* \right) \right] +  \frac{\mu^2 \beta^2 d (d+4) }{4} 
         \leq \sigma^2 +  \frac{\mu^2 \beta^2 d (d+4) }{4},
    \end{align*}
    and thus the last term in \oureqref{eq: zo-vector-intermediate-bound2} is nonnegative. Using the fact that
    \begin{align*}
        \sqrt{(d+2) T} \leq M \leq d+2 + \sqrt{(d+2) T},
    \end{align*}
    the bound \oureqref{eq: zo-vector-intermediate-bound2} further implies
    \begin{align*}
        \frac{1}{T} \sum_{t=1}^T \E\left[ \left \| \nabla  \loss(\bw_{t}) \right \|^2 \right] 
        &\leq \left( 16 \beta \Delta_1 + 16 \mu^2 \beta^2 d \right) \left( \frac{d+2}{T} + \frac{\sqrt{d+2}}{\sqrt{T}} \right) \\
        & \quad + \mu^2 \beta^2 (d+3)^3 
        + \frac{4 \tilde{D}_0^2 \tilde{D}_1^2 \left( 1 + \log T \right)}{\sqrt{d+2} \; T^{3/2}} \\
        & \quad + \frac{\sqrt{d+2}}{\sqrt{T}} \left(
       \sigma^2
        + \frac{\E[\Psi_{1:T}(\boldsymbol{\lambda}^*)]}{2(d+2)T}
        + \frac{\mu^2\beta^2d(d+4)}{4} \right).
    \end{align*}
    Furthermore, we can choose $\mu = \frac{1}{\beta (d+3) \sqrt{T}}$, for which we obtain
    \begin{align*}
        \frac{1}{T} \sum_{t=1}^T \E\left[ \left \| \nabla  \loss(\bw_{t}) \right \|^2 \right] 
        &\leq 16 \beta \Delta_1 \left( \frac{d+2}{T} + \frac{\sqrt{d+2}}{\sqrt{T}} \right) \\
        &\quad + \frac{16 d}{(d+3)^2} \left( \frac{d+2}{T^{2}} + \frac{\sqrt{d+2}}{T^{3/2}} \right) \\
        & \quad + \frac{d+3}{T} + \frac{4 \tilde{D}_0^2 \tilde{D}_1^2 \left( 1 + \log T \right)}{\sqrt{d+2} \; T^{3/2}} \\
        &\quad + \frac{\sqrt{d+2}}{\sqrt{T}} \left( \sigma^2 + \frac{\E\left[  \Psi_{1:T} \left( \boldsymbol{\lambda}^* \right) \right]}{2(d+2)T} +\frac{ d (d+4) }{4 (d+3)^2 T}  \right).
    \end{align*}
    Moreover, substituting the choice of $\mu$ in \oureqref{eq:bounds_exp_psi_vector}, we obtain
    \begin{align*}
        -2 (d+2) T \sigma^2
        - \frac{d (d+2) (d+4)}{2 (d+3)^2} 
        \leq \E[ \Psi_{1:T}(\boldsymbol{\lambda}^*) ] \leq 0.
    \end{align*}
\end{proof}

\newpage

\section{Experimental Setup} \label{app:D}

\subsection{Text Classification on SST-2 and SST-5}

\paragraph{Tasks and model.}
We conduct experiments with RoBERTa-large (355M parameters) \citep{liu2019robertarobustlyoptimizedbert} on text classification. Following the experimental setups of \citet{malladi2023fine, zhang2026zeroth}, we consider few-shot fine-tuning with $K=32$ labeled examples per class on three text classification benchmarks: SST-2 \citep{socher2013recursive, wang2018glue}, a binary sentiment classification task, and SST-5 \citep{socher2013recursive}, a five-class fine-grained sentiment classification task. For a detailed description of the downstream tasks and prompts we refer the reader to \citet{malladi2023fine}.

\paragraph{Dataset.}
For each task, we construct the labeled training and validation sets by independently sampling $K=32$ examples per class from the original training split. Thus, SST-2 contains a total of $64$ examples in each of the training and validation sets, whereas SST-5 contains a total of $160$ examples in each set. For the test set, we use the official SST-2 validation split of 872 examples, since the labels for the GLUE test split are not publicly available. For SST-5, we use a class-balanced subset of $1{,}000$ examples drawn from the official labeled test split.

\paragraph{ML-labeler.}
To construct the ML-labeler $f(\cdot)$, we fine-tune a separate RoBERTa-large model for each task using a disjoint subset of the original training data, with 128 labeled examples per class. The labeler is trained for 500 optimization steps using Adam with a batch size of $8$, a learning rate of $1e-5$, and a linear learning-rate decay schedule. The resulting labelers achieve test accuracies of $92.78\%$ on SST-2, and $58.30\%$ on SST-5. All remaining examples from the original SST-2 and SST-5 training sets, excluding those used for the training, validation, test, and labeler-training sets, are treated as unlabeled data.  

\paragraph{Loss functions.}
For each labeled and unlabeled input, we pass the example through the corresponding fixed ML-labeler model and extract its logits. We apply a softmax to the ML-labeler logits to obtain a probability distribution over the task labels, which we cache and use as soft targets throughout fine-tuning. On labeled examples, the supervised loss is computed using the standard cross-entropy loss between the model predictions and the ground-truth class labels. The labeler-based loss terms, computed on both labeled and unlabeled examples, use the KL divergence between the ML-labeler distribution and the predictive distribution of the model being fine-tuned. Since the ML-labeler remains fixed throughout fine-tuning, its predictions can be precomputed and cached, avoiding additional forward passes through the ML-labeler during optimization.

\paragraph{Randomness.}
We report results across five random seeds.
% , $\{13, 21, 42, 87, 100\}$. 
Randomness arises from the sampling of the training and validation sets, the sampling of labeled and unlabeled mini-batches during fine-tuning, the initialization of task-specific parameters, and, for zeroth-order methods, the Gaussian search directions $\zeta_t \sim N(0,I_d)$.

\subsubsection{First-order methods} \label{app:D.1.1}

For the first-order experiments, we fine-tune the model for $1{,}000$ steps and compare four methods: standard SGD using only labeled data, PP-SSL with Online Tuning \citep{shoham2025prediction}, and the proposed
Algorithm~\ref{alg:sgd} with the proposed scalar tuning and vector tuning schemes. For brevity, we refer to the latter two methods as Scalar FTL and Vector FTL when presenting the experimental results below.
We use a labeled batch size of $n=16$ for all methods. For PP-SSL and our methods, we additionally use an unlabeled batch size of $N=32$. Following \citet{malladi2023fine}, we use a linear learning-rate decay schedule for all methods. For each method, we independently tune the initial learning rate over $\{1e-5, 5e-5, 1e-4, 5e-4\}$. Learning rates are selected based on the mean validation accuracy across seeds and are reported in \tabref{tab:sst_fo_selected_lr}.

\paragraph{Results.}
Figures~\ref{fig:sst_fo_acc} and \ref{fig:sst_fo_lam} report the test-accuracy results and corresponding power-tuning parameter trajectories for the first-order methods on SST-2 and SST-5.

On SST-2, PP-SSL, Scalar FTL, and Vector FTL all improve over the SGD baseline, with Scalar FTL achieving slightly higher mean test accuracy over most of the later stages of training. 
%However, the differences among the three ML-labeled methods remain modest relative to the across-seed variability. 
We find that the values of the tuning parameters of our methods generally decrease smoothly over training. In contrast, the PP-SSL tuning parameter exhibits noticeably larger fluctuations throughout training, despite achieving comparable test accuracy.

On SST-5, the differences between the methods are slightly more pronounced. Vector FTL achieves the strongest performance overall, whereas Scalar FTL remains comparable to SGD without a noticeable improvement. This behavior is consistent with the tuning trajectories: the Scalar FTL tuning parameter quickly decays toward zero, effectively reducing the contribution of the ML-labeled correction, while the Vector FTL tuning parameter retains substantial coordinatewise variation for much longer, allowing it to continue exploiting informative directions.

\begin{figure}[h]
    \centering
    \includegraphics[width=1.0\linewidth]{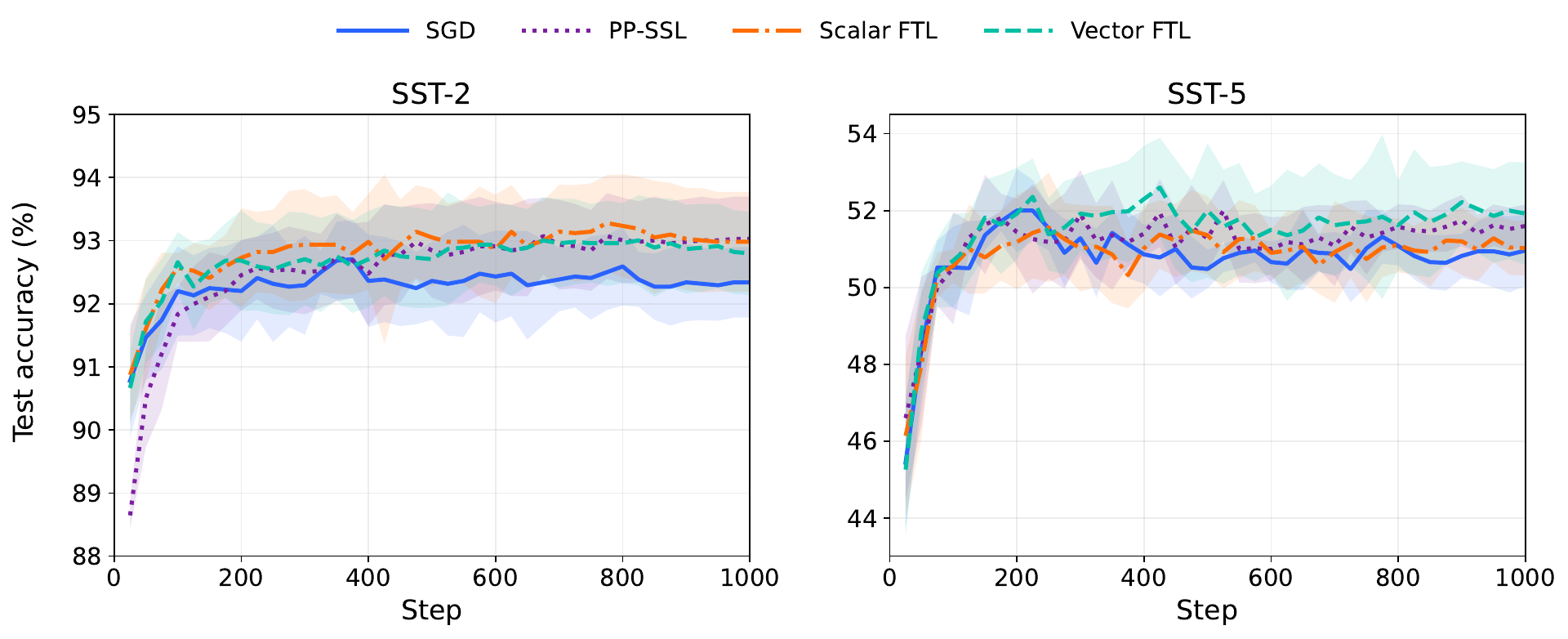}
    \vspace{-5mm}
    \caption{Test accuracy (\%) over training steps for the first-order methods on SST-2 (left) and SST-5 (right). Test accuracy is evaluated every $25$ optimization steps. Solid curves denote the mean across five random seeds, and shaded regions indicate $\pm 1$ standard deviation.}
    \label{fig:sst_fo_acc}
\end{figure}

\begin{figure}[h!]
    \centering
    \includegraphics[width=1.0\linewidth]{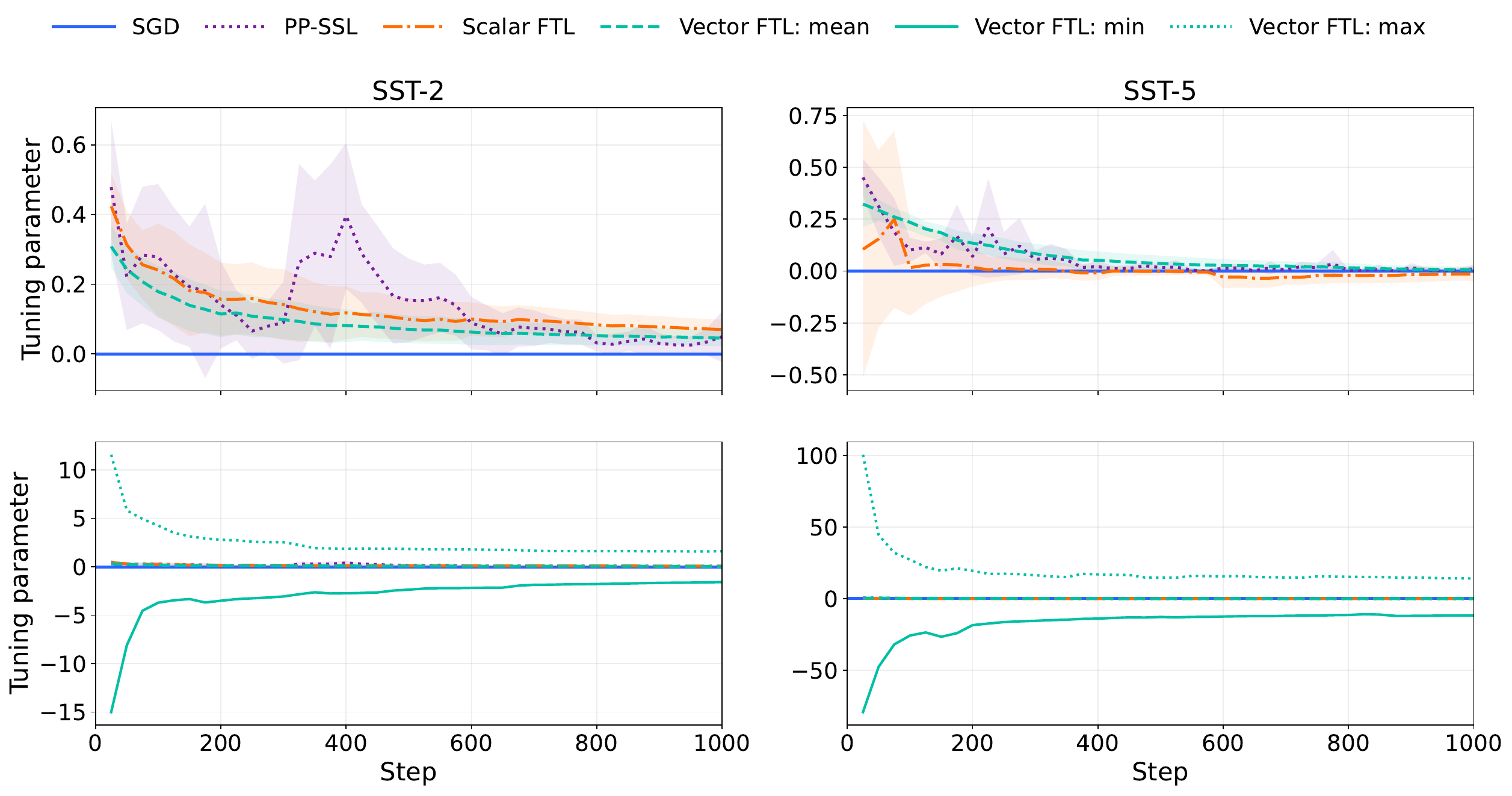}
    \vspace{-5mm}
    \caption{
    Tuning parameter trajectories for the first-order methods on SST-2 (left) and SST-5 (right). Tuning parameters are recorded every $25$ optimization steps. \textbf{Top row}: Power-tuning parameter trajectories over optimization steps. \textbf{Bottom row}: Power-tuning parameter trajectories together with the across-seed means of the minimum and maximum coordinatewise Vector FTL parameters. Curves show the mean across five random seeds, with shaded regions indicating $\pm 1$ standard deviation.
    }
    \label{fig:sst_fo_lam}
\end{figure}

\begin{table}[h!]
\centering
\label{tab:sst_fo_selected_lr_original}
\simpletablelabel{tab:sst_fo_selected_lr}
\vspace{2mm}
\begin{tabular}{ccc}
\toprule
\textbf{Method} & \textbf{SST-2} & \textbf{SST-5} \\
\midrule
SGD        & $1e-4$ & $1e-4$ \\
PP-SSL     & $5e-5$ & $1e-4$ \\
Scalar FTL & $1e-4$ & $1e-4$ \\
Vector FTL & $1e-4$ & $1e-4$ \\
\bottomrule
\end{tabular}
\caption{Learning rates selected using mean validation accuracy for each first-order method on SST-2 and SST-5.}
\end{table}

\newpage

\subsubsection{Zeroth-order methods} \label{app:D.1.2}

For the zeroth-order experiments, we fine-tune the model for $10{,}000$ steps to account for the higher variance of the zeroth-order gradient estimates. Despite the larger number of optimization steps, we note that zeroth-order updates can be more computationally efficient per step, as they require only forward passes and avoid backpropagation. We compare three methods: standard 
stochastic gradient-free method without using ML-labeled data as well as Algorithm~\ref{alg:sgfm} with scalar tuning and vector tuning, which we denote as ``ZO Scalar FTL" and ``ZO Vector FTL" respectively in the following.
We use a labeled batch size of $n=32$ for all methods. For Algorithm~\ref{alg:sgfm}, we additionally use an unlabeled batch size of $N=64$. We set the perturbation parameter to $\mu = 1e-3$. For each method, we independently tune the learning rate over $\{5e-7, 1e-6, 5e-6\}$ and use a constant learning rate throughout fine-tuning. Learning rates are selected based on the mean validation accuracy across seeds and are reported in \tabref{tab:sst_zo_selected_lr}.

\paragraph{Results.}
Figures~\ref{fig:sst_zo_acc} and \ref{fig:sst_zo_lam} report the test-accuracy results and corresponding tuning parameter trajectories for the zeroth-order methods on SST-2 and SST-5. On both SST-2 and SST-5, our methods consistently improve over the ZO baseline after the early stages of training, as the tuning parameters become more stable after aggregating information over more optimization steps. ZO Scalar FTL and ZO Vector FTL perform similarly throughout most of training. The corresponding tuning trajectories are smooth and closely aligned for the scalar and the mean of vector parameters. At the same time, each entry of $\boldsymbol{\lambda}_t$ in vector tuning retains a wider range, indicating that different coordinates continue to place different weights on the ML-labeled correction even when the average tuning parameter evolves similarly to the scalar one.

\begin{figure}[h]
    \centering
    \includegraphics[width=1.0\linewidth]{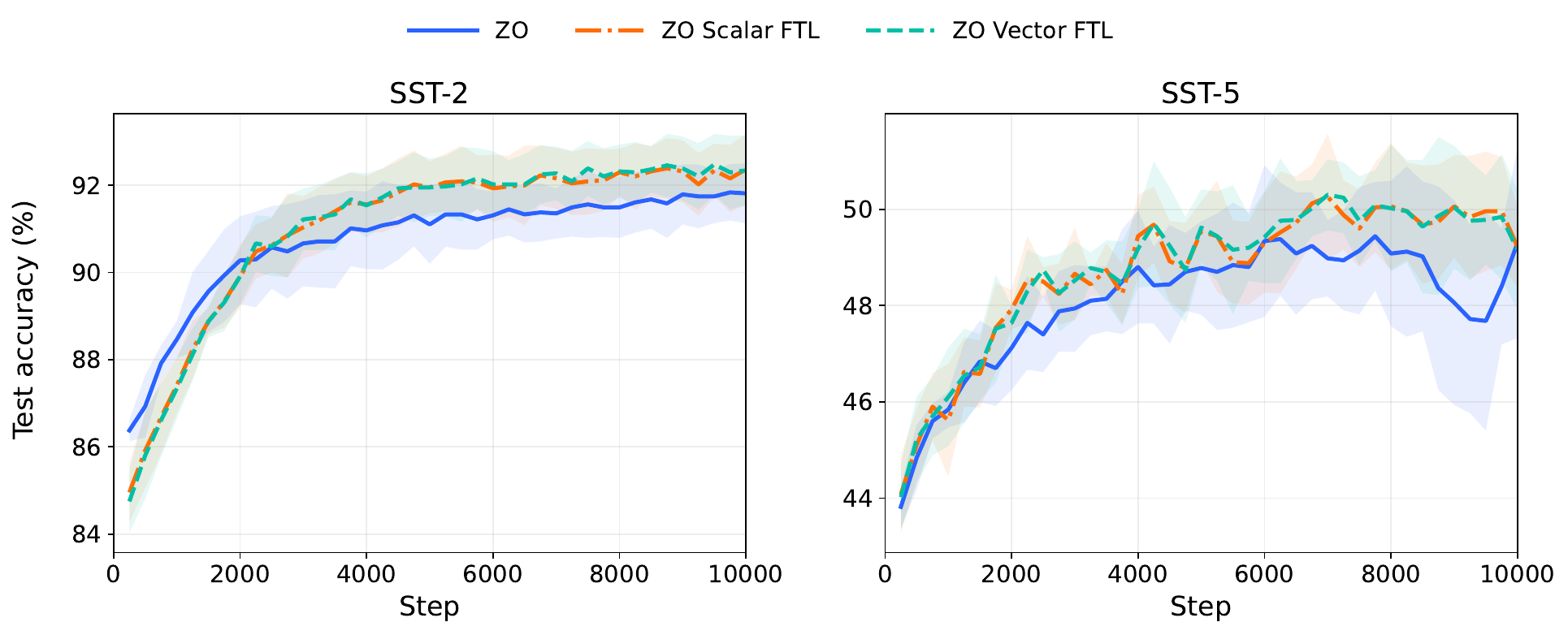}
    \caption{Test accuracy (\%) over training steps for the zeroth-order methods on SST-2 (left) and SST-5 (right). Test accuracy is evaluated every $250$ optimization steps. Solid curves denote the mean across five random seeds, and shaded regions indicate $\pm 1$ standard deviation.}
    \label{fig:sst_zo_acc}
\end{figure}

\begin{figure}[h]
    \centering
    \includegraphics[width=1.0\linewidth]{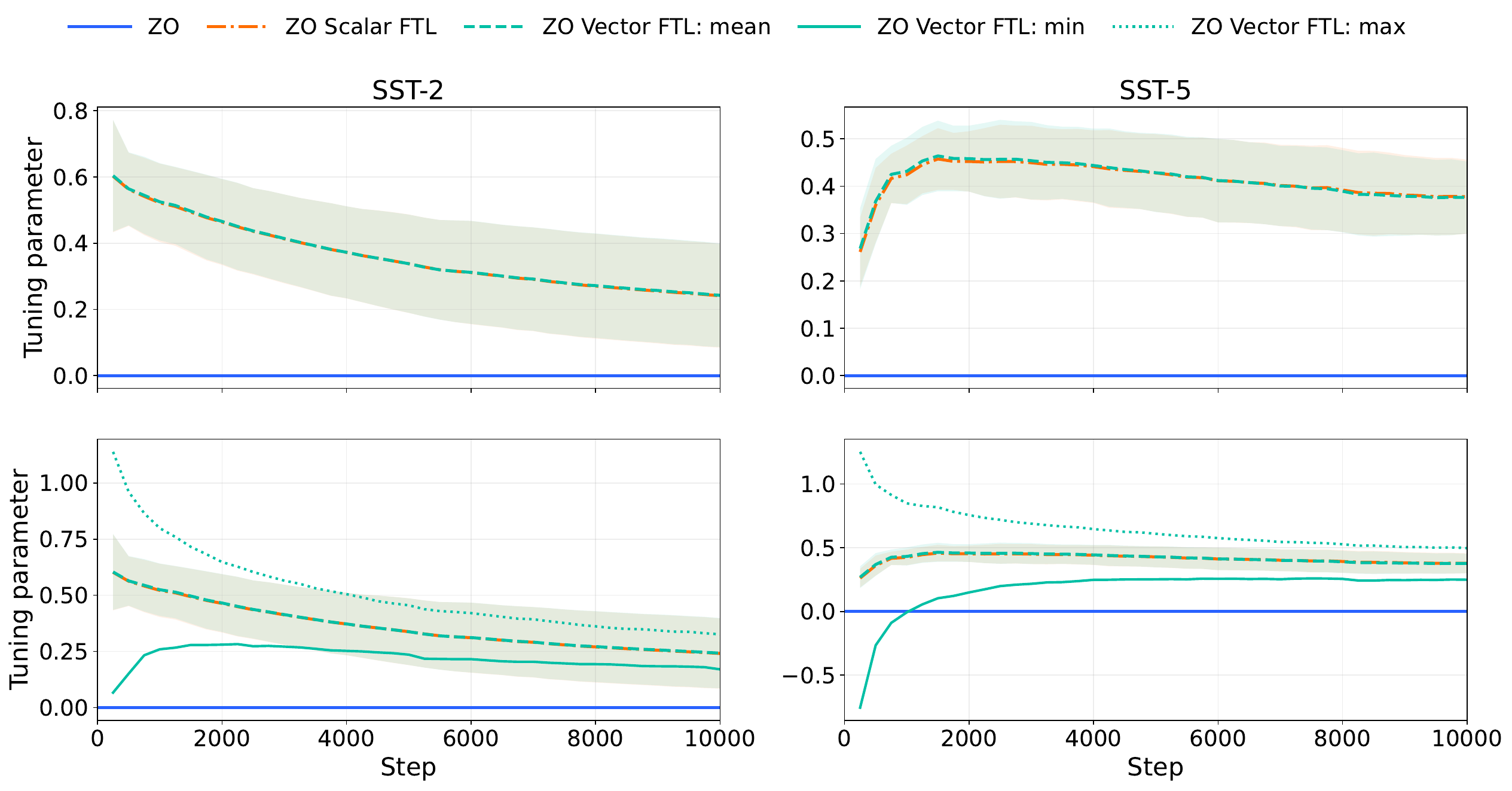}
    \vspace{-5mm}
    \caption{
    Tuning parameter trajectories for the zeroth-order methods on SST-2 (left) and SST-5 (right). Tuning parameters are recorded every $250$ optimization steps. \textbf{Top row}: Power-tuning parameter trajectories over optimization steps. \textbf{Bottom row}: Power-tuning parameter trajectories together with the across-seed means of the minimum and maximum coordinatewise ZO Vector FTL parameters. Curves show the mean across five random seeds, with shaded regions indicating $\pm 1$ standard deviation.
    }
    \label{fig:sst_zo_lam}
\end{figure}

\begin{table}[h]
\centering
\label{tab:sst_zo_selected_lr_original}
\simpletablelabel{tab:sst_zo_selected_lr}
\vspace{2mm}
\begin{tabular}{ccc}
\toprule
\textbf{Method} & \textbf{SST-2} & \textbf{SST-5} \\
\midrule
ZO            & $1e-6$ & $1e-6$ \\
ZO Scalar FTL & $1e-6$ & $1e-6$ \\
ZO Vector FTL & $1e-6$ & $1e-6$ \\
\bottomrule
\end{tabular}
\caption{Learning rates selected using mean validation accuracy for each zeroth-order method on SST-2 and SST-5.}
\end{table}

\subsection{Image Classification on CIFAR-10}

\paragraph{Task and model.} 
We conduct experiments on image classification using CIFAR-10 \citep{krizhevsky2009learning}. We train a linear classification layer on top of a fixed ResNet-50 \citep{he2016deep} pretrained on ImageNet \citep{deng2009imagenet}. CIFAR-10 contains $50{,}000$ training images and $10{,}000$ test images distributed uniformly across 10 classes. Each image is resized and normalized using the preprocessing associated with the ImageNet-pretrained ResNet-50. We remove the final classification layer of the network and use the resulting $2{,}048$-dimensional penultimate-layer representation as input to a linear classifier with \(10\) outputs. The ResNet-50 feature extractor remains fixed throughout all labeler- and model-training experiments.

\paragraph{Dataset.}
We partition the original CIFAR-10 training set into four mutually disjoint, class-balanced subsets. We use $5{,}000$ examples to train the labeler model, $4{,}500$ examples as the true-labeled training set, $5{,}000$ examples as the validation set, and the remaining $35{,}500$ examples as unlabeled data. The official CIFAR-10 test split of $10{,}000$ images is used for final evaluation. The same data partition is used for all methods and random seeds. For the labeled, unlabeled, and labeler-training subsets, we apply a random resized crop to $224\times224$ pixels and a random horizontal flip before extracting the ResNet-50 features. Validation and test images are resized to $256\times256$ pixels and center-cropped to $224\times224$ pixels. Because the extracted representations are cached, the training augmentations are sampled once during feature extraction.

\paragraph{ML-labeler.}
To construct the ML-labeler model $f(\cdot)$, we train a linear classifier on top of the fixed $2{,}048$-dimensional features extracted by the ImageNet-pretrained ResNet-50. The classifier maps the feature representation to the $10$ CIFAR-10 classes and is trained exclusively on the disjoint labeler-training set. We train the labeler for 50 epochs using AdamW with a learning rate of 1e-3, weight decay $0.01$, and batch size 256. The resulting labeler achieves a test accuracy of $86.47\%$ on CIFAR-10 and is subsequently kept fixed and shared across all experimental configurations.

\paragraph{Loss functions.}
For every labeled and unlabeled feature vector, we pass the representation through the fixed labeler to obtain its logits. A softmax converts these logits into a probability distribution over the 10 CIFAR-10 classes. On labeled examples, the supervised loss is the standard cross-entropy loss between the underlying model's prediction and the ground-truth label. The labeler-based losses on the labeled and unlabeled examples use the KL divergence between the labeler distribution and the predictive distribution of the model being trained.

\paragraph{Randomness.}
We report results across five random seeds.
% , $\{13,21,42,87,100\}$. 
Randomness across seeds arises from the initialization of the classification layer and the ordering of the labeled and unlabeled mini-batches. For zeroth-order methods, it additionally arises from the Gaussian search directions $\zeta_t\sim\mathcal{N}(0,I_d)$, with a new direction sampled at every optimization step.

\subsubsection{First-order methods} \label{app:D.2.1}

For the first-order experiments, we train each model for $100$ epochs and compare four methods: standard SGD using only labeled examples, PP-SSL with Online Tuning \citep{shoham2025prediction}, and our method with scalar and vector-valued tuning parameters (Algorithm~\ref{alg:sgd}). We refer to ours as Scalar FTL and Vector FTL, respectively, when presenting the experimental results below for brevity.
We tune the learning rate independently for each method and batch-size configuration over $\{5e-3,\;1e-2,\;5e-2,\;1e-1\}$ and use a constant learning rate throughout training. For each method and batch-size configuration, the learning rate is selected based on the mean validation accuracy across five random seeds. The selected learning rates are reported in Table~\ref{tab:cifar10_fo_selected_lr}. To study the effect of labeled and unlabeled batch sizes, we consider several $(n,N)$ configurations, with $n$ and $N$ denoting the labeled and unlabeled batch sizes, respectively. The full set of configurations is shown in Table~\ref{tab:cifar10_batch_results}.

\begin{table}[h]
    \centering
    \label{tab:cifar10_batch_results_original}
    \simpletablelabel{tab:cifar10_batch_results}
    \vspace{2mm}
    \begin{tabular}{cccccc}
        \toprule
        \textbf{n} & \textbf{N} & \textbf{SGD} & \textbf{PP-SSL} & \textbf{Scalar FTL} & \textbf{Vector FTL} \\
        \midrule
        64  & 128  & $86.29 \pm 0.08$ & $86.38 \pm 0.10$
            & $\mathbf{86.59 \pm 0.05}$ & $85.78 \pm 0.47$ \\
        64  & 256  & $86.29 \pm 0.08$ & $86.44 \pm 0.05$
            & $\mathbf{86.63 \pm 0.08}$ & $86.57 \pm 0.04$ \\
        64  & 512  & $86.29 \pm 0.08$ & $86.43 \pm 0.05$
            & $\mathbf{86.63 \pm 0.06}$ & $86.48 \pm 0.17$ \\
        64  & 1024 & $86.29 \pm 0.08$ & $86.43 \pm 0.05$
            & $\mathbf{86.63 \pm 0.06}$ & $\mathbf{86.63 \pm 0.12}$ \\
        128 & 256  & $86.26 \pm 0.07$ & $86.39 \pm 0.05$
            & $\mathbf{86.47 \pm 0.06}$ & $85.97 \pm 0.20$ \\
        128 & 512  & $86.26 \pm 0.07$ & $86.40 \pm 0.04$
            & $\mathbf{86.53 \pm 0.06}$ & $86.32 \pm 0.09$ \\
        128 & 1024 & $86.26 \pm 0.07$ & $86.43 \pm 0.05$
            & $\mathbf{86.65 \pm 0.05}$ & $86.40 \pm 0.09$ \\
        256 & 512  & $86.43 \pm 0.07$ & $86.52 \pm 0.05$
            & $\mathbf{86.52 \pm 0.08}$ & $86.24 \pm 0.06$ \\
        256 & 1024 & $86.43 \pm 0.07$ & $86.55 \pm 0.05$
            & $\mathbf{86.64 \pm 0.05}$ & $86.25 \pm 0.13$ \\
        512 & 1024 & $86.08 \pm 0.07$ & $86.12 \pm 0.08$
            & $\mathbf{86.38 \pm 0.03}$ & $85.84 \pm 0.09$ \\
        \bottomrule
    \end{tabular}
    \caption{Test accuracy (\%) on CIFAR-10. For each method and batch
    configuration, the learning rate and training checkpoint are selected based on mean validation accuracy, and test accuracy is reported at the selected checkpoint. Results report the mean $\pm$ standard error over five random seeds. For each batch configuration, the highest test accuracy is shown in bold.}
\end{table}

\vspace{-4mm}

\paragraph{Results.}
Table~\ref{tab:cifar10_batch_results} reports the test accuracy obtained after selecting the learning rate and training checkpoint based on mean validation accuracy, while Figures~\ref{fig:cifar10_fo_acc} and \ref{fig:cifar10_fo_lambda} illustrate the corresponding test-accuracy and tuning-parameter trajectories for chosen batch-size configurations. Across the considered configurations, Scalar FTL provides the most consistent improvement, achieving the highest mean test accuracy in all configurations. PP-SSL also generally improves over the labeled-only SGD baseline, although by a smaller margin. In contrast, Vector FTL is more sensitive to the batch-size configuration: it performs comparably to Scalar FTL for some choices of $(n,N)$, but exhibits a noticeable degradation for others.

The tuning trajectories help explain these differences. The Scalar FTL tuning parameter decreases smoothly throughout training, while the PP-SSL tuning parameter rapidly approaches zero, thereby substantially reducing the contribution of the ML-labeled correction after the early stages of training. Although the mean of the Vector FTL parameters follows a similarly smooth decreasing trend, its coordinatewise parameters exhibit substantial dispersion, with some coordinates receiving large positive weights and others negative weights throughout training. This additional flexibility can be beneficial, but the pronounced coordinatewise variation is consistent with the greater variability of Vector FTL across batch-size configurations. Overall, these results suggest that the scalar parameter provides a favorable tradeoff between exploiting the ML-labeled data and maintaining stability across different labeled and unlabeled batch sizes.

\begin{figure}[h!]
    \centering
    \includegraphics[width=1.0\linewidth]{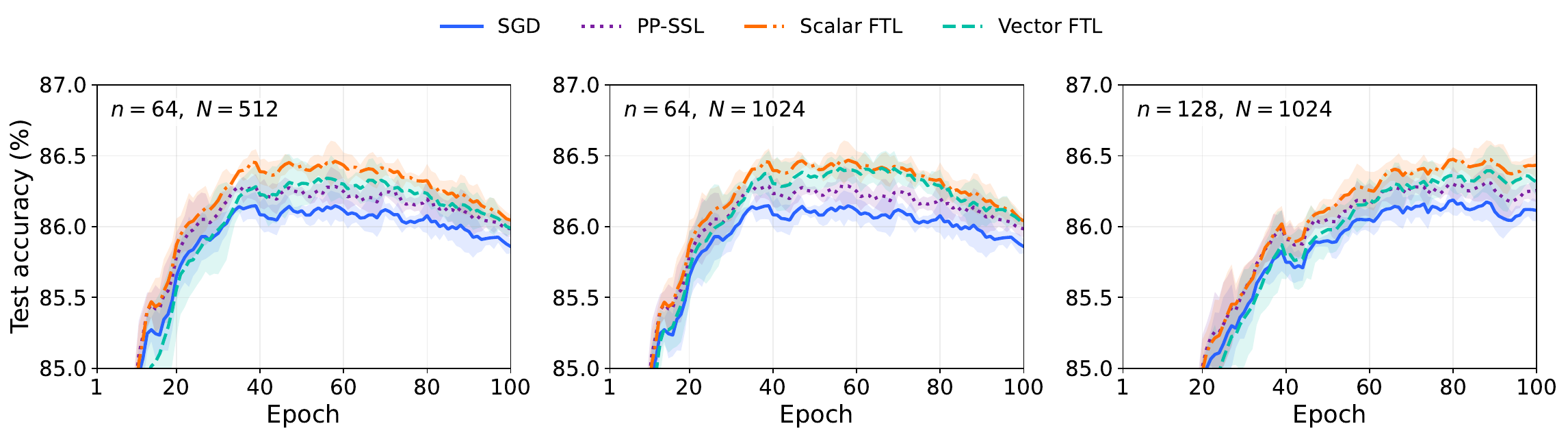}
    \vspace{-5mm}
    \caption{
    Test accuracy (\%) over training epochs for the first-order methods on CIFAR-10 with $n=64, N=512$ (left), $n=64, N=1024$ (middle), and $n=128, N=1024$ (right). Curves show the mean across five random seeds, with shaded regions indicating $\pm 1$ standard deviation. For visualization, test-accuracy curves are smoothed using a five-epoch moving average.
    }
    \label{fig:cifar10_fo_acc}
\end{figure}

\begin{figure}[h!]
    \centering
    \includegraphics[width=1.0\linewidth]{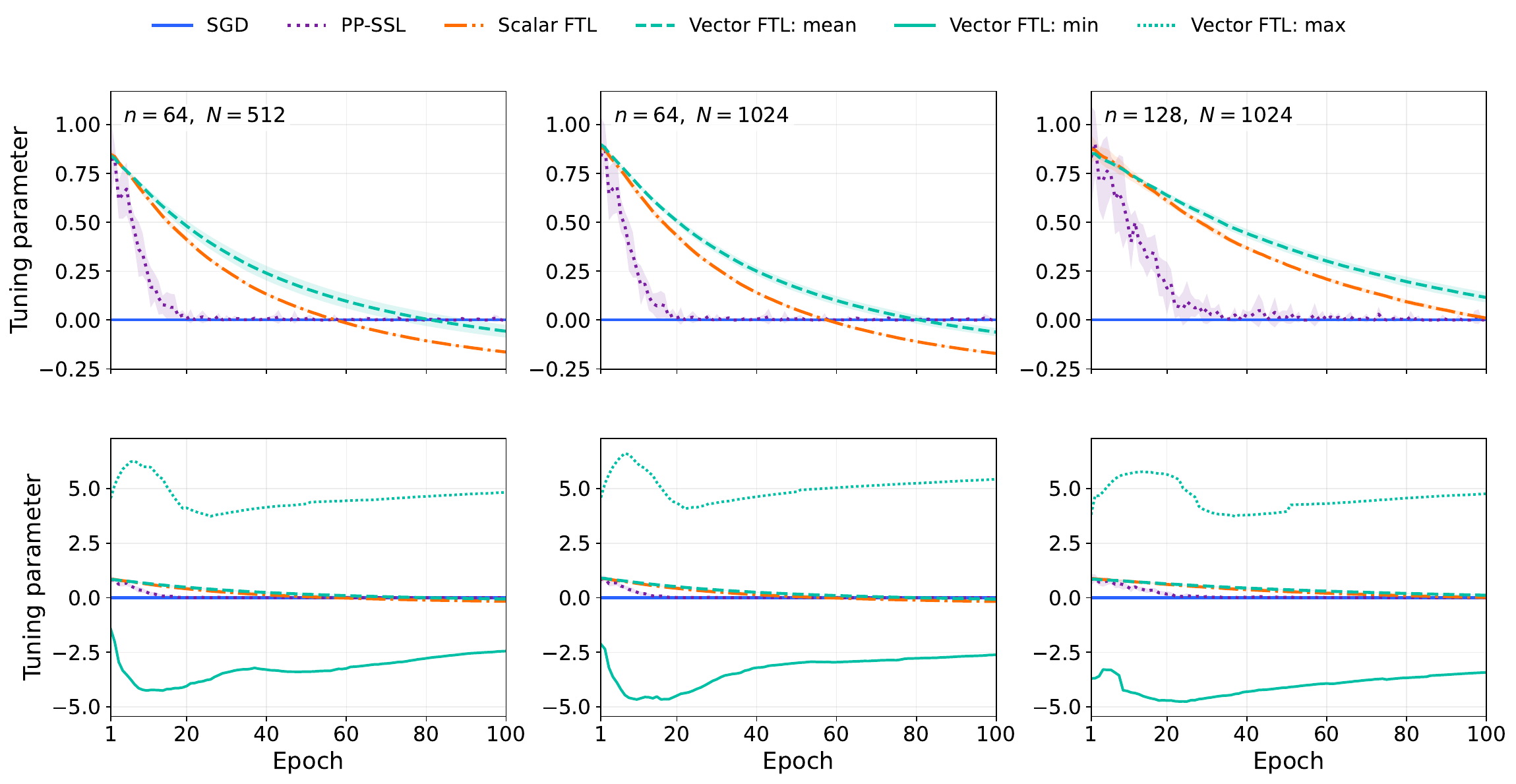}
    \caption{
    Tuning parameter trajectories for the first-order methods on CIFAR-10 with $n=64, N=512$ (left), $n=64, N=1024$ (middle), and $n=128, N=1024$ (right). 
    \textbf{Top row}: Tuning parameter trajectories over training epochs. 
    \textbf{Bottom row}: Tuning parameter trajectories together with the across-seed mean of the minimum and maximum coordinatewise Vector FTL parameters at each epoch. 
    Curves show the mean across five random seeds, with shaded regions indicating $\pm 1$ standard deviation.
    }
    \label{fig:cifar10_fo_lambda}
\end{figure}

\begin{table}[h]
\centering
\label{tab:cifar10_fo_selected_lr_original}
\simpletablelabel{tab:cifar10_fo_selected_lr}
\vspace{2mm}
\begin{tabular}{cccccc}
\toprule
\textbf{n} & \textbf{N} & \textbf{SGD} & \textbf{PP-SSL} & \textbf{Scalar FTL} & \textbf{Vector FTL} \\
\midrule
$64$  & $128$  & $5e-2$ & $5e-2$ & $5e-2$ & $5e-2$ \\
$64$  & $256$  & $5e-2$ & $5e-2$ & $5e-2$ & $5e-2$ \\
$64$  & $512$  & $5e-2$ & $5e-2$ & $5e-2$ & $5e-2$ \\
$64$  & $1024$ & $5e-2$ & $5e-2$ & $5e-2$ & $5e-2$ \\
$128$ & $256$  & $5e-2$ & $5e-2$ & $5e-2$ & $5e-2$ \\
$128$ & $512$  & $5e-2$ & $5e-2$ & $5e-2$ & $5e-2$ \\
$128$ & $1024$ & $5e-2$ & $5e-2$ & $5e-2$ & $5e-2$ \\
$256$ & $512$  & $1e-1$ & $1e-1$ & $1e-1$ & $1e-1$ \\
$256$ & $1024$ & $1e-1$ & $1e-1$ & $1e-1$ & $1e-1$ \\
$512$ & $1024$ & $1e-1$ & $1e-1$ & $1e-1$ & $1e-1$ \\
\bottomrule
\end{tabular}
\caption{Learning rates selected using mean validation accuracy for each
first-order method and batch-size configuration on CIFAR-10.}
\end{table}

\newpage

\subsubsection{Zeroth-order methods} \label{app:D.2.2}

For the zeroth-order experiments, we train each model for $2{,}000$ epochs and compare three methods: standard stochastic gradient-free method without using ML-labeled data and the proposed Algorithm~\ref{alg:sgfm} with scalar tuning and vector tuning, which we denote as ``ZO Scalar FTL" and ``ZO Vector FTL" respectively in the following.
We tune the learning rate independently for each method and batch-size configuration over $\{5e-4,\;1e-3,\;5e-3\}$ and use a constant learning rate throughout training. The selected learning rates are reported in Table~\ref{tab:cifar10_zo_selected_lr}. We set the perturbation parameter to $\mu = 1e-3$. To study the effect of labeled and unlabeled batch sizes, we consider $(n,N)\in\{(256,512),(256,1024),(512,1024)\}$, where $n$ and $N$ denote the labeled and unlabeled batch sizes, respectively.

\paragraph{Results.}
Figures~\ref{fig:cifar10_zo_acc} and \ref{fig:cifar10_zo_lam} show the test-accuracy results and corresponding tuning parameter trajectories for the zeroth-order methods on CIFAR-10. 
Similar to previous subsection, Algorithm~\ref{alg:sgfm} consistently improve over the ZO baseline across all three batch-size configurations, with ZO Scalar FTL and ZO Vector FTL achieving similar test accuracy throughout training. The improvement is sustained throughout the training, and the relative behavior is stable across the different choices of $(n,N)$. The corresponding tuning trajectories are also notably smooth: the ZO Scalar FTL parameter and the mean of the tuning parameters of ZO Vector FTL remain closely aligned and gradually decrease over training. For a fixed labeled batch size $n=256$, we observe both reduced variability in the tuning-parameter trajectories and larger tuning-parameter values as the unlabeled batch size $N$ increases from $512$ to $1024$. Although the coordinatewise ZO Vector FTL parameters initially exhibit a wider range, this dispersion is reduced substantially as the training progresses. Overall, in the zeroth-order setting, both methods appear to exploit the ML-labeled correction in a stable manner across batch-size configurations, with little difference in predictive performance between the scalar and vector schemes.

\begin{figure}[h!]
    \centering
    \includegraphics[width=1.0\linewidth]{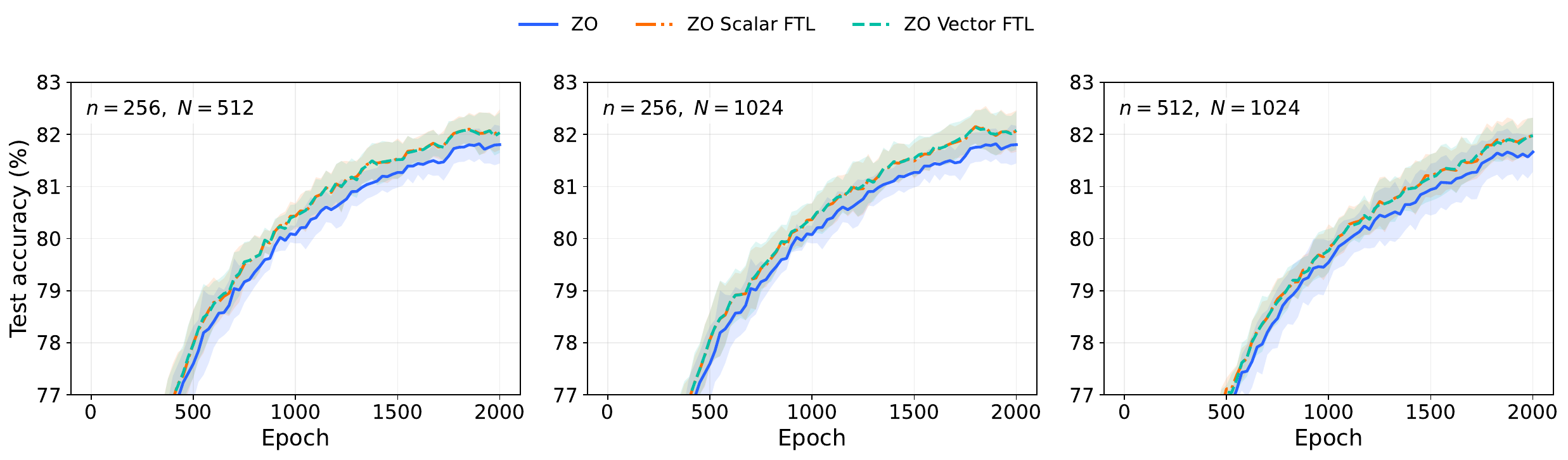}
    \vspace{-5mm}
    \caption{
    Test accuracy (\%) over training epochs for the zeroth-order methods on CIFAR-10 with $n=256, N=512$ (left), $n=256, N=1024$ (middle), and $n=512, N=1024$ (right). Values are reported every 25 epochs. Curves show the mean across five random seeds, with shaded regions indicating $\pm 1$ standard deviation.
    }
    \label{fig:cifar10_zo_acc}
\end{figure}

\begin{figure}[h!]
    \centering
    \includegraphics[width=1.0\linewidth]{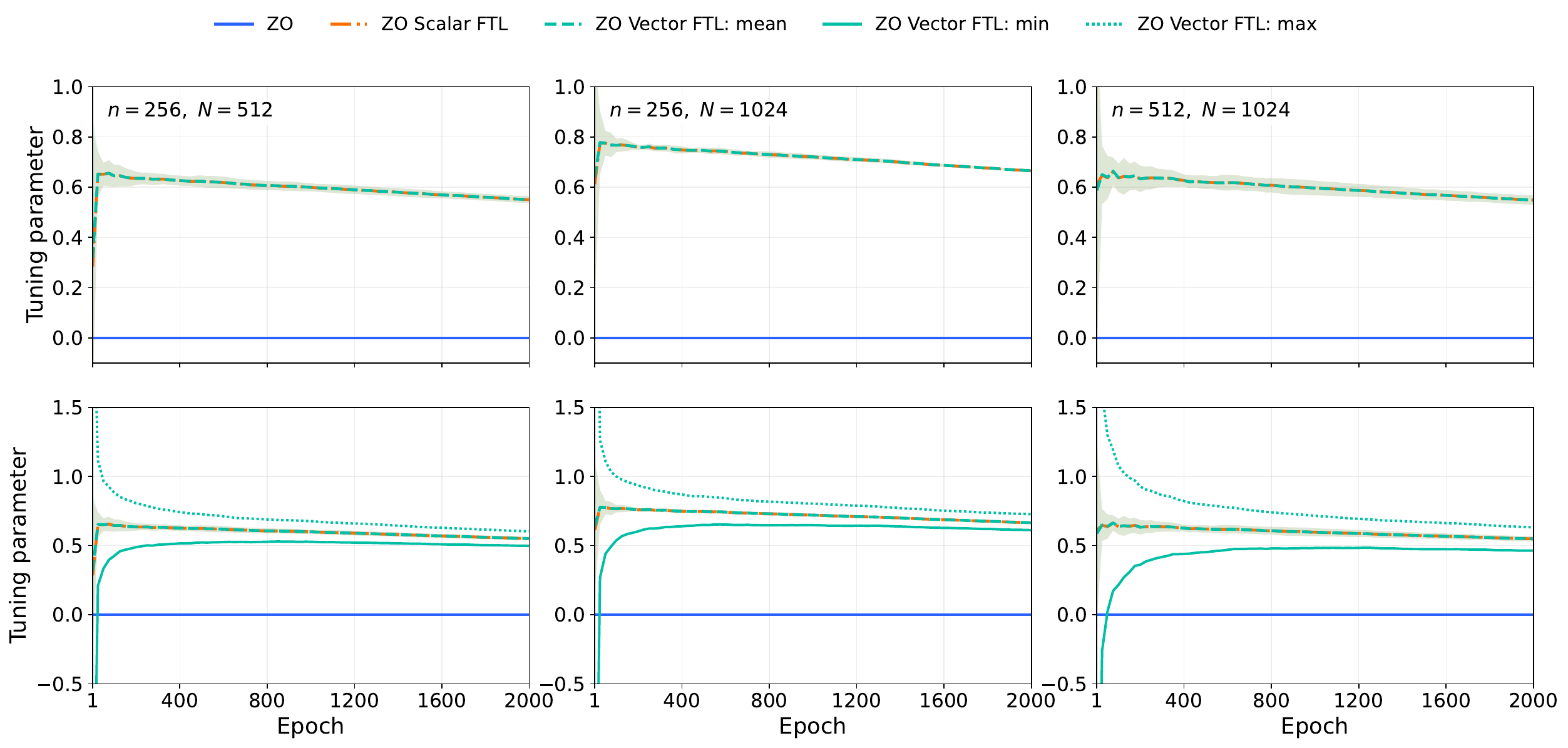}
    \vspace{-5mm}
    \caption{
    Tuning parameter trajectories for the zeroth-order methods on CIFAR-10 with $n=256, N=512$ (left), $n=256, N=1024$ (middle), and $n=512, N=1024$ (right).
    \textbf{Top row}: Tuning parameter trajectories over training epochs. 
    \textbf{Bottom row}: Tuning parameter trajectories together with the across-seed mean of the minimum and maximum coordinatewise ZO Vector FTL parameters at each epoch.
    Values are reported every 25 epochs.
    Curves show the mean across five random seeds, with shaded regions indicating $\pm 1$ standard deviation.
    }
    \label{fig:cifar10_zo_lam}
\end{figure}

\begin{table}[h]
\centering
\label{tab:cifar10_zo_selected_lr_original}
\simpletablelabel{tab:cifar10_zo_selected_lr}
\vspace{2mm}
\begin{tabular}{cccccc}
\toprule
\textbf{n} & \textbf{N} & \textbf{ZO} & \textbf{ZO Scalar FTL} & \textbf{ZO Vector FTL} \\
\midrule
$256$  & $512$  & $5e-4$ & $5e-4$ & $5e-4$  \\
$256$  & $1024$  & $5e-4$ & $5e-4$ & $5e-4$  \\
$512$  & $1024$  & $1e-3$ & $1e-3$ & $1e-3$  \\
\bottomrule
\end{tabular}
\caption{Learning rates selected using mean validation accuracy for each
zeroth-order method and batch-size configuration on CIFAR-10.}
\end{table}

\newpage

\subsection{Linear Regression on California Housing} \label{app:D3}

\paragraph{Task and model.}
We conduct experiments on linear regression using the California Housing dataset from the LIBSVM repository \citep{chang2011libsvm}. The task is to predict the median house value of a California census block from eight numerical features. We use a linear regression model that maps the standardized eight-dimensional input vector to a single standardized target value. The same model architecture is used for the labeler and all models.

\paragraph{Dataset.}
The complete dataset contains $20{,}640$ examples. Because the dataset does not provide an official test split, we construct a fixed random split. We reserve $4{,}128$ examples, corresponding to $20\%$ of the dataset, for testing. From the remaining examples, we use $500$ examples to train the labeler, $80$ examples as the true-labeled training set, and $80$ examples as the validation set. The remaining $15{,}852$ examples are treated as unlabeled data. These subsets are mutually disjoint, and the same partition is used for all methods and random seeds.

\paragraph{ML-labeler.}
To construct the labeler model $f(\cdot)$, we train a linear regressor on the disjoint labeler-training set of $500$ examples. The labeler is trained for $10$ epochs using AdamW with a learning rate of $1e-2$, weight decay $1e-4$, and batch size $64$. The resulting labeler achieves a standardized test MSE of $0.4191$. After training, the labeler remains fixed and is shared across all experimental configurations.

\paragraph{Loss functions.}
For each labeled and unlabeled input, we pass the standardized feature vector through the fixed labeler and cache its scalar prediction. On labeled examples, the supervised loss is the mean squared error between the model's prediction and the ground-truth standardized target. The labeler-based losses on both labeled and unlabeled examples are mean squared errors between the model's prediction and the corresponding cached labeler prediction. Thus, the labeler-based terms use continuous soft regression targets. Because the labeler remains fixed, its predictions are computed once and reused throughout optimization.

\paragraph{Randomness.}
We report results across ten random seeds. The dataset partition and labeler model are held fixed across seeds. Randomness arises from the initialization of the linear model and the ordering and sampling of the labeled and unlabeled mini-batches. For zeroth-order methods, additional randomness arises from the Gaussian search directions $\zeta_t\sim\mathcal{N}(0,I_d)$.

\subsubsection{First-order methods}

For the first-order experiments, we train the linear regression model for $100$ optimization steps and compare four methods: standard SGD using only labeled data, PP-SSL with Online Tuning \citep{shoham2025prediction}, and SGD with scalar tuning and vector tuning (i.e., Algorithm~\ref{alg:sgd}). We refer to the latter two as Scalar FTL and Vector FTL respectively below.
We use a labeled batch size of $n=16$ for all methods and consider unlabeled batch sizes $N\in\{64,128,256\}$ for PP-SSL and our methods. Since SGD does not use unlabeled examples, its trajectory is shared across the three batch-size configurations. We use a constant learning rate throughout training and consider the grid $\{5e-3,1e-2,5e-2,1e-1\}$. Based on mean validation performance across seeds, we use a learning rate of $5e-2$ for all methods in the final comparison. The reported curves show the mean across ten random seeds, with shaded regions indicating $\pm1$ standard deviation.

\paragraph{Results.}
Figures~\ref{fig:cal_fo_acc} and \ref{fig:cal_fo_lam} show the test MSE and corresponding tuning-parameter trajectories for the first-order methods on California Housing. Across all three unlabeled batch sizes, the methods that incorporate ML-labeled data improve upon the labeled-only SGD baseline after the initial optimization steps. PP-SSL and Scalar FTL perform comparably across configurations and generally achieve the lowest test MSE, while Vector FTL approaches similar performance later in training but exhibits greater variability early in optimization. The relative behavior of the methods remains broadly consistent as the unlabeled batch size increases from $N=64$ to $N=256$. The tuning-parameter trajectories stabilize quickly after the first few steps. Compared to our methods, the PP-SSL tuning parameter exhibits more pronounced fluctuations, particularly for smaller $N$. The PP-SSL, Scalar FTL, and the mean of Vector FTL parameters remain in a similar positive range throughout training, although their relative magnitudes vary across configurations. We also observe a slightly reduced across-seed variability in the tuning parameters as $N$ increases, particularly after the first few oprimization steps. The coordinatewise Vector FTL parameters exhibit substantial dispersion, with some coordinates receiving negative weights and others considerably larger positive weights. This coordinatewise spread decreases over training but remains visible at the end of optimization, even as the mean of the Vector FTL tuning parameters stays close to the scalar tuning parameters.

\begin{figure}[h!]
    \centering
    \includegraphics[width=1.0\linewidth]{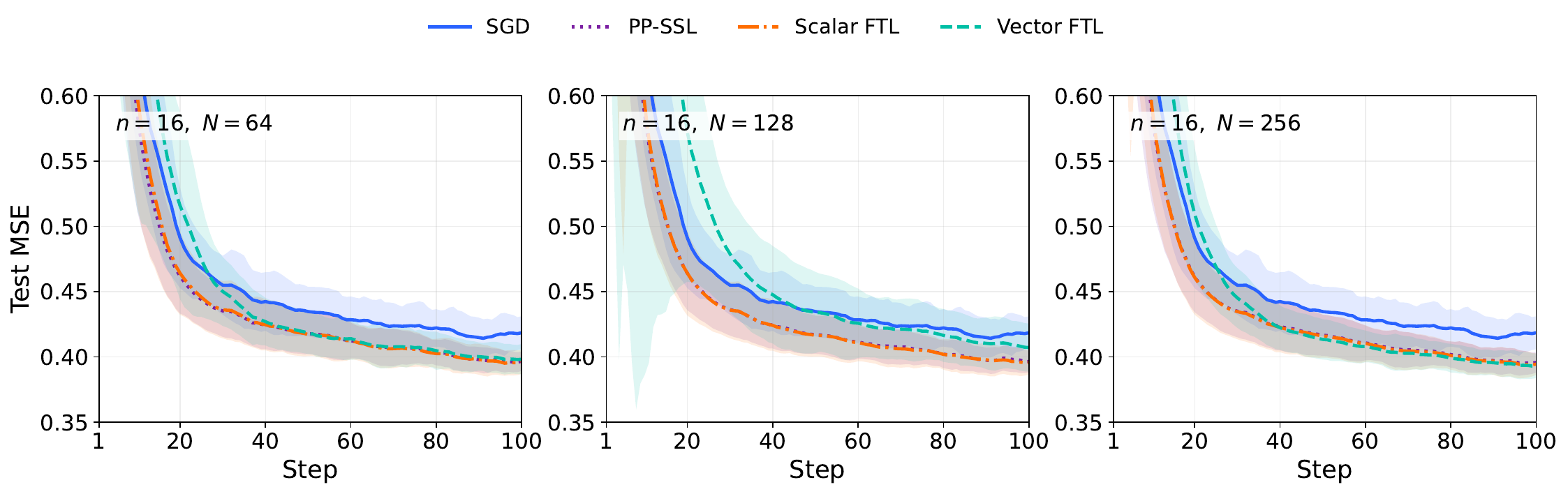}
    \vspace{-5mm}
    \caption{
    Test MSE over training steps for the first-order methods on California Housing with $n=16, N=64$ (left), $n=16, N=128$ (middle), and $n=16, N=256$ (right). Values are reported every 25 epochs. Curves show the mean across ten random seeds, with shaded regions indicating $\pm 1$ standard deviation. For visualization, test-MSE curves are smoothed using a five-step moving average.
    }
    \label{fig:cal_fo_acc}
\end{figure}

\begin{figure}[h!]
    \centering
    \includegraphics[width=1.0\linewidth]{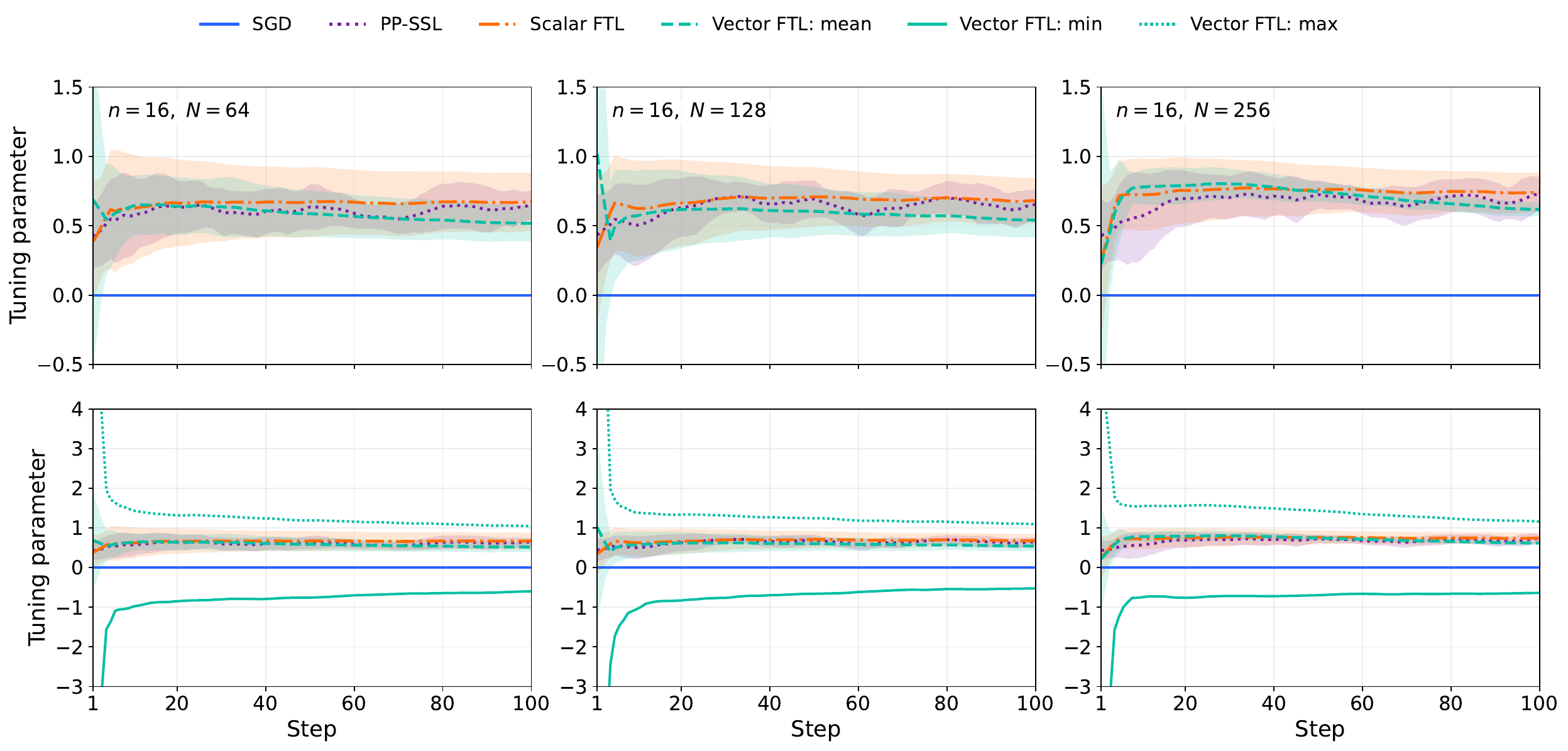}
    \vspace{-5mm}
    \caption{
    Tuning parameter trajectories for the first-order methods on California Housing with $n=16, N=64$ (left), $n=16, N=128$ (middle), and $n=16, N=256$ (right).
    \textbf{Top row}: Tuning parameter trajectories over training epochs. 
    \textbf{Bottom row}: Tuning parameter trajectories together with the across-seed mean of the minimum and maximum coordinatewise Vector FTL parameters at each epoch.
    Curves show the mean across ten random seeds, with shaded regions indicating $\pm 1$ standard deviation. For visualization, the tuning parameter curves are smoothed using a five-step moving average.
    }
    \label{fig:cal_fo_lam}
\end{figure}

\subsubsection{Zeroth-order methods}
For the zeroth-order experiments, we train the linear regression model for $1{,}000$ optimization steps and compare three methods: Standard stochastic gradient-free method without using ML-labeled data and Algorithm~\ref{alg:sgfm} with scalar tuning and vector tuning, denoted as ``ZO Scalar FTL" and ``ZO Vector FTL" respectively. We tune the learning rate independently for each method and batch-size configuration over $\{1e-3, 5e-3,\;1e-2\}$ and use a constant learning rate throughout training. Learning rates are selected based on the mean validation MSE across ten random seeds and are reported in Table~\ref{tab:cal_zo_selected_lr}. We set the perturbation parameter to $\mu=1e\text{-}3$ and sample a new Gaussian search direction at every optimization step. To study the effect of the unlabeled batch size, we fix the labeled batch size to $n=32$ and consider $(n,N)\in\{(32,128),(32,256),(32,512)\}$, where $N$ denotes the unlabeled batch size.

\paragraph{Results.}
Figures~\ref{fig:cal_zo_acc} and \ref{fig:cal_zo_lam} present the test MSE and corresponding tuning-parameter trajectories for the zeroth-order methods on California Housing. Across all three unlabeled batch sizes, the proposed Algorithm~\ref{alg:sgfm} improve over the true-labeled-only ZO baseline after the initial optimization period, with ZO Scalar FTL and ZO Vector FTL achieving similar test MSE throughout most of training. The improvement becomes particularly clear later in optimization and remains consistent as the unlabeled batch size increases from $N=128$ to $N=512$. The test-MSE trajectories also become somewhat less variable for larger $N$ for the proposed methods. The tuning parameters stabilize rapidly after the initial optimization steps. The ZO Scalar FTL parameter and the mean of the ZO Vector FTL parameters remain closely aligned throughout training. For fixed $n$, the across-seed variability of these tuning parameters also tends to decrease as $N$ increases. The coordinatewise ZO Vector FTL parameters initially exhibit a substantially wider range, but this dispersion is reduced quickly and continues to decrease throughout training. Despite this coordinatewise flexibility, the close agreement between the mean vector and scalar tuning parameters is consistent with the similar predictive performance.

\begin{figure}[h!]
    \centering
    \includegraphics[width=1.0\linewidth]{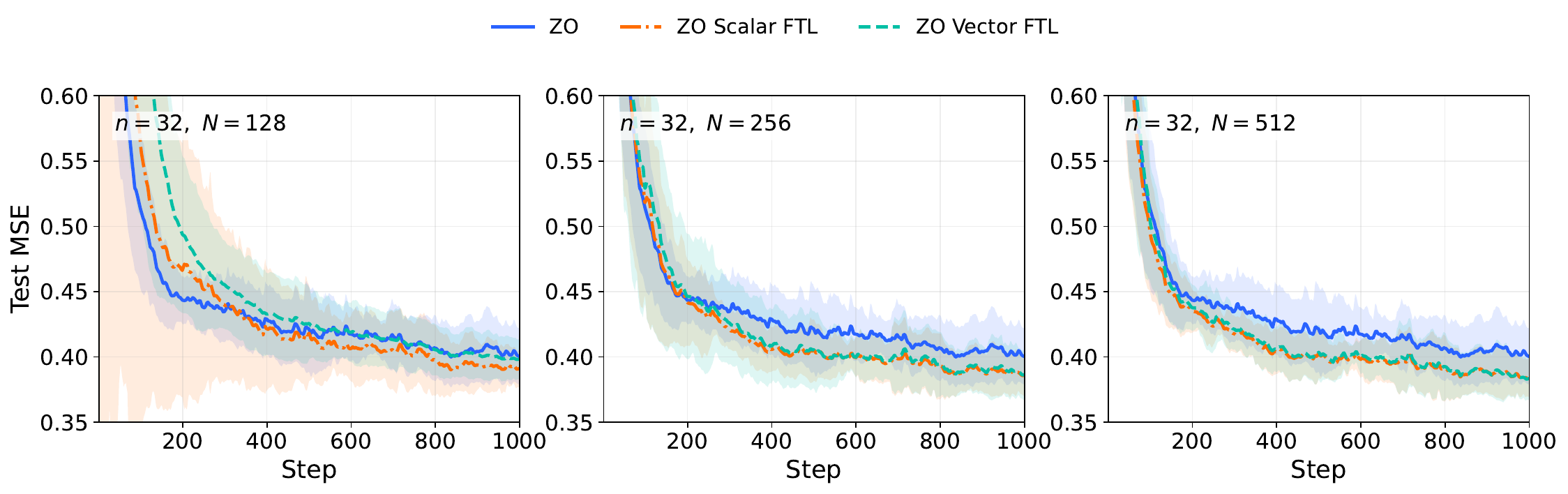}
    \vspace{-5mm}
    \caption{
    Test MSE over training steps for the zeroth-order methods on California Housing with $n=32, N=128$ (left), $n=32, N=256$ (middle), and $n=32, N=512$ (right). Values are reported every 25 epochs. Curves show the mean across ten random seeds, with shaded regions indicating $\pm 1$ standard deviation. For visualization, test-MSE curves are smoothed using a five-step moving average.
    }
    \label{fig:cal_zo_acc}
\end{figure}

\begin{figure}[h!]
    \centering
    \includegraphics[width=1.0\linewidth]{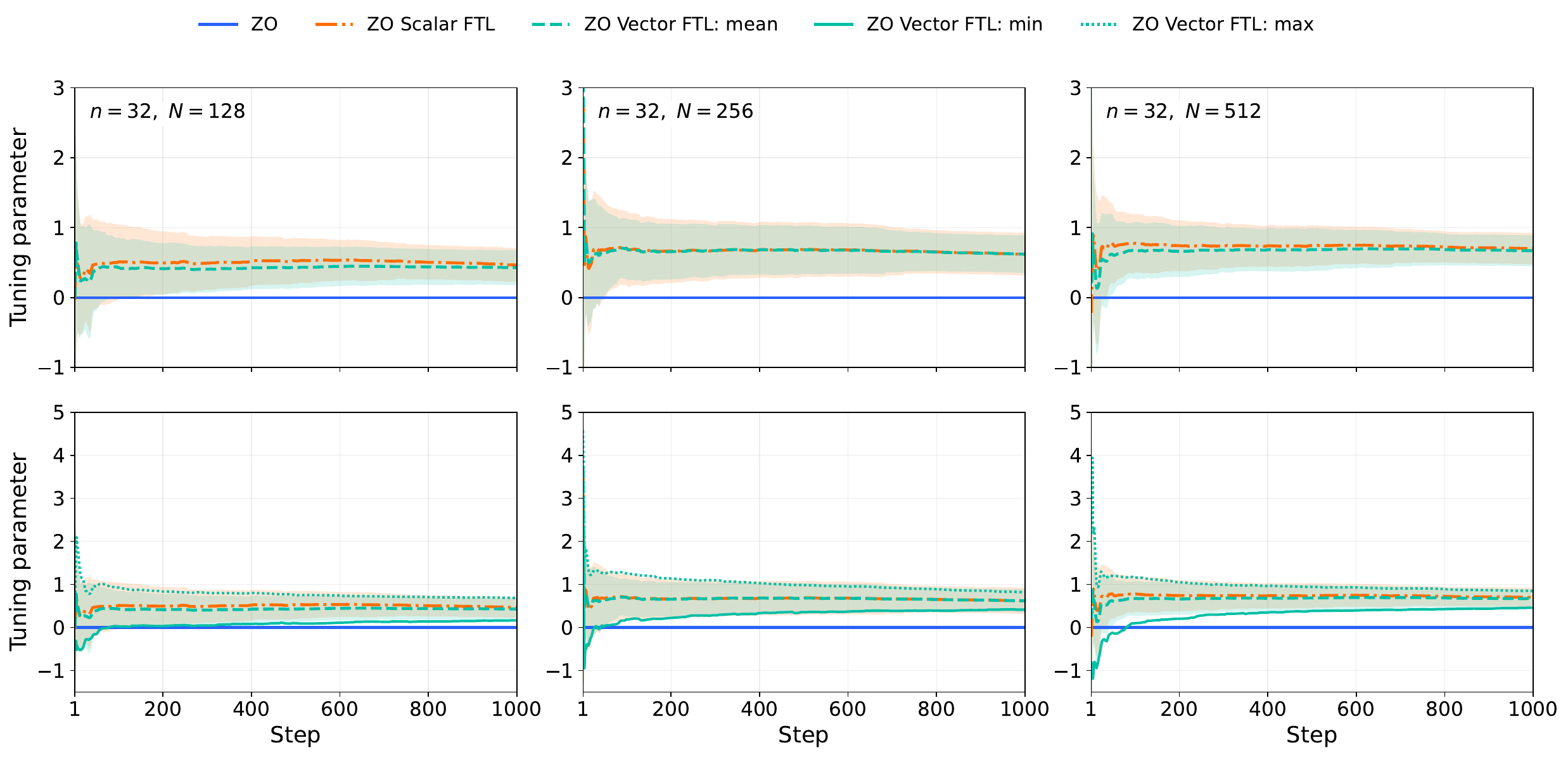}
    \vspace{-5mm}
    \caption{
    Tuning parameter trajectories for the first-order methods on California Housing with $n=32, N=128$ (left), $n=32, N=256$ (middle), and $n=32, N=512$ (right).
    \textbf{Top row}: Tuning parameter trajectories over training epochs. 
    \textbf{Bottom row}: Tuning parameter trajectories together with the across-seed mean of the minimum and maximum coordinatewise Vector FTL parameters at each epoch.
    Curves show the mean across ten random seeds, with shaded regions indicating $\pm 1$ standard deviation. For visualization, the tuning parameter curves are smoothed using a five-step moving average.
    }
    \label{fig:cal_zo_lam}
\end{figure}

\begin{table}[h!]
\centering
\label{tab:cal_zo_selected_lr_original}
\simpletablelabel{tab:cal_zo_selected_lr}
\vspace{2mm}
\begin{tabular}{cccccc}
\toprule
\textbf{n} & \textbf{N} & \textbf{ZO} & \textbf{ZO Scalar FTL} & \textbf{ZO Vector FTL} \\
\midrule
$32$  & $128$  & $1e-2$ & $1e-2$ & $5e-3$  \\
$32$  & $256$  & $1e-2$ & $1e-2$ & $1e-2$  \\
$32$  & $1024$  & $1e-2$ & $1e-2$ & $1e-2$  \\
\bottomrule
\end{tabular}
\caption{Learning rates selected using mean validation accuracy for each
zeroth-order method and batch-size configuration on California Housing.}
\end{table}

\newpage

\subsection{Classification on Satimage} \label{app:D4}

\paragraph{Task and model.}
We conduct experiments on multiclass classification using the Statlog (Landsat Satellite) dataset, distributed as \texttt{satimage.scale} in the LIBSVM repository \citep{chang2011libsvm}. The task is to classify the soil type associated with the center pixel of a satellite-image neighborhood. Each example contains $36$ numerical features, corresponding to four spectral measurements for each pixel in a $3\times3$ neighborhood, and belongs to one of six classes. We train a linear classifier that maps the $36$-dimensional input vector to six output logits. The dataset features are taken from the scaled version provided by LIBSVM.

\paragraph{Dataset.}
The dataset provides an official training split containing $4{,}435$ examples and an official test split containing $2{,}000$ examples. We partition the official training split into four mutually disjoint subsets. We use $1{,}200$ examples to train the labeler, $100$ examples as the true-labeled training set, and $100$ examples as the validation set. The remaining $3{,}035$ examples are treated as unlabeled data. The split approximately preserves the original class proportions in each subset. The official test split is used for final evaluation. The partition is generated once and is shared across all methods and training seeds.

\paragraph{ML-labeler.}
To construct the labeler model $f(\cdot)$, we train a two-layer multilayer perceptron on the disjoint labeler-training set of $1{,}200$ examples. The labeler consists of a linear layer mapping the $36$ input features to $128$ hidden units, followed by a ReLU activation and a linear output layer with six logits. We train the labeler for $2{,}000$ optimization steps using AdamW with a learning rate of $1e\text{-}3$, weight decay $1e\text{-}4$, and batch size $128$. The labeler is trained using the standard cross-entropy loss and achieves a test accuracy of $87.55\%$. After training, the labeler remains fixed and is shared across all experimental configurations.

\paragraph{Loss functions.}
For every labeled and unlabeled input, we pass the feature vector through the fixed labeler and apply a softmax to its logits to obtain a probability distribution over the six classes. These labeler distributions are cached before training. On labeled examples, the supervised loss is the standard cross-entropy loss between the prediction of the model being trained and the ground-truth class label. The labeler-based losses on both labeled and unlabeled examples use the KL divergence between the cached labeler distribution and the predictive distribution from the model being trained. Because the labeler remains fixed, its predictions are computed once and reused throughout optimization.

\paragraph{Randomness.}
We report results across ten random seeds. Randomness arises from the initialization of the linear classifier and the sampling of labeled and unlabeled mini-batches. For the zeroth-order methods, additional randomness arises from the Gaussian search directions $\zeta_t\sim\mathcal{N}(0,I_d)$.

\subsubsection{First-order methods}
For the first-order experiments, we train the linear classifier for $500$ optimization steps and compare four methods: standard SGD using only labeled data, PP-SSL with Online Tuning \citep{shoham2025prediction}, and our proposed methods with scalar tuning and vector tuning (i.e., Algorithm~\ref{alg:sgd}). We use a labeled batch size of $n=16$ for all methods and consider unlabeled batch sizes $N\in\{64,128,256\}$ for PP-SSL and our methods. We use a constant learning rate throughout training and independently tune it for each method and batch-size configuration over $\{5e-3,1e-2,5e-2,1e-1\}$. Based on the mean validation accuracy across the initial seeds, a learning rate of $1e-1$ is selected for all methods and configurations in the final comparison. 

\paragraph{Results.}
Figures~\ref{fig:sat_fo_acc} and \ref{fig:sat_fo_lam} present the test-accuracy results and corresponding tuning-parameter trajectories for the first-order methods on Satimage. Across all three unlabeled batch sizes, PP-SSL, Scalar FTL, and Vector FTL consistently outperform the labeled-only SGD baseline after the initial optimization period. The three methods that leverage ML-labeled data achieve similar test accuracy overall. The performance of Vector FTL improves as the unlabeled batch size increases, with larger $N$ yielding higher test accuracy in the later stages of training. The tuning-parameter trajectories stabilize rapidly after the first few optimization steps. The Scalar FTL and mean of Vector FTL parameters remain close to one another throughout training and are relatively smooth, whereas the PP-SSL tuning parameter exhibits somewhat larger fluctuations. The coordinatewise Vector FTL parameters show a great dispersion, with some coordinates initially receiving negative weights and others much larger positive weights. This dispersion reduces sharply early in training and then remains relatively stable, while the mean of the Vector FTL parameters stays close to the scalar tuning parameter.

\begin{figure}[h!]
    \centering
    \includegraphics[width=0.99\linewidth]{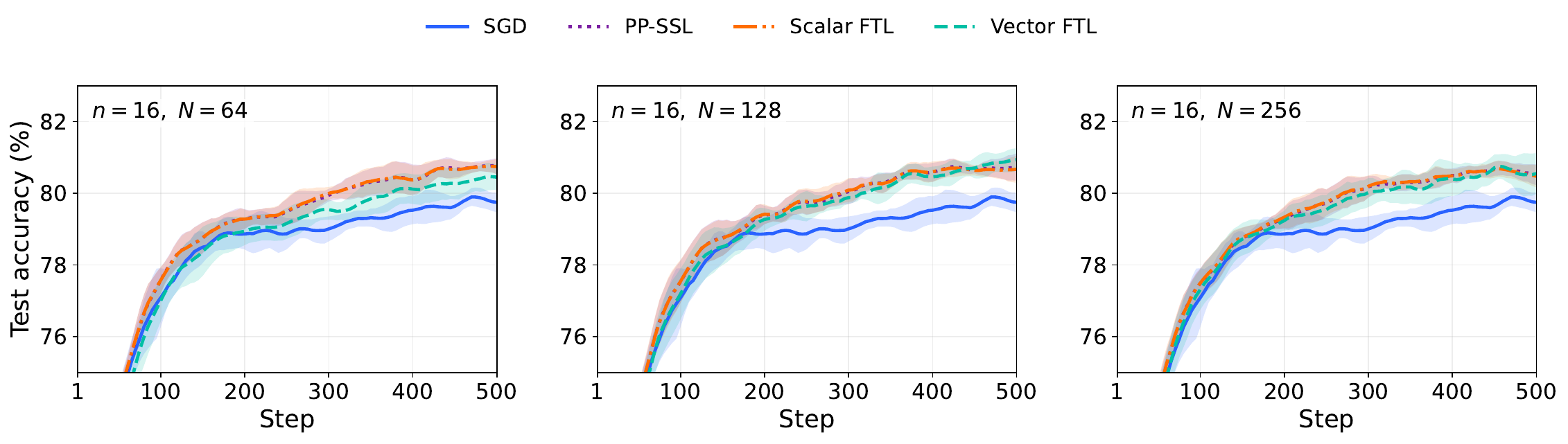}
    \vspace{-3mm}
    \caption{
    Test Accuracy (\%) over training steps for the first-order methods on Satimage with $n=16, N=64$ (left), $n=16, N=128$ (middle), and $n=16, N=256$ (right). Values are reported every 25 epochs. Curves show the mean across ten random seeds, with shaded regions indicating $\pm 1$ standard deviation. For visualization, test-MSE curves are smoothed using a five-step moving average.
    }
    \label{fig:sat_fo_acc}
\end{figure}

\begin{figure}[h!]
    \centering
    \includegraphics[width=0.99\linewidth]{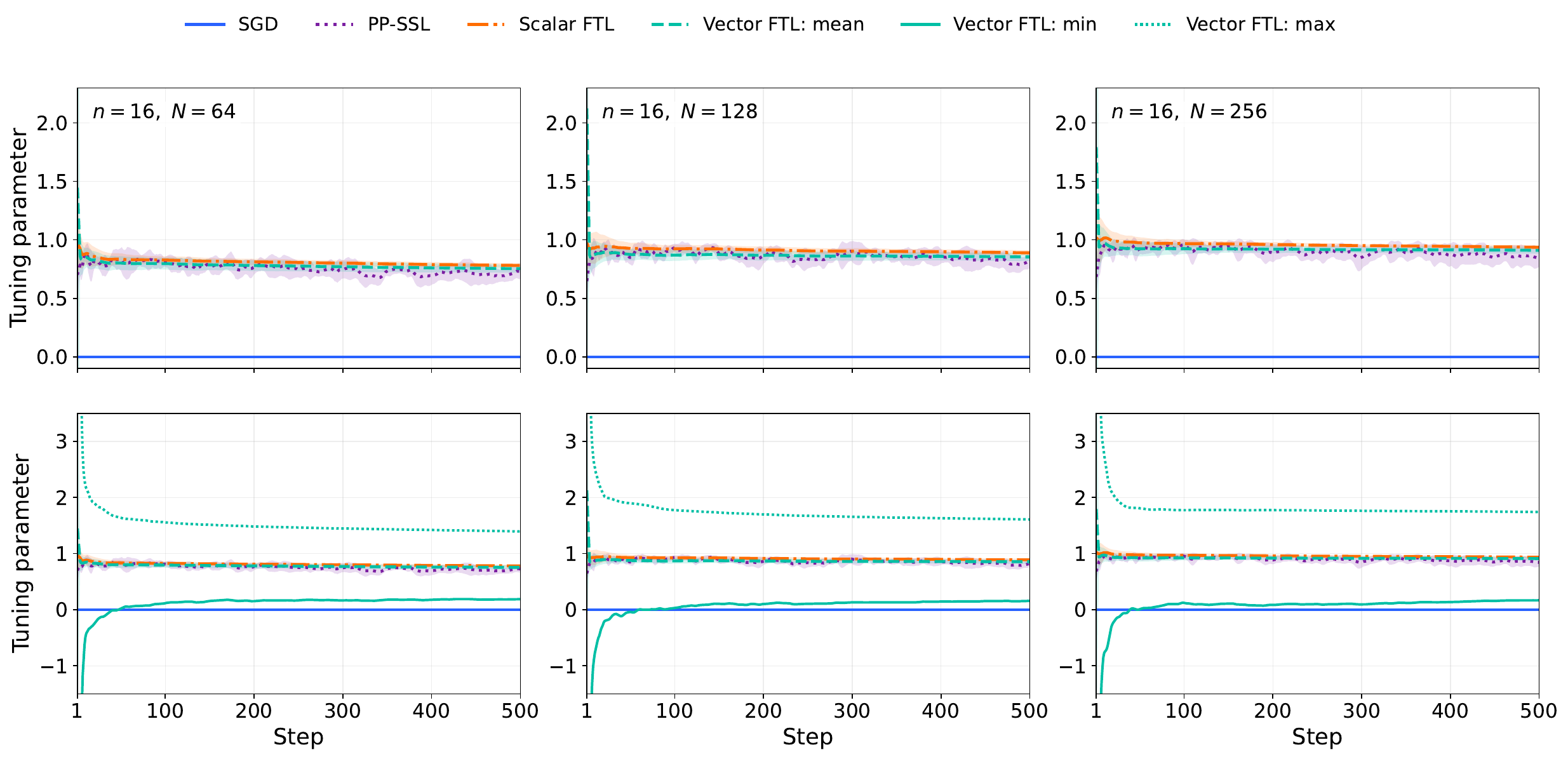}
    \vspace{-3mm}
    \caption{
    Tuning parameter trajectories for the first-order methods on Satimage with $n=16, N=64$ (left), $n=16, N=128$ (middle), and $n=16, N=256$ (right).
    \textbf{Top row}: Tuning parameter trajectories over training epochs. 
    \textbf{Bottom row}: Tuning parameter trajectories together with the across-seed mean of the minimum and maximum coordinatewise Vector FTL parameters at each epoch.
    Curves show the mean across ten random seeds, with shaded regions indicating $\pm 1$ standard deviation. For visualization, the tuning parameter curves are smoothed using a five-step moving average.
    }
    \label{fig:sat_fo_lam}
\end{figure}

\subsubsection{Zeroth-order methods}

For the zeroth-order experiments, we train the linear classifier for $5{,}000$ optimization steps and compare three methods: standard stochastic zeroth-order method without using ML-labeled data and our proposed Algorithm~\ref{alg:sgfm} with scalar tuning and vector tuning. We use a labeled batch size of $n=32$ for all methods and consider unlabeled batch sizes $N\in\{128,256,512\}$ for our methods that use ML-labeled data. We set the perturbation parameter to $\mu=1e\text{-}3$. We use a constant learning rate throughout training and independently tune it for each method and batch-size configuration over $\{1e-3,5e-3,1e-2\}$. Based on mean validation accuracy across seeds, a learning rate of $1e\text{-}2$ is selected for all methods and configurations in the final comparison.

\paragraph{Results.}
Figures~\ref{fig:sat_zo_acc} and \ref{fig:sat_zo_lam} present the test-accuracy results and corresponding tuning-parameter trajectories for the zeroth-order methods on Satimage. Across all three unlabeled batch sizes, Algorithm~\ref{alg:sgfm} with either of the tuning schemes consistently outperform the ZO baseline without using ML-labeled data after the initial stages of training. ZO Scalar FTL and ZO Vector FTL achieve nearly identical test accuracy throughout training, and their improvement over the baseline is sustained through the later optimization steps. This behavior remains consistent as the unlabeled batch size increases from $N=128$ to $N=512$. The corresponding tuning trajectories stabilize rapidly after the initial optimization steps, with the ZO Scalar FTL parameter and the mean of ZO Vector FTL parameter remaining closely aligned throughout training. As $N$ increases for fixed $n$, the tuning parameters tend to stabilize at larger values, indicating that the methods place greater weight on the ML-labeled correction when it is estimated from a larger unlabeled batch. The coordinatewise ZO Vector FTL parameters initially span a substantially wider range, but this dispersion reduces quickly and continues to decrease over training. Despite this additional coordinatewise flexibility, the close agreement between the scalar and the mean of vector tuning parameters is consistent with the nearly identical predictive performance of the two methods

\begin{figure}[h]
    \centering
    \includegraphics[width=1.0\linewidth]{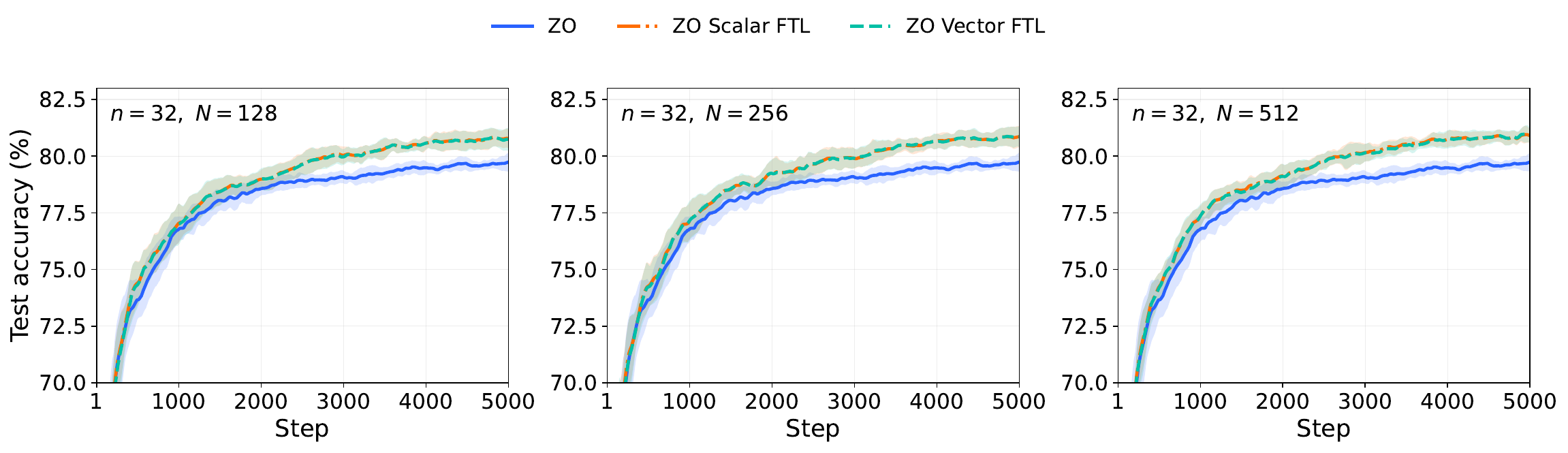}
    \vspace{-5mm}
    \caption{
    Test Accuracy (\%) over training steps for the zeroth-order methods on Satimage with $n=32, N=128$ (left), $n=16, N=256$ (middle), and $n=16, N=512$ (right). Values are reported every 25 epochs. Curves show the mean across ten random seeds, with shaded regions indicating $\pm 1$ standard deviation. For visualization, test-MSE curves are smoothed using a ten-step moving average.
    }
    \label{fig:sat_zo_acc}
\end{figure}

\begin{figure}[h]
    \centering
    \includegraphics[width=1.0\linewidth]{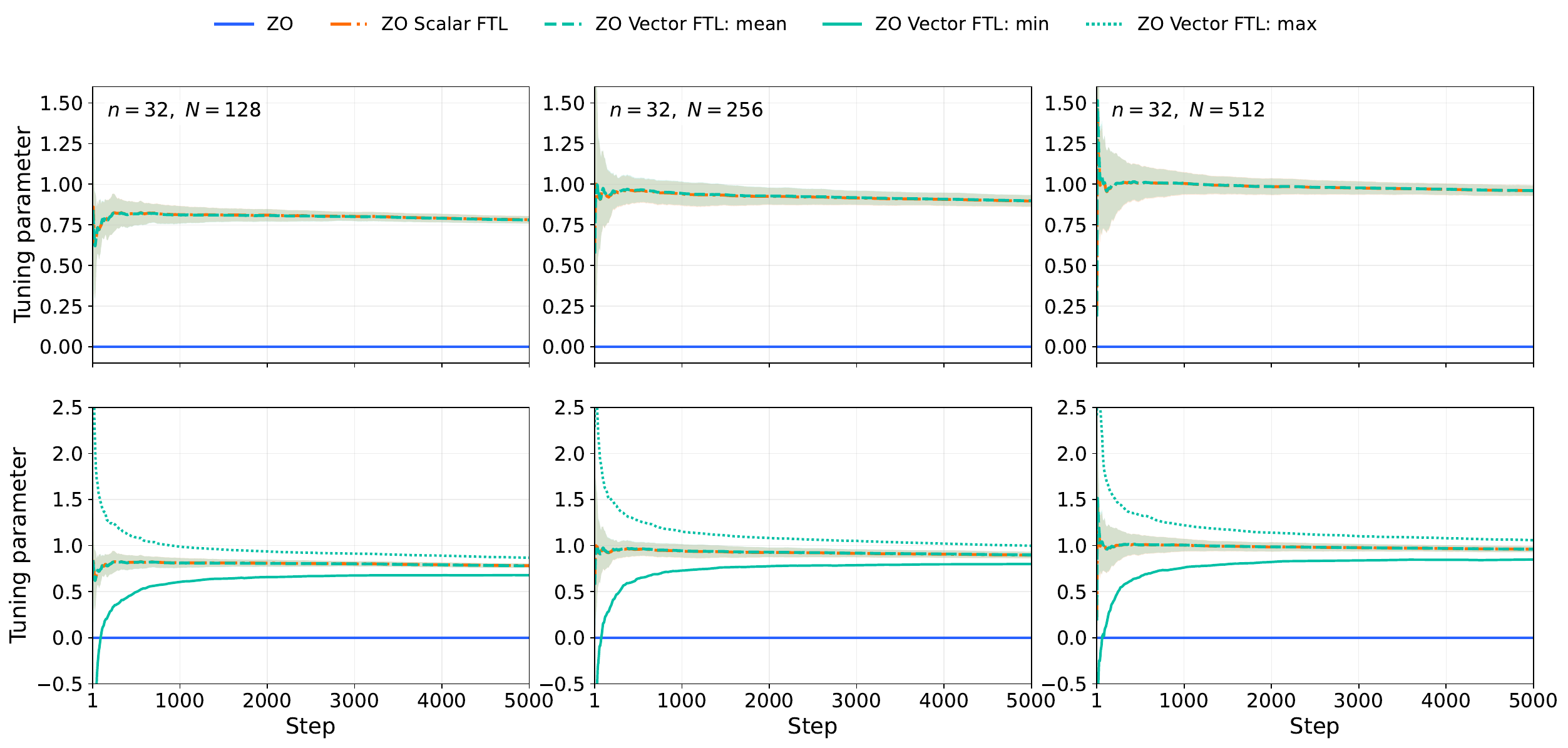}
    \vspace{-5mm}
    \caption{
    Tuning parameter trajectories for the first-order methods on Satimage with $n=32, N=128$ (left), $n=16, N=256$ (middle), and $n=16, N=512$ (right).
    \textbf{Top row}: Tuning parameter trajectories over training epochs. 
    \textbf{Bottom row}: Tuning parameter trajectories together with the across-seed mean of the minimum and maximum coordinatewise ZO Vector FTL parameters at each epoch.
    Curves show the mean across ten random seeds, with shaded regions indicating $\pm 1$ standard deviation. For visualization, the tuning parameter curves are smoothed using a ten-step moving average.
    }
    \label{fig:sat_zo_lam}
\end{figure}

\end{document}